\newtheorem{Def}{Definition}[section]
\newtheorem{Thm}{Theorem}[section]
\newtheorem{Lem}{Lemma}[section]
\newtheorem{Cor}{Corollary}[section]
\newtheorem{Clm}{Claim}[section]
\newtheorem{Prp}{Proposition}[section]
\newtheorem{Asu}{Assumption}[section]
\DeclareMathOperator*{\argmin}{arg\,min}
\newcommand{\BigO}{\mathcal O}
\newcommand{\poly}{\textrm{poly}}
\newcommand{\Breg}{B}
\newcommand{\diag}{\operatorname{diag}}
\newcommand{\OMD}{\texttt{OMD}}
\newcommand{\Var}{\operatorname{Var}}
\newcommand{\A}{\mathcal A}
\newcommand{\C}{\mathcal C}
\newcommand{\E}{\mathbb E}
\newcommand{\K}{\mathcal K}
\newcommand{\R}{\mathbb R}
\let\svsqrt\sqrt
\newsavebox\Nsqrt
\def\sr#1{\ThisStyle{%
		\savebox\Nsqrt{\scalebox{.5}[1]{$\SavedStyle\svsqrt{\phantom{\cramped{#1#1}}}$}}%
		\ooalign{\usebox{\Nsqrt}\cr\kern.2pt\usebox{\Nsqrt}\cr\hfil$\SavedStyle\cramped{#1}$}}}
\def\*#1{\mathbf{#1}}
\title{Meta-Learning Adversarial Bandit Algorithms}
\author{%
  Mikhail Khodak$^\ast$\thanks{$^\ast$ Denotes equal contribution.} \\
  Carnegie Mellon University \\
  \texttt{khodak@cmu.edu} \And
  Ilya Osadchiy$^\ast$ \\
  Technion - Israel Institute of Technology \\
  \texttt{osadchiy.ilya@gmail.com} \And
  Keegan Harris \\
  CMU \And
  Maria-Florina Balcan \\
  CMU \And
  Kfir Y. Levy \\
  Technion \And
  Ron Meir \\
  Technion \And
  Zhiwei Steven Wu \\
  CMU
}
\begin{document}

\maketitle

\begin{abstract}%
  We study online meta-learning with bandit feedback, with the goal of improving performance across multiple tasks if they are similar according to some natural similarity measure.
  As the first to target the adversarial online-within-online partial-information setting, we design meta-algorithms that combine outer learners to simultaneously tune the initialization and other hyperparameters of an inner learner for two important cases:
  multi-armed bandits (MAB) and bandit linear optimization (BLO).
  For MAB, the meta-learners initialize and set hyperparameters of the Tsallis-entropy generalization of Exp3, with the task-averaged regret improving if the entropy of the optima-in-hindsight is small.
  For BLO, we learn to initialize and tune online mirror descent (OMD) with self-concordant barrier regularizers, showing that task-averaged regret varies directly with an action space-dependent measure they induce.
  Our guarantees rely on proving that unregularized follow-the-leader combined with two levels of low-dimensional hyperparameter tuning is enough to learn a sequence of affine functions of non-Lipschitz and sometimes non-convex Bregman divergences bounding the regret of OMD.\looseness-1
\end{abstract}


\vspace{-1mm}
\section{Introduction}\label{sec:intro}
\vspace{-1mm}

Learning-to-learn~\citep{thrun1998ltl} is an important area of research that studies how to improve the performance of a learning algorithm by {\em meta-learning} its parameters---e.g. initializations, step-sizes, and/or representations---across many similar tasks.
The goal is to encode information from previous tasks in order to achieve better performance on future ones.
Meta-learning has seen a great deal of experimental work~\citep{finn2017maml,snell2017protonets}, practical impact~\citep{duan2017imitation,jiang2019fedmeta}, and theoretical effort~\citep{balcan2015lifelong,denevi2018commonmean,fallah2020meta,saunshi2020meta,du2021fewshot}.
One important setting is online-within-online meta-learning~\citep{denevi2019meta,khodak2019adaptive}, where the learner performs a sequence of tasks, each of which has a sequence of rounds.
Past work has studied the {\em full-information} setting, where the loss for every arm is revealed after each round.
This assumption is not realistic in many applications, e.g. recommender systems and experimental design, where often partial or {\em bandit} feedback---only the loss of the action taken---is revealed.
Such feedback can be {\em stochastic}, e.g. the losses are i.i.d. from some distribution, or {\em adversarial}, i.e. chosen by an adversary.
We establish the first formal guarantees for online-within-online meta-learning 
with adversarial bandit feedback.

As with past full-information meta-learning results, our goal when faced with a sequence of bandit tasks will be to achieve low regret \emph{on average} across them.
Specifically, our task-averaged regret should (a)~be no worse than that of algorithms for the single-task setting, e.g. if the tasks are not very similar, and should (b)~be much better on tasks that are closely related, e.g. if the same small set of arms do well on all of them.
We show that a natural way to achieve both is to initialize and tune online mirror descent~(OMD), an algorithm associted with a strictly convex regularizer whose hyperparameters have a significant impact on performance.
Our approach works because it can learn the best hyperparameters in hindsight across tasks, which will recover OMD's worst-case optimal performance if the tasks are dissimilar but will take advantage of more optimistic settings if they are related.
As generalized distances, the regularizers also induce interpretable measures of similarity between tasks.\looseness-1

\vspace{-1mm}
\subsection{Main contributions}
\vspace{-1mm}

We design a meta-algorithm (Algorithm~\ref{alg:combined}) for learning variants of OMD---specifically those with entropic or self-concordant regularizers---that are used for adversarial bandits.
This meta-algorithm combines three {\em full-information} algorithms---follow-the-leader (FTL), exponentially weighted online optimization (EWOO), and multiplicative weights (MW)---to set the initialization, step-size, and regularizer-specific parameters, respectively.
It works by optimizing a sequence of functions that each {\em upper-bound} the regret of OMD on a single task (Theorem~\ref{thm:structural-main}), resulting in (a) interesting notions of task-similarity because these functions depend on generalized notions of distances (Bregman divergences) and (b) adaptivity, i.e not needing to know how similar the tasks are beforehand.\looseness-1

Our first application is to OMD with the Tsallis regularizer~\citep{abernethy2015fighting}, a relative of Exp3~\citep{auer2002exp3} that is optimal for adversarial MAB.
We bound the task-averaged regret by the Tsallis entropy of the {\em estimated} optima-in-hindsight (Corollary~\ref{cor:mab}), which we further extend to that of the {\em true} optima by assuming a gap between the best and second-best arms (Corollary~\ref{cor:guaranteed-main}).
Both results are the first known consequences of the online learnability of Bregman divergences that are {\em non-convex} in their second arguments~\citep{khodak2019adaptive}, while the latter is obtained by showing that the loss estimators of a modified algorithm identify the optimal arm w.h.p. 
As an example, our average $m$-round regret across $T$ tasks under the gap assumption is\looseness-1
\begin{equation}\label{eq:result}
	o_T(\poly(m))+2\min_{\beta\in(0,1]}\sqrt{H_\beta d^\beta m/\beta}+o(\sqrt m)
\end{equation}
where $d$ is the number of actions and $H_\beta$ is the Tsallis entropy~\citep{tsallis1988possible,abernethy2015fighting} of the distribution of the optimal actions ($\beta=1$ recovers the Shannon entropy).\footnote{We use $\BigO_n(\cdot)$ (and $o_n(\cdot)$) to denote terms with constant (and sub-constant) dependence on $n$.}
This entropy is low if all tasks are usually solved by the same few arms, making it a natural task-similarity notion.
For example, if only $s\ll d$ of the arms are ever optimal then $H_\beta=\BigO(s)$, so using $\beta=1/\log d$ in \eqref{eq:result} yields an asymptotic task-averaged regret of $\BigO(\sqrt{sm\log d})$, dropping fast terms.
For $s=\BigO_d(1)$ this beats the minimax optimal rate of $\Theta(\sqrt{dm})$~\citep{audibert2011minimax}.
On the other hand, since $H_{1/2}=\BigO(\sqrt d)$, the same bound recovers this rate in the worst-case of dissimilar tasks.\looseness-1

Lastly, we adapt our meta-algorithm to the adversarial BLO problem by setting the regularizer to be a self-concordant barrier function, as in~\citet{abernethy2008competing}.
Our bounds yield notions of task-similarity that depend on the constraints of the action space, e.g. over the sphere the measure is the closeness of the average of the estimated optima to the sphere's surface (Corollary~\ref{cor:sphere-main}).
We also instantiate BLO on the bandit shortest-path problem (Corollary~\ref{cor:path-main})~\citep{takimoto2003path,kalai2005efficient}.\looseness-1

\vspace{-1mm}
\subsection{Related work}
\vspace{-1mm}

While we are the first to consider meta-learning under adversarial bandit feedback, many have studied meta-learning in various {\em stochastic} bandit settings~\citep{azar2013sequential,kveton2020meta,sharaf2021meta,simchowitz2021bayesian,kveton2021meta,basu2021no,cella2020meta,moradipari2022multi,azizi2022non}.
The latter three study stochastic bandits under various task-generation assumptions, e.g.~\citet{azizi2022non} is in a batch-within-online setting where the optimal arms are adversarial.
In contrast, we make no distributional assumptions either within or without.
Apart from this difference, the results of~\citet{azizi2022non} are the ones our MAB results are most easily compared to, which we do in detail in Section~\ref{sec:mab}.
Notably, they assume that only $s\ll d$ of the $d$ arms are ever optimal across $T$ tasks and show (roughly speaking) $\tilde\BigO(\sqrt{sm})$ asymptotic regret;
we instead focus on an entropic notion of task-similarity that achieves the same asymptotic regret when specialized to their $s\ll d$.
However, avoiding their explicit assumption has certain advantages, e.g. robustness in the presence of $o(T)$ outlier tasks (c.f. Section~\ref{sec:robust}).\looseness-1

A setting that bears some similarity to online-within-online bandits is that of switching bandits~\citep{auer2002exp3}, and more generally online learning with dynamic comparators~\citep{anava2016multi,jadbabaie2015dynamic,luo2018efficient,auer2019adaptively,zhao2021bandit}.
In such problems, instead of using a static best arm as the comparator we use a piecewise constant sequence of arms, with a limited number of arm switches.
The key difference between such work and ours is our assumption that task-boundaries are known;
this makes the other setting more general.
However, while e.g. Exp3.S~\citep{auer2002exp3} can indeed be applied to online meta-learning, guarantees derived from switching costs {\em cannot} improve upon just running Tsallis-INF on each task~\citep[Table~1]{marinov2021pareto}.
Furthermore, these approaches usually quantify difficulty by the number of switches, whereas we focus on task-similarity.
While there exists stochastic-setting work that measures difficulty using a notion of average change in distribution across rounds~\citep{wei2021nonstationary}, it does not lead to improved performance if this average change is $\Omega(T)$, as is the case in e.g. the $s$-sparse setting discussed above.\looseness-1

There has been a variety of work on full-information online-within-online meta-learning~\citep{khodak2021fedex,balcan2021ltl}, including tuning OMD~\citep{khodak2019adaptive,denevi2019meta}.
Doing so for bandit algorithms has many additional challenges, including (1)~their inherent and high-variance stochasticity, (2)~the use of non-Lipschitz and even unbounded regularizers, and (3)~the lack of access to task-optima in order to adapt to deterministic, algorithm-independent task-similarity measures.
Theoretically our analysis draws on the average regret-upper-bound analysis (ARUBA) framework~\citep{khodak2019adaptive}, which observes that OMD can be tuned by targeting its upper bounds, which are affine functions of Bregman divergences, and provide online learning tools for doing so.
Our core structural result shows that the distance generating functions $\psi_\theta$ of these Bregman divergences can be tuned without interfering with meta-learning the initialization and step-size;
tuning $\theta$ is critical for adapting to settings such as that of a small set of optimal arms in MAB.
Doing so depends on several refinements of the original approach, including bounding the task-averaged-regret via the spectral norm of $\nabla^2\psi_\theta$ and expressing the loss of the meta-comparator using only $\psi_\theta$, rather than via its Bregman divergence as in prior work.
Finally, applying our structural result requires setting-specific analysis, e.g. to show regularity w.r.t. $\theta$ or to obtain MAB guarantees in terms of the entropy of the true optimal arms.
The latter is especially difficult, as \citet{khodak2019adaptive} define task-similarity via full information upper bounds, and involves applying tools from the best-arm-identification literature~\citep{abbasi2018best} to show that a constrained variant of Exp3 finds the optimal arm w.h.p.\looseness-1

\vspace{-1mm}
\section{Learning the regularizers of bandit algorithms}\label{sec:setup}
\vspace{-1mm}

We consider the problem of meta-learning over bandit tasks $t=1,\dots,T$ over some fixed set $\K\subset\R^d$, a (possibly improper) subset of which is the action space $\A$. 
On each round $i=1,\dots,m$ of task $t$ we play action $\*x_{t,i}\in\A$ and receive feedback $\ell_{t,i}(\*x_{t,i})$ for some function $\ell_{t,i}:A\mapsto[-1,1]$.
Note that all functions we consider will be linear and so we will also write $\ell_{t,i}(\*x)=\langle\ell_{t,i},\*x\rangle$. Additionally, we assume the adversary is {\em oblivious within-task}, i.e. it chooses losses $\ell_{t,1},\dots,\ell_{t,m}$ at time $t$.
We will also denote $\*x(a)$ to be the $a$-th element of the vector $\*x\in\R^d$, $\K^\circ$ to be the interior of $\K$, $\partial\K$ its boundary, and $\triangle$ to be the simplex on $d$ elements.
Finally, note that all proofs can be found in the Appendix.\looseness-1

In online learning, the goal on a single task $t$ is to play actions $\*x_{t,1},\dots\*x_{t,m}$ that minimize the regret $\sum_{i=1}^m\ell_{t,i}(\*x_{t,i})-\ell_{t,i}(\*{\mathring x}_t)$, where $\*{\mathring x}_t\in\argmin_{\*x\in\K}\sum_{i=1}^m\ell_{t,i}(\*x)$.
Lifting this to the meta-learning setting, our goal as in past work \citep{khodak2019adaptive,denevi2019meta} will be to minimize the {\bf task-averaged regret}: $\frac1T\sum_{t=1}^T\sum_{i=1}^m\ell_{t,i}(\*x_{i,t})-\ell_{t,i}(\*{\mathring x}_t)$.
In particular, we want to use multi-task data to improve average performance as the number of tasks $T\to\infty$. For example, we wish to attain a task-averaged regret bound of the form $o_T(\poly(m))+\tilde\BigO(V\sqrt m)+o(\sqrt m)$, where $V\in\R_{\ge0}$ is a measure of task-similarity that is small if the tasks are similar but still yields the worst-case single-task performance---$\BigO(\sqrt{dm})$ for MAB and $\BigO(d\sqrt m)$ for BLO---if they are not.\looseness-1

\vspace{-1mm}
\subsection{Online mirror descent as a base-learner}\label{ssec:mirror}
\vspace{-1mm}

In meta-learning we are commonly interested in learning a within-task algorithm or {\bf base-learner}, a parameterized method that we run on each task $t$.
A popular approach is to learn the initialization and other parameters of a gradient-based method such as gradient descent~\citep{finn2017maml,nichol2018reptile,li2017metasgd}.
If the task optima are close, the best initialization should perform well after only a few steps on a new task.
We take a similar approach applied to online mirror descent, a generalization of gradient descent to non-Euclidean geometries~\citep{beck2003mirror}.
Given a strictly convex {\bf regularizer} $\psi:\K^\circ\mapsto\R$, step-size $\eta>0$, and initialization $\*x_{t,1}\in\K^\circ$, OMD has the iteration\looseness-1
\begin{equation}\label{eq:mirror}
\*x_{t,i+1}=\argmin_{\*x\in\K^\circ}\Breg(\*x||\*x_{t,1})+\eta\sum_{j\le i}\langle\nabla\ell_{t,j}(\*x_{t,j}),\*x\rangle
\end{equation}
where $\Breg(\*x||\*y)=\psi(\*x)-\psi(\*y)-\langle\nabla\psi(\*y),\*x-\*y\rangle$ is the {\bf Bregman divergence} of $\psi$.
OMD recovers online gradient descent when $\psi(\*x)=\frac12\|\*x\|_2^2$, in which case $\Breg(\*x||\*y)=\frac12\|\*x-\*y\|_2^2$;
another example is exponentiated gradient, for which $\psi(\*p)=\langle\*p,\log\*p\rangle$ is the negative Shannon entropy on probability vectors $\*p\in\triangle$ and $\Breg$ is the KL-divergence~\citep{shalev-shwartz2011oco}.
An important property of $\Breg$ is that the sum over functions $\Breg(\*x_t||\cdot)$ is minimized at the mean $\bar{\*x}$ of the points $\*x_1,\dots,\*x_T$.

OMD on {\bf loss estimators} $\hat\ell_{t,i}$ constructed via partial feedback forms an important class of bandit methods~\citep{auer2002exp3,abernethy2008competing,abernethy2015fighting}.
Their regularizers $\psi$ are often non-Lipschitz, e.g. the negative entropy, or even unbounded, e.g. the log-barrier.
Thus full-information results for tuning OMD, e.g. by \citet{khodak2019adaptive} and \citet{denevi2019meta}, do not suffice.
We do adapt the former's approach of online learning a sequence $U_t(\*x,\eta,\theta)$ of affine functions of Bregman divergences from initializations $\*x$ to known points in $\K$.
We are interested in them because the regret of OMD w.r.t. a comparator $\*y$  is bounded by $\Breg(\*y||\*x)/\eta+\BigO(\eta m)$~\citep{shalev-shwartz2011oco,hazan2015oco}.
In our case the comparator is based on the estimated optimum $\*{\hat x}_t\in\argmin_{\*x\in\K}\langle\hat\ell_t,\*x\rangle$, where $\hat\ell_t=\sum_{i=1}^m\hat\ell_{t,i}$, resulting from running OMD on task $t$ using initialization $\*x\in\K$ and hyperparameters $\eta$ and $\theta$, which we denote $\OMD_{\eta,\theta}(\*x)$.
Unlike full-information meta-learning, we use a parameter $\varepsilon>0$ to constrain this optimum to lie in a subset $\K_\varepsilon\subset\K^\circ$.
Formally, we fix a point $\*x_1\in\K^\circ$ to be the ``center''---e.g. $\*x_1=\*1_d/d$ when $\K$ is the $d$-simplex $\triangle$---and define the projection $\*c_\varepsilon(\*x)=\*x_1+\frac{\*x-\*x_1}{1+\varepsilon}$ mapping from $\K$ to $\K_\varepsilon$.
For example, $\*c_\frac\varepsilon{1-\varepsilon}(\*x)=(1-\varepsilon)\*x+\varepsilon\*1_d/d$ on the simplex.
This projection allows us to handle regularizers $\psi$ that diverge near the boundary, but also introduces $\varepsilon$-dependent error terms.
In the BLO case it also forces us to tune $\varepsilon$ itself, as initializing too close to the boundary leads to unbounded regret while initializing too far away does not take advantage of task-similarity.
Thus the general upper bounds of interest are the following functions of the initialization $\*x$, the step-size $\eta>0$, and a third parameter $\theta$ that is either $\beta$ or $\varepsilon$, depending on the setting (MAB or BLO):\looseness-1
\begin{equation}\label{eq:rub}
U_t(\*x,\eta,\theta)
=\frac{\Breg_\theta(\*c_\theta(\*{\hat x}_t)||\*x)}\eta+\eta g(\theta)m+f(\theta)m
\end{equation}
Here $\Breg_\theta$ is the Bregman divergence of $\psi_\theta$ while $g(\theta)\ge1$ and $f(\theta)\ge0$ are tunable constants.
We overload $\theta$ to be either $\beta$ or $\varepsilon$ for notational simplicity, as we will not tune them simultaneously;
if $\theta=\beta$ (for MAB) then $c_\theta(\*x)=\*x_1+\frac{\*x-\*x_1}{1+\varepsilon}$ for fixed $\varepsilon$, while if $\theta=\varepsilon$ (for BLO) then $\Breg_\theta$ is the Bregman divergence of a fixed $\psi$.
The reason to optimize this sequence of upper bounds $U_t$ is because they directly bound the task-averaged regret while being no worse than the worst-case single-task regret.
Furthermore, an average over Bregman divergences is minimized at the average $\*{\hat{\bar x}}=\frac1T\sum_{t=1}^T\*{\hat x}_t$, where it attains the value $\hat V_\theta^2
=\frac1T\sum_{t=1}^T\psi_\theta(\*c_\theta(\*{\hat x}_t))-\psi_\theta(\*c_\theta(\*{\hat{\bar x}}))$~(c.f. Claim~\ref{clm:bregman}).
We will show that this quantity leads to intuitive and interpretable notions of task-similarity in all the applications we study.\looseness-1

\vspace{-1mm}
\subsection{A meta-algorithm for tuning bandit algorithms}\label{ssec:algo}
\vspace{-1mm}

\begin{algorithm}[!t]
	\DontPrintSemicolon
	\KwIn{
		compact set $\K\subset\R^d$, 
		initialization $\*x_1\in\K$,
		ordered subset $\Theta_k\subset\R$ also used to index interval bounds $\underline\eta,\overline\eta\in\R_{\ge0}^k$ and hyperparameters $\alpha\in\R_{\ge0}^k$,
		scalar hyperparameters $\rho>0$ and $\lambda\ge0$,
		learners $\OMD_{\eta,\theta}:\K\mapsto\R^d$, projections $\*c_\theta:\K\mapsto\K_\theta$}
	\For{$\theta\in\Theta_k$}{
		$\*w_1(\theta)\gets1$ and
		$\eta_1(\theta)\gets\frac{\underline\eta(\theta)+\overline\eta(\theta)}2$\tcp*{initialize MW and EWOO}
	}
	\For{task $t=1,\dots,T$}{
		sample $\theta_t$ from $\Theta_k$ w.p. $\propto\exp(\*w_t)$ \tcp*{sample from MW distribution}
		$\*{\hat x}_t\gets\OMD_{\eta_t(\theta_t),\theta_t}(\*c_{\theta_t}(\*x_t))$\tcp*{run bandit OMD within-task}
		$\*x_{t+1}\gets\frac1t\sum_{s=1}^t\*{\hat x}_s$\tcp*{FTL update of initialization}
		\For{$\theta\in\Theta_k$}{
			$\eta_{t+1}(\theta)\gets\frac{\int_{\underline\eta(\theta)}^{\overline\eta(\theta)}v\exp\left(-\alpha(\theta)\sum_{s=1}^tU_s^{(\rho)}(\*x_s,v,\theta)\right)dv}{\int_{\underline\eta(\theta)}^{\overline\eta(\theta)}\exp\left(-\alpha(\theta)\sum_{s=1}^tU_s^{(\rho)}(\*x_s,v,\theta)\right)dv}$\tcp*{EWOO step-size update}
			$\*w_{t+1}(\theta)\gets\*w_t(\theta)-\lambda U_t(\*x_t,\eta_t(\theta),\theta)$\tcp*{MW update of tuning parameter}
		}
	}
	\caption{\label{alg:combined}
		Tunes $\OMD_{\eta,\theta}$ with regularizer $\psi_\theta:\K^\circ\mapsto\R$ and step-size $\eta>0$, which when run over loss estimators $\hat\ell_{t,1},\dots,\hat\ell_{t,m}$, yielding task-optima $\*{\hat x}_t=\argmin_{\*x\in\K}\sum_{i=1}^m\langle\hat\ell_{t,i},\*x\rangle$.
	}
\end{algorithm}

To learn these functions $U_t(\*x,\eta,\theta)$---and thus to meta-learn $\OMD_{\eta,\theta}(\*x)$---our meta-algorithm sets $\*x$ to be the projection $\*c_\theta$ of the mean of the estimated optima---i.e. follow-the-leader (FTL) over the Bregman divergences in \eqref{eq:rub}---while simultaneously setting $\eta$ via EWOO and $\theta$ via discrete multiplicative weights (MW).
We choose FTL, EWOO, and MW because each is well-suited to the way $U_t$ depends on $\*x$, $\eta$, and $\theta$, respectively.
First, the only effect of $\*x$ on $U_t$ is via the Bregman divergence $\Breg_\theta(\*c_\theta(\*{\hat x}_t)||\*x)$, over which FTL attains logarithmic regret~\citep{khodak2019adaptive}.
For $\eta$, $U_t$ is exp-concave on $\eta>0$ so long as the first term is nonzero, but it is also non-Lipschitz;
the EWOO algorithm is one of the few methods with logarithmic regret on exp-concave losses without a dependence on the Lipschitz constant~\citep{hazan2007logarithmic}, and we ensure the first term is nonzero by {\em regularizing} the upper bounds as follows for some $\rho>0$ and $D_\theta^2=\max_{\*x,\*y\in\K_\theta}\Breg_\theta(\*x||\*y)$:
\begin{equation}\label{eq:modrub}
	U_t^{(\rho)}(\*x,\eta,\theta)
	=\frac{\Breg_\theta(\*c_\theta(\*{\hat x}_t)||\*x)+\rho^2D_\theta^2}\eta+\eta g(\theta)m+f(\theta)m
\end{equation}
Note that this function is fully defined after obtaining $\*{\hat x}_t$ by running OMD on task $t$, which allows us to use full-information MW to tune $\theta$ across the grid $\Theta_k$.
Showing low regret w.r.t. any $\theta\in\Theta\supset\Theta_k$ then just requires sufficiently large $k$ and Lipschitzness of $U_t$ w.r.t. $\theta$.
Combining all three algorithms together thus yields the guarantee in Theorem~\ref{thm:structural-main}, which is our main structural result.
It implies a generic approach for obtaining meta-learning algorithms by (1)~bounding the task-averaged regret by an average of functions of the form $U_t$, (2)~applying the theorem to obtain a new bound $o_T(1)+\min_{\theta,\eta}\frac{\hat V_\theta^2}\eta+\eta g(\theta)m+f(\theta)m$, and (3)~bounding the estimated task-similarity $\hat V_\theta^2$ by an interpretable quantity.
Crucially, since we can choose any $\eta>0$, the asymptotic regret is always as good as the worst-case guarantee for running the base-learner separately on each task.\looseness-1
\begin{Thm}[c.f. Thm.~\ref{thm:structural}]\label{thm:structural-main}
	Suppose $\*x_1=\argmin_{\*x\in\K}\psi_\theta(\*x)~\forall~\theta$ and let $D$, $M$, $F$, and $S$ be maxima over $\theta$ of $D_\theta$, $D_\theta\sqrt{g(\theta)m}$, $f(\theta)$, and $\|\nabla^2\psi_\theta\|_2$, respectively.
	For each $\rho\in(0,1)$ we can set $\underline\eta$, $\overline\eta$, $\alpha$, and $\lambda$ s.t. the expected average of the losses $U_t(\*c_{\theta_t}(\*x_t),\eta_t(\theta_t),\theta_t)$ of Algorithm~\ref{alg:combined} is at most
	\begin{equation}\label{eq:structural}
	\min_{\theta\in\Theta,\eta>0}\frac{\E\hat V_\theta^2}\eta\hspace{.2mm}+\hspace{.2mm}\eta g(\theta)m\hspace{.2mm}+\hspace{.2mm}f(\theta)m\hspace{.2mm}+\hspace{.2mm}\tilde\BigO\left(\frac{\frac M\rho\hspace{-.4mm}+\hspace{-.4mm}Fm}{\sqrt T}\hspace{-.4mm}+\hspace{-.4mm}\frac{L_\eta}k\hspace{-.4mm}+\hspace{-.4mm}\frac M{\rho^2T}\hspace{-.4mm}+\hspace{-.4mm}\min\left\{\frac{\rho^2D^2}\eta,\rho M\right\}\hspace{-.4mm}+\hspace{-.4mm}\frac S{\eta T}\right)
	\end{equation}
	Here $\hat V_\theta^2=\frac1T\sum_{t=1}^T\psi_\theta(\*c_\theta(\*{\hat x}_t))-\psi_\theta(\*c_\theta(\*{\hat{\bar x}}))$ and $L_\eta$ bounds the Lipschitz constant w.r.t. $\theta$ at $\hat V_\theta^2/\eta+\eta g(\theta)m+f(\theta)m$.
	The same bound plus $( M/\rho+Fm)\sqrt{\frac1T\log\frac1\delta}$ holds w.p. $\ge1-\delta$.
\end{Thm}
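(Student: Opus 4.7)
The plan is to decompose the expected per-task loss of Algorithm~\ref{alg:combined} into three additive contributions---one from each inner algorithm (FTL for the initialization $\*x$, EWOO for the step-size $\eta$, and MW for $\theta$)---control each by the standard regret guarantee of the respective method, and then separately account for (a) the discretization of $\Theta$ onto $\Theta_k$, (b) the gap between $U_t$ and $U_t^{(\rho)}$, and (c) the deviation between the sampled MW loss and its conditional expectation for the high-probability bound.

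For FTL, the iterates $\*x_{t+1}=\frac1t\sum_{s=1}^t\*{\hat x}_s$ are precisely FTL applied to the Bregman-divergence losses $\Breg_\theta(\*c_\theta(\*{\hat x}_t)\|\cdot)$, whose average minimum value is $\hat V_\theta^2$ by Claim~\ref{clm:bregman} (using linearity of $\*c_\theta$). Adapting the Bregman-FTL analysis of \citet{khodak2019adaptive} and expressing each per-step regret via a second-order expansion of $\psi_\theta$ yields $\sum_{t=1}^T\Breg_\theta(\*c_\theta(\*{\hat x}_t)\|\*x_t)\le T\hat V_\theta^2+\tilde\BigO(S)$ for $S=\max_\theta\|\nabla^2\psi_\theta\|_2$. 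Crucially, this bound is uniform in $\theta$ because the $\*x_t$'s do not depend on $\theta$, and dividing by $\eta T$ produces the $S/(\eta T)$ error term.

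For EWOO run per-$\theta$ on $U_t^{(\rho)}(\*x_t,\cdot,\theta)$, the role of the $\rho^2D_\theta^2$ regularization is to keep the coefficient of $1/\eta$ strictly positive, which makes $U_t^{(\rho)}$ exp-concave in $\eta$ on the relevant interval with a constant depending on $\rho$. Hazan's EWOO guarantee, whose regret carries no Lipschitz dependence, then yields average regret $\tilde\BigO(M/(\rho^2T))$ provided $[\underline\eta(\theta),\overline\eta(\theta)]$ is chosen to contain the comparator $\eta$. Replacing $U_t^{(\rho)}$ by $U_t$ incurs at most $\rho^2D^2/\eta$ per task, which can alternatively be bounded by $\rho M$ after substituting the optimal $\eta\asymp D/\sqrt{g(\theta)m}$, giving the $\min\{\rho^2D^2/\eta,\rho M\}$ term. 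For MW on $\Theta_k$, the loss $U_t(\*x_t,\eta_t(\theta),\theta)$ is fully computable across $\Theta_k$ after task $t$ and is bounded by $M/\rho+Fm$, so standard MW with appropriate $\lambda$ attains $\tilde\BigO((M/\rho+Fm)\sqrt{\log k/T})$ average regret against any $\theta\in\Theta_k$. The hypothesized Lipschitz constant $L_\eta$ of $\hat V_\theta^2/\eta+\eta g(\theta)m+f(\theta)m$ in $\theta$ then extends this guarantee from $\Theta_k$ to all of $\Theta$ at cost $L_\eta/k$ by a net argument. Converting MW's bound---which controls the conditional expectation of the sampled $\theta_t$---into a high-probability statement is done via Azuma--Hoeffding on the bounded martingale difference, contributing $(M/\rho+Fm)\sqrt{\log(1/\delta)/T}$.

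The main obstacle is cleanly \emph{decoupling} the three learners, since FTL's $\*x_t$ enters EWOO's and MW's losses, EWOO's $\eta_t$ enters MW's losses, and all three combine in the within-task OMD call that produces $\*{\hat x}_t$. The observation that makes the decomposition work is that the FTL bound holds uniformly over $\theta$ and is independent of $\eta$, and each EWOO bound is uniform over the realized $\*x_t$'s, so one can chain the three guarantees after fixing the outer-most comparator $(\theta^\ast,\eta^\ast)$ in the minimum. The second delicate piece is obtaining the FTL regret in terms of $\|\nabla^2\psi_\theta\|_2$ rather than a Lipschitz constant of $\psi_\theta$, since the regularizers of interest (negative Tsallis entropy, self-concordant barriers) are non-Lipschitz on $\K$; it is this sharper curvature-based bound that lets the same meta-algorithm handle both the MAB and BLO settings.
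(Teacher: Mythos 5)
Your proposal takes essentially the same route as the paper's proof: a three-layer decomposition with MW over the grid $\Theta_k$, EWOO for the step-size $\eta$ on the $\rho$-regularized (exp-concave) upper bounds, and FTL for the initialization $\*x$ on the Bregman-divergence sequence bounded via the Hessian spectral norm $S$, with the discretization cost handled by $L_\eta$-Lipschitzness and the high-probability statement via a martingale (Azuma--Hoeffding) argument. This matches the paper's chaining of its Lemmas on EWOO and FTL (Lemmas~\ref{lem:ewoo},~\ref{lem:ftl}, combined in Lemma~\ref{lem:aruba}) inside the MW regret bound, so no meaningful difference in approach.
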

We keep details of the dependence on $S$ and other constants as they are important in applying this result, but in most cases setting $\rho=\frac1{\sqrt[4]T}$ yields $\tilde\BigO(T^\frac34)$ regret.
While a slow rate, the losses $U_t$ are non-Lipschitz and non-convex in-general, and learning them allows us to tune $\theta$ over user-specified intervals and $\eta$ over all positive numbers, which will be crucial later.
At the same time, this tuning is what leads to the slow rate, as without tuning ($k=1$, $L_\eta=0$) the same $\rho$ yields $\tilde\BigO(\sqrt T)$ regret.
Lastly, while we focus on learning guarantees, we note that Algorithm~\ref{alg:combined} is reasonably efficient, requiring a $2k$ single-dimensional integrals per task;
this is discussed in more detail in Section~\ref{sec:runtime}.\looseness-1

%

\vspace{-1mm}
\section{Multi-armed bandits}\label{sec:mab}
\vspace{-1mm}

We now turn to our first application:
the multi-armed bandit problem, where at each round $i$ of task $t$ we take action $a_{t,i}\in[d]$ and observe loss $\ell_{t,i}(a_{t,i})\in[0,1]$.
As we are sampling actions from distributions $\*x\in\K=\triangle$ on the $k$-simplex, the inner product $\langle\ell_{t,i},\*x_{t,i}\rangle$ is the expected loss and the optimal arm $\mathring a_t$ on task $t$ can be encoded as a vector $\*{\mathring x}_t$ s.t. $\*{\mathring x}_t(a)=1_{a=\mathring a_t}$.

We use as a base-learner a generalization of Exp3 that uses the negative Tsallis entropy $\psi_\beta(\*p)=\frac{1-\sum_{a=1}^d\*p^\beta(a)}{1-\beta}$ for some $\beta\in(0,1]$ as the regularizer;
this improves regret from Exp3's $\BigO(\sqrt{dm\log d})$ to the optimal $\BigO(\sqrt{dm})$~\citep{abernethy2015fighting}.
Note that $-\psi_\beta$ is the Shannon entropy in the limit $\beta\to1$ and its Bregman divergence $\Breg_\beta(\*x||\cdot)$ is non-convex in the second argument.
As the Tsallis entropy is non-Lipschitz at the simplex boundary, which is where the estimated and true optima $\*{\hat x}_t$ and $\*{\mathring x}_t$ lie, we will project them using $\*c_\frac\varepsilon{1-\varepsilon}(\*x)=(1-\varepsilon)\*x+\varepsilon\*1_d/d$ to the set  $\K_\frac\varepsilon{1-\varepsilon}=\{\*x\in\triangle:\min_a\*x(a)\ge\varepsilon/d\}$.
We denote the resulting vectors using the superscript ${(\varepsilon)}$, e.g. $\*{\hat x}_t^{(\varepsilon)}=\*c_\frac\varepsilon{1-\varepsilon}(\*{\hat x}_t)$, and also use $\triangle^{(\varepsilon)}=\K_\frac\varepsilon{1-\varepsilon}$ to denote the constrained simplex.
For MAB we also study two base-learners:
(1) {\bf implicit exploration} and (2) {\bf guaranteed exploration}.
The former uses low-variance loss {\em under}-estimators $\hat\ell_{t,i}(a)=\frac{\ell_{t,i}(a)1_{a_{t,i}=a}}{\*x_{t,i}(a)+\gamma}$ for $\gamma>0$, where $\*x_{t,i}(a)$ is the probability of sampling $a$ on task $t$ round $i$, to enable high probability bounds~\citep{neu2015explore}.
On the other hand, {\bf guaranteed exploration} uses unbiased loss estimators (i.e. $\gamma=0$) but constrains the action space to $\triangle^{(\varepsilon)}$, which we will use to adapt to a task-similarity determined by the {\em true} optima-in-hindsight.\looseness-1

\vspace{-1mm}
\subsection{Adapting to low estimated entropy with high probability using implicit exploration}
\vspace{-1mm}

In our first setting, the base-learner runs $\OMD_{\eta_t,\beta_t}(\*x_{t,1})$ on $\gamma$-regularized estimators with Tsallis regularizer $\psi_{\beta_t}$, step-size $\eta_t$, and initialization $\*x_{t,1}\in\triangle^{(\varepsilon)}$.
Standard OMD analysis combined with implicit exploration analysis~\citep{neu2015explore} shows~\eqref{eq:implicit} that the task-averaged regret is bounded w.h.p. by\looseness-1
\begin{equation}\label{eq:rub-implicit}
	(\varepsilon+\gamma d)m+\tilde\BigO\left(\frac{\sqrt d}{\gamma T}\right)+\frac1T\sum_{t=1}^T\frac{\Breg_{\beta_t}(\*{\hat x}_t^{(\varepsilon)}||\*x_{t,1})}{\eta_t}+\frac{\eta_td^{\beta_t}m}{\beta_t}
\end{equation}
The summands have the desired form of $U_t(\*x_{t,1},\eta_t,\beta_t)$, so by Theorem~\ref{thm:structural-main} we can bound their average by\looseness-1
\begin{equation}\label{eq:implicit-structural}
\min_{\beta\in[\underline\beta,\overline\beta],\eta>0}\frac{\hat V_\beta^2}\eta+\frac{\eta d^\beta m}\beta+\tilde\BigO\left(\frac{L_\eta}k+\frac{\left(\frac d\varepsilon\right)^{2-\underline\beta}}{\eta T}+\left(\rho +\frac1{\rho\sqrt T}+\frac1{\rho^2T}\right)d\sqrt m\right)
\end{equation}
where $\hat V_\beta^2=\frac1T\sum_{t=1}^T\psi_\beta(\*{\hat x}_t^{(\varepsilon)})-\psi_\beta(\*{\hat{\bar x}}^{(\varepsilon)})$ is the average difference in Tsallis entropies between the ($\varepsilon$-constrained) estimated optima $\*{\hat x}_t$ and their empirical distribution $\*{\hat{\bar x}}=\frac1T\sum_{t=1}^T\*{\hat x}_t$, while $L_\eta$ is the Lipschitz constant of $\frac{\hat V_\beta^2}\eta+\frac{\eta d^\beta m}\beta$ w.r.t. $\beta\in[\underline\beta,\overline\beta]$.
The specific instantiation of Algorithm~\ref{alg:combined} that~\eqref{eq:implicit-structural} holds for is to do the following at each time $t$:\looseness-1
\begin{align}\label{eq:mab-algo}
\begin{split}
1.&\textrm{ sample $\beta_t$ via the MW distribution $\propto\exp(\*w_t)$ over the discretization $\Theta_k$ of $[\underline\beta,\overline\beta]\subset[0,1]$}\\
2.&\textrm{ run $\OMD_{\eta_t,\beta_t}$ using the initialization $\*x_{t,1}
	=\tfrac1{t-1}\sum\nolimits_{s<t}\hat{\*x}_t^{(\varepsilon)}
	=\tfrac\varepsilon d\*1_d+\tfrac{1-\varepsilon}{t-1}\sum\nolimits_{s<t}\hat{\*x}_t$ (FTL)}\\
3.&\textrm{ update EWOO at each $\beta\in\Theta_k$ with loss $\tfrac{\Breg_\beta(\*{\hat x}_t^{(\varepsilon)}||\*x_{t,1})+\rho^2D_\beta^2}\eta+\tfrac{\eta d^\beta m}\beta$, where $D_\beta^2=\tfrac{d^{1-\beta}-1}{1-\beta}$}\\
4.&\textrm{ update $\*p_{t+1}$ using multiplicative weights with expert losses $\tfrac{\Breg_\beta(\*{\hat x}_t^{(\varepsilon)}||\*x_{t,1})}\eta+\tfrac{\eta d^\beta m}\beta$}
\end{split}
\end{align}
The final guarantee for this procedure, given in full in Theorem~\ref{thm:implicit}, follows by two properties of the Tsallis entropy $-\psi_\beta$:
(1) its Lipschitzness w.r.t. $\beta\in[0,1]$ (c.f. Lem~\ref{lem:tsallis-lipschitz}) and (2) the fact that $\hat V_\beta^2$ is bounded by the entropy $\hat H_\beta=-\psi_\beta(\*{\hat{\bar x}})$ of the empirical distribution of estimated optima (c.f. Lem~\ref{lem:tsallis-entropy}), which yields our first notion of task-similarity:
{\em multi-armed bandit tasks are similar if the empirical distribution of their (estimated) optimal arms has low entropy}.\looseness-1

We exemplify the implications of Theorem~\ref{thm:implicit} in Corollary~\ref{cor:mab}, where we consider three regimes of the lower bound $\underline\beta$ on the entropy parameter:
$\underline\beta=1$, i.e. always using Exp3;
$\underline\beta=1/2$, which corresponds to the optimal worst-case setting~\citep{abernethy2015fighting};
and $\underline\beta=1/\log d$, below which the OMD regret-upper-bound always worsens (and so it does not make sense to try $\beta<1/\log d$).\looseness-1
\begin{Cor}[c.f. Cors.~\ref{cor:one},~\ref{cor:half}, and~\ref{cor:logd}]\label{cor:mab}
	Suppose $\overline\beta=1$ and we set the initialization, step-size, and entropy parameter of Tsallis OMD with implicit exploration via Algorithm~\ref{alg:combined} as in Theorem~\ref{thm:implicit}.
	\begin{enumerate}[leftmargin=*,topsep=0pt,noitemsep]
		\item If $\underline\beta=1$ and $T\ge\frac{d^2}m$ we can ensure $\frac1T\sum\limits_{t=1}^T\sum\limits_{i=1}^m\ell_{t,i}(\*x_{t,i})-\ell_{t,i}(\*{\mathring x}_t)
		\le2\sr{\hat H_1dm}+\tilde\BigO\hspace{-.5mm}\left(\hspace{-.5mm}\frac{d^\frac23m^\frac23}{\sqrt[3]T}\hspace{-.5mm}\right)$ w.h.p.\looseness-1
		\item If $\underline\beta=\frac12$ and $T\ge\frac{d^{5/2}}m$ we can set $k=\lceil\sqrt[4]d\sr T\rceil$ and ensure w.h.p. that task-averaged regret is\vspace{-3mm}\looseness-1
		\begin{equation}\label{eq:half}
			\min_{\beta\in[\frac12,1]}2\sqrt{\hat H_\beta d^\beta m/\beta}+\tilde\BigO\left(\frac{d^{5/7}m^{5/7}}{T^{2/7}}+\frac{d\sqrt m}{\sqrt[4]T}\right)
		\end{equation}
		\item If $\underline\beta=\frac1{\log d}$ and $T\ge\frac{d^3}m$ we can set $k=\lceil\sqrt[4]d\sr T\rceil$ and ensure w.h.p. that task-averaged regret is\vspace{-3mm}\looseness-1
		\begin{equation}\label{eq:full}
			\min_{\beta\in(0,1]}2\sqrt{\hat H_\beta d^\beta m/\beta}+\tilde\BigO\left(\frac{d^{3/4}m^{3/4}+d\sqrt m}{\sqrt[4]T}\right)
		\end{equation}
	\end{enumerate}
\end{Cor}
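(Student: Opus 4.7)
The plan is to apply Theorem~\ref{thm:structural-main} through the reduction already set up in \eqref{eq:rub-implicit} and \eqref{eq:implicit-structural}. First, I would upper bound the expected average of the surrogate losses $U_t$ for Algorithm~\ref{alg:combined} using \eqref{eq:implicit-structural}, then add the implicit-exploration residual $(\varepsilon+\gamma d)m+\tilde\BigO(\sqrt d/(\gamma T))$ contributed by \eqref{eq:rub-implicit}. Next, I would invoke the stated inequality $\hat V_\beta^2\le\hat H_\beta$, where $\hat H_\beta=-\psi_\beta(\*{\hat{\bar x}}^{(\varepsilon)})$ is the Tsallis entropy of the empirical distribution of estimated optima. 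For any fixed $\beta$, analytically minimizing $\hat H_\beta/\eta+\eta d^\beta m/\beta$ over $\eta>0$ at $\eta^\star=\sqrt{\hat H_\beta\beta/(d^\beta m)}$ produces the leading term $2\sqrt{\hat H_\beta d^\beta m/\beta}$ common to all three parts of the corollary.

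The three cases then differ only in how the $\min$ over $\beta$ and the residual $\tilde\BigO$ terms in \eqref{eq:implicit-structural} are treated. For Case~1 ($\underline\beta=\overline\beta=1$) no $\beta$-tuning occurs, so I would set $k=1$ to eliminate the $L_\eta/k$ penalty, and then jointly balance $\varepsilon$ against $(d/\varepsilon)/(\eta^\star T)$, $\gamma$ against $\sqrt d/(\gamma T)$, and $\rho$ against the three $\rho$-dependent contributions in $(\rho+1/(\rho\sqrt T)+1/(\rho^2 T))d\sqrt m$. The hypothesis $T\ge d^2/m$ ensures the leading entropy term dominates the balanced residuals, collapsing them into the stated $\tilde\BigO(d^{2/3}m^{2/3}/\sqrt[3]T)$ rate. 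For Cases~2 and~3 the $\beta$-tuning reintroduces the $L_\eta/k$ term and inflates the exponent $2-\underline\beta$ in the $\varepsilon$-dependent residual. I would bound $L_\eta$ using the Tsallis-entropy Lipschitzness in $\beta$ together with the explicit form of $\eta^\star$, producing a polynomial dependence on $d,m,1/\underline\beta$; then setting $k=\lceil d^{1/4}\sqrt T\rceil$ makes $L_\eta/k$ subleading, and re-balancing $\varepsilon,\gamma,\rho$ against the new worst-case exponent in $\varepsilon$ yields the claimed residuals. The lower bounds $T\ge d^{5/2}/m$ and $T\ge d^3/m$ are exactly what is needed for the leading entropy term to dominate, producing the $T^{-2/7}$ and $T^{-1/4}$ rates for the two cases.

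The main obstacle is the multi-way balancing of $\rho,\varepsilon,\gamma,k$ in the presence of the data-dependent minimizer $\eta^\star$ whose scaling depends on the \emph{a priori} unknown empirical entropy $\hat H_\beta$. The cleanest workaround is to upper bound $\hat H_\beta$ by the worst-case diameter $D_\beta^2=(d^{1-\beta}-1)/(1-\beta)$ when selecting $\rho,\varepsilon,\gamma,k$, so that they can be fixed in advance; the true $\hat H_\beta$ is then substituted back only in the leading term, which is the only place it is permitted to remain. A secondary technical point is controlling the Lipschitz constant $L_\eta$ of $\beta\mapsto\hat H_\beta/\eta+\eta d^\beta m/\beta$ uniformly over $\beta\in[\underline\beta,1]$: the Tsallis entropy part is Lipschitz by the cited lemma, while $d^\beta/\beta$ has derivative $d^\beta\log d/\beta-d^\beta/\beta^2$, which is bounded on $[\underline\beta,1]$ provided $\underline\beta\ge1/\log d$ so that $d^\beta$ stays $\BigO(1)$ at the lower end. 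Once these two points are handled, plugging each case's choice of $\underline\beta$ and $k$ into the re-balanced bound yields the three statements directly.
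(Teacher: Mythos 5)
Your overall reduction — applying the structural theorem via \eqref{eq:rub-implicit} and \eqref{eq:implicit-structural}, bounding $\hat V_\beta^2$ by $\hat H_\beta$ through Lemma~\ref{lem:tsallis-entropy}, then optimizing $\eta$ and balancing $\rho,\varepsilon,\gamma,k$ — is the right skeleton and matches Corollaries~\ref{cor:one}--\ref{cor:logd}. But the workaround you propose for the data dependence of $\eta^\star$ has a genuine gap. The residual terms in \eqref{eq:implicit-structural}, in particular $(d/\varepsilon)^{2-\underline\beta}/(\eta T)$ and, in cases~2 and~3, the $L_\eta/k$ piece (where $L_\eta$ itself contains $\log(d/\varepsilon)/\eta$), all diverge as $\eta\to 0$. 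Substituting $\eta=\eta^\star=\sqrt{\hat H_\beta\beta/(d^\beta m)}$ therefore blows up exactly when $\hat H_\beta$ is small, which is the regime where the adaptive guarantee is supposed to help. Replacing $\hat H_\beta$ by the worst-case $D_\beta^2$ ``when selecting $\rho,\varepsilon,\gamma,k$'' does not repair this: those parameters are not the ones divided by $\eta$, and once they are fixed you must still plug a single value of $\eta$ into every $1/\eta$ term; using $\eta^\star$ in the leading term but something larger in the residuals is not a coherent choice. The paper's actual fix is to substitute $\eta=\eta^\star+\eta_{\mathrm{floor}}$ with a $T$-decaying floor — e.g.\ $\eta=\sqrt{\hat H_1/(dm)}+1/\sqrt[3]{dmT}$ in Corollary~\ref{cor:one}, and $\eta=\sqrt{\beta\hat H_\beta/(md^\beta)}+1/(dmT)^{2/7}$ or $+1/\sqrt[4]{dmT}$ in Corollaries~\ref{cor:half} and~\ref{cor:logd}. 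The floor bounds every $1/\eta$ residual uniformly regardless of how small $\hat H_\beta$ turns out to be, while $\hat H_\beta/\eta$ is still controlled by the $\eta^\star$ summand and the extra $\eta_{\mathrm{floor}}\cdot d^\beta m/\beta$ is swept into the $\tilde\BigO$ term. That additive floor is the missing ingredient in your argument; without it the chain of balances you describe does not close.

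One smaller point: your Lipschitz reasoning for $d^\beta/\beta$ is off. The derivative $d^\beta(\beta\log d-1)/\beta^2$ attains its maximum near $\beta=1$, where it is $\Theta(d\log d)$; the lower endpoint $\underline\beta\ge 1/\log d$ prevents the $1/\beta$ factor from diverging but does not make $d^\beta$ stay $\BigO(1)$ across the interval. The resulting Lipschitz constant has a genuine factor of $d$, consistent with the $L_\eta=(\log(d/\varepsilon)/\eta+\eta m\log^2 d)d$ appearing in Theorem~\ref{thm:implicit}; this must be tracked when choosing $k$.
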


In all three settings, as $T\to\infty$ the regret scales directly with the entropy of the estimated optima-in-hindsight, which is small if most tasks are estimated to be solved by one of a few arms and large if all arms are used roughly equally.
Corollary~\ref{cor:mab} demonstrates the importance of tuning $\beta$:
even if tasks are dissimilar, we asymptotically recover the worst-case optimal guarantee $\BigO(\sqrt{dm})$ in cases two and three because the entropy is at most $\frac{d^{1-\beta}}{1-\beta}$.
On the other hand, if a constant $s\ll d$ actions are always minimizers, i.e. the empirical distribution $\hat{\bar{\*x}}$ is $s$-sparse, then the last bound~\eqref{eq:full} implies that Algorithm~\ref{alg:combined} can achieve task-averaged regret $o_T(md)+\BigO(\sqrt{sm\log d})$.
At the same time, this tuning is costly, with the last two results having an extra $\tilde\BigO\left(\frac{d\sqrt m}{\sqrt[4]T}\right)$ term because of it.
Furthermore, the bound of $\underline\beta=\frac12$ has a slightly better dependence on $d$, $m$, and $T$ compared to that of $\underline\beta=\frac1{\log d}$ due to the $\left(\frac d\varepsilon\right)^{2-\underline\beta}$ term in the bound \eqref{eq:implicit-structural} returned for MAB by our structural result.\looseness-1

We can compare the $s$-sparse result to \citet{azizi2022non}, who achieve task-averaged regret $\tilde\BigO(m/\sqrt[3]T+\sqrt{sm\log T})$ for {\em stochastic} MAB.
Despite our adversarial setting and no stipulations on how tasks are related, our bounds are asymptotically comparable if the estimated and true optima are roughly equivalent (ignoring their $\BigO(\sqrt{\log T})$-factor), as we also have $\tilde\BigO(\sqrt{sm})$ average regret as $T\to\infty$.
Their rate in the number of tasks is better, but at a cost of runtime exponential in $s$.
Apart from generality, we believe a great strength of our results is their adaptiveness;
unlike \citet{azizi2022non}, we do not need to know how many optimal arms there are to adapt to there being few of them.\looseness-1

\vspace{-1mm}
\subsection{Adapting to the entropy of the true optima-in-hindsight using guaranteed exploration}
\vspace{-1mm}

While the entropy of estimated optima-in-hindsight may be useful in some cases where we wish to actually {\em compute} the task-similarity, it is otherwise generally more desirable to adapt to an intrinsic and algorithm-independent measure, e.g. the entropy of the {\em true} optima-in-hindsight.
However, doing so is difficult without further assumptions, as the optima are both hard to identify and the measure itself may not be fully defined in case of ties.
Thus in this section we focus on the setting where we have a nonzero performance gap $\Delta>0$ between the best and second-best arms:
\begin{Asu}\label{asu:gap}
	For some $\Delta>0$ and all tasks $t\in[T]$, $\frac1m\sum_{i=1}^m\ell_{t,i}(a)-\ell_{t,i}(\mathring a_t)\ge\Delta~\forall~a\ne\mathring a_t$.
\end{Asu}
This assumption is common in the best-arm identification literature~\citep{jamieson2015bestarm,abbasi2018best}, which we adapt to show that the estimated optimal arms match the true optima, and thus so do their entropies.
To do so, we switch to {\em unbiased} loss estimators, i.e. $\gamma=0$, and control their variance by lower-bounding the probability of selecting an arm to be at least $\frac\varepsilon d$;
this can alternatively be expressed as running OMD using the regularizer $\psi_\beta+I_{\triangle^{(\varepsilon)}}$, where for any $\C\subset\R^d$ the function $I_{\C}(\*x)=0$ if $\*x\in\C$ and $\infty$ otherwise.
Guaranteed exploration allows us extend the analysis of \citet{abbasi2018best} to show that the estimated arm is optimal w.h.p.:
\begin{Lem}[c.f. Lem~\ref{lem:identification}]
	Suppose for $\varepsilon>0$ and any $\beta\in(0,1]$ we run OMD on task $t\in[T]$ with regularizer $\psi_\beta+I_{\triangle^{(\varepsilon)}}$.
	If $m=\tilde\Omega(\frac d{\varepsilon\Delta^2})$ then $\*{\hat x}_t=\*{\mathring x}_t$ w.p. $\ge1-d\exp(-\Omega(\varepsilon\Delta^2m/d))$.
\end{Lem}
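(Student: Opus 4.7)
The plan is to show that the cumulative unbiased estimators $\hat\ell_t(a) = \sum_{i=1}^m \hat\ell_{t,i}(a)$ separate the true optimum from every other arm with high probability, so that the minimizer $\*{\hat x}_t$ of $\langle\hat\ell_t,\cdot\rangle$ over the simplex coincides with the vertex $\*{\mathring x}_t$. Since the action space is the simplex, this minimizer is a vertex, so it suffices to show $\hat\ell_t(\mathring a_t) < \hat\ell_t(a)$ for every suboptimal arm $a\ne\mathring a_t$, and then union-bound over the $d-1$ suboptimal arms.

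First I would set up the key properties enforced by constraining OMD to $\triangle^{(\varepsilon)}$: the sampling probabilities satisfy $\*x_{t,i}(a)\ge\varepsilon/d$ for every $a$ and every round $i$, which implies that each per-round estimator $\hat\ell_{t,i}(a)=\ell_{t,i}(a)\*1_{a_{t,i}=a}/\*x_{t,i}(a)$ is (i) bounded a.s. by $d/\varepsilon$ and (ii) has conditional variance at most $\ell_{t,i}(a)^2/\*x_{t,i}(a)\le d/\varepsilon$. Moreover $\E_{i-1}\hat\ell_{t,i}(a)=\ell_{t,i}(a)$, so $\hat\ell_{t,i}(a)-\ell_{t,i}(a)$ is a bounded martingale difference sequence with predictable quadratic variation at most $md/\varepsilon$ after $m$ rounds.

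Next, I would apply Freedman's inequality (a.k.a.\ a Bernstein bound for martingales, exactly in the style used by \citet{abbasi2018best}) to the martingale $M_i(a)=\sum_{j\le i}(\hat\ell_{t,j}(a)-\ell_{t,j}(a))$ to obtain, for each arm $a$,
\begin{equation*}
\Pr\left(|\hat\ell_t(a)-\textstyle\sum_{i=1}^m\ell_{t,i}(a)|\ge\Delta m/2\right)
\le 2\exp\!\left(-\frac{c(\Delta m/2)^2}{md/\varepsilon+(d/\varepsilon)(\Delta m/2)}\right)
= 2\exp\!\left(-\Omega\!\left(\varepsilon\Delta^2 m/d\right)\right),
\end{equation*}
where the last equality uses the assumption $m=\tilde\Omega(d/(\varepsilon\Delta^2))$ so that the variance term dominates the bounded-range term. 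Combining this deviation bound for arm $\mathring a_t$ and for any $a\ne\mathring a_t$ with the gap Assumption~\ref{asu:gap}, namely $\sum_i\ell_{t,i}(a)-\sum_i\ell_{t,i}(\mathring a_t)\ge\Delta m$, gives $\hat\ell_t(\mathring a_t)<\hat\ell_t(a)$ on the event that both deviations are below $\Delta m/2$.

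Finally I would union-bound over the at most $d$ arms, yielding a failure probability of $d\exp(-\Omega(\varepsilon\Delta^2 m/d))$, and conclude that the unique minimizer of $\langle\hat\ell_t,\cdot\rangle$ over $\triangle$ is the vertex $\*{\mathring x}_t$ on this event. The main obstacle is the martingale step: because actions are chosen adaptively from a distribution that itself depends on all past estimators, one cannot appeal to a Hoeffding/Bernstein bound for independent sums and must instead invoke Freedman with the correct predictable variance accounting; the guaranteed-exploration lower bound $\*x_{t,i}(a)\ge\varepsilon/d$ is exactly what makes both the range and the predictable variance controllable and turns the analysis of~\citet{abbasi2018best} into the stated bound.
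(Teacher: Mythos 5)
Correct, and essentially the same approach as the paper. The paper invokes a martingale Hoeffding bound of Fan, Grama, and Liu (Cor.\ 2.1) applied separately to the two one-sided deviation events (overestimate of $\hat\ell_t(\mathring a_t)$, underestimate of $\hat\ell_t(a)$ for $a\ne\mathring a_t$) using the per-arm gaps $\Delta_a\ge\Delta$, whereas you apply a two-sided Freedman bound uniformly with the worst-case gap $\Delta$; both exploit exactly the same two consequences of the $\varepsilon/d$ probability floor (per-round range $d/\varepsilon$ and conditional variance $d/\varepsilon$), then union-bound over arms, so the differences only affect constants. One small imprecision: you do not actually need $m=\tilde\Omega(d/(\varepsilon\Delta^2))$ for the variance term to dominate the range term in Freedman's denominator---since $\Delta\le1$, $2md/\varepsilon\ge\frac{2}{3}(d/\varepsilon)(\Delta m/2)$ always; the lower bound on $m$ is only needed (as in the paper) to make the resulting failure probability $d\exp(-\Omega(\varepsilon\Delta^2m/d))$ nontrivial.
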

However, the constraint that the probabilities are at least $\frac\varepsilon d$ does lead to $\varepsilon m$ additional error on each task, with the upper bound on the task-averaged expected regret becoming\vspace{-.5mm}
\begin{equation}\label{eq:rub-guaranteed}
	\E\frac1T\sum_{t=1}^T\sum_{i=1}^m\ell_{t,i}(a_{t,i})-\ell_{t,i}(\mathring a_t)\le\varepsilon m+\frac1T\sum_{t=1}^T\frac{\E\Breg_{\beta_t}(\*{\hat x}_t^{(\varepsilon)}||\*x_{t,1})}{\eta_t}+\frac{\eta_td^{\beta_t}m}{\beta_t}
\end{equation}
Moreover, we will no longer set $\varepsilon=o_T(1)$, as this would require $m$ to be {\em increasing} in $T$ for the best-arm identification result of Lemma~\ref{lem:identification} to hold.
Thus, unlike in the previous section, our results will contain ``fast'' terms---terms in the task-averaged regret that are $o(\sqrt m)$ but not decreasing in $T$ nor affected by the task-similarity.
They will still improve upon the $\Omega(\sqrt{dm})$ MAB lower bound if tasks are similar, but the task-averaged regret will not converge to zero as $T\to\infty$ if the tasks are identical.\looseness-1

Nevertheless, the tuning-dependent component of the upper bounds in~\eqref{eq:rub-guaranteed} has the appropriate form for our structural result---in fact we can use the same meta-algorithm~\eqref{eq:mab-algo} as for implicit exploration---and so we can again apply Theorem~\ref{thm:structural-main} to get a bound on the task-averaged regret in terms of the average difference $\hat V_\beta^2=\frac1T\sum_{t=1}^T\psi_\beta(\*{\hat x}_t^{(\varepsilon)})-\psi_\beta(\*{\hat{\bar x}}^{(\varepsilon)})$ of the entropies of the $\varepsilon$-constrained estimated task-optima $\*{\hat x}_t^{(\varepsilon)}$ and their mean $\*{\hat{\bar x}}^{(\varepsilon)}$.
The easiest way to apply Lemma~\ref{lem:identification} to bound $\hat V_\beta^2$ in terms of $H_\beta=\frac1T\sum_{t=1}^T\psi_\beta(\*{\mathring x}_t)-\psi_\beta(\*{\mathring{\bar x}})$ is via union bound on all $T$ tasks to show that $\*{\hat x}_t=\*{\mathring x}_t~\forall~t$ w.p. $\ge1-dT\exp(-\Omega(\varepsilon\Delta^2m/d))$;
however, setting a constant failure probability leads to $m$ growing, albeit only logarithmically, in $T$.
Instead, by analyzing the worst-case best-arm identification probabilities, we show in Lemma~\ref{lem:entropy} that the expectation of $\hat V_\beta^2$ is bounded by $H_\beta+3\beta\frac{(d/\varepsilon)^{1-\beta}-1}{1-\beta}\exp\left(-\frac{3\varepsilon\Delta^2m}{28d}\right)$ without resorting to $m=\omega_T(1)$.
Assuming $m\ge\frac{75d}{\varepsilon\Delta^2}\log\frac d{\varepsilon\Delta^2}$ is enough~\eqref{eq:entropy-bound} to bound the second term by $\frac{56}{dm}$.
Then the final result (c.f. Thm.~\ref{thm:guaranteed}) bounds the expected task-averaged regret as follows (ignoring terms that become $o_T(1)$ after setting $\rho$ and $k$):\vspace{-.5mm}\looseness-1
\begin{equation}\label{eq:conditional}
	\varepsilon m+\min_{\beta\in[\underline\beta,\overline\beta],\eta>0}\frac{h_\beta(\Delta)}\eta+\frac{\eta d^\beta m}\beta\quad\textrm{for}\quad h_\beta(\Delta)=
	\begin{cases}
		H_\beta+\frac{56}{md} &\textrm{if}\quad m\ge\frac{75d}{\varepsilon\Delta^2}\log\frac d{\varepsilon\Delta^2}\\
		\frac{d^{1-\beta}-1}{1-\beta} &\textrm{otherwise}
	\end{cases}
\end{equation}
If the gap $\Delta$ is known and sufficiently large, then we can set $\varepsilon=\Theta(\frac d{\Delta^2m})$ to obtain an asymptotic task-averaged regret that scales only with the entropy $H_\beta$ and a fast term that is logarithmic in $m$:
\begin{Cor}[c.f. Cor.~\ref{cor:guaranteed}]\label{cor:guaranteed-main}
	Suppose we set the initialization, step-size, and entropy parameter of Tsallis OMD with guaranteed exploration via Algorithm~\ref{alg:combined} as specified in Theorem~\ref{thm:guaranteed}.
    If $[\underline\beta,\overline\beta]=[\frac1{\log d},1]$ and 
    $m\ge\frac{75d}{\Delta^2}\log\frac d{\Delta^2}$, then setting $\varepsilon=\tilde\Theta\left(\frac d{\Delta^2m}\right)$,
    $\rho=\frac1{\sqrt[3]d\sqrt[6]{mT}}$, and $k=\lceil\sqrt[3]{d^2mT}\rceil$ ensures that the expected task-averaged regret is at most
	\begin{equation}
		\min_{\beta\in(0,1]}2\sqrt{H_\beta d^\beta m/\beta}+\tilde\BigO\left(\frac d{\Delta^2}+\frac{d^\frac43m^\frac23}{\sqrt[3]T}+\frac{d^\frac53m^\frac56}{T^\frac23}+\frac{d\Delta^4m^3}T\right)
	\end{equation}
\end{Cor}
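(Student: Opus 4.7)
The plan is to chain the per-task regret decomposition~\eqref{eq:rub-guaranteed} with the structural guarantee of Theorem~\ref{thm:structural-main}, then apply the best-arm-identification-based entropy transfer of Lemma~\ref{lem:entropy}, and finally simplify under the specific parameter choices. First I would observe that~\eqref{eq:rub-guaranteed} decomposes $\E[\text{task-averaged regret}]$ into $\varepsilon m$ plus an average of per-task upper bounds exactly of the form~\eqref{eq:rub} with $g(\beta)=d^\beta/\beta$ and $f(\beta)=0$. Theorem~\ref{thm:structural-main} therefore bounds this average by $\min_{\beta,\eta}\E\hat V_\beta^2/\eta+\eta d^\beta m/\beta$ plus the five additive $\tilde\BigO$ tuning-error terms in~\eqref{eq:structural}, completing the skeleton of the proof.

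Next I would use Lemma~\ref{lem:entropy} to replace $\E\hat V_\beta^2$ with $H_\beta+56/(md)$. The hypothesis $m\ge 75d\log(d/\Delta^2)/\Delta^2$ together with the choice $\varepsilon=\tilde\Theta(d/(\Delta^2m))$ (where the $\tilde\Theta$ absorbs a $\log m$ factor) imply $\varepsilon\Delta^2 m/d=\tilde\Theta(\log m)\ge 75\log(d/(\varepsilon\Delta^2))$, which is exactly the condition invoked in~\eqref{eq:conditional}. Optimizing $\min_\eta(H_\beta+56/(md))/\eta+\eta d^\beta m/\beta=2\sqrt{(H_\beta+56/(md))d^\beta m/\beta}$ and splitting via $\sqrt{a+b}\le\sqrt a+\sqrt b$ gives $2\sqrt{H_\beta d^\beta m/\beta}+2\sqrt{56d^{\beta-1}/\beta}$; on $\beta\in[1/\log d,1]$ the second piece is $\BigO(\sqrt{\log d})$ and is absorbed into the $\tilde\BigO$, while the remaining $\varepsilon m=\tilde\BigO(d/\Delta^2)$ becomes the first claimed error term.

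The bulk of the remaining work, and the main obstacle, is instantiating the five tuning-error terms in~\eqref{eq:structural} using $\rho=1/(\sqrt[3]d\sqrt[6]{mT})$ and $k=\lceil\sqrt[3]{d^2mT}\rceil$ and showing they collapse to the three remaining claimed terms. The key constants are $M=\max_\beta D_\beta\sqrt{g(\beta)m}$ and $S=\max_\beta\|\nabla^2\psi_\beta\|_2$; I would bound $S\le\max_\beta\beta(d/\varepsilon)^{2-\beta}=\tilde\BigO(\Delta^4 m^2)$ directly, while the delicate part is showing $M=\tilde\BigO(d\sqrt m)$ uniformly on $\triangle^{(\varepsilon)}$ across $\beta\in[1/\log d,1]$---one must combine the $\varepsilon$-dependent bound $D_\beta^2\lesssim\beta(d/\varepsilon)^{1-\beta}/(1-\beta)$ with the $\varepsilon$-free simplex bound $D_\beta^2\le(d^{1-\beta}-1)/(1-\beta)$, take their minimum, and balance against $g(\beta)$ to conclude $D_\beta^2\,g(\beta)m=\tilde\BigO(d^2m)$. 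Plugging in yields $M/(\rho\sqrt T)=\tilde\BigO(d^{4/3}m^{2/3}/\sqrt[3]T)$ and $M/(\rho^2T)=\tilde\BigO(d^{5/3}m^{5/6}/T^{2/3})$, matching the second and third claimed terms. The $S/(\eta T)$ term combines with $\eta g(\beta)m$ under the inner $\min_\eta$; the resulting $\sqrt{Sg(\beta)m/T}=\sqrt{d\Delta^4m^3/T}$ is bounded via $2\sqrt x\le 1+x$ by $d\Delta^4m^3/T$ plus an absorbed constant, giving the final term. The $\min\{\rho^2D^2/\eta,\rho M\}$ contribution is dominated by $\rho M=\tilde\BigO(d^{2/3}m^{1/3}/T^{1/6})\le M/(\rho\sqrt T)$, and the discretization error $L_\eta/k$ is controlled by the $\beta$-Lipschitz bound of Lemma~\ref{lem:tsallis-lipschitz} together with the chosen $k$.
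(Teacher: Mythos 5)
Your route matches the paper's own: start from the guaranteed-exploration decomposition~\eqref{eq:rub-guaranteed}, feed the per-task upper bounds into the structural Theorem~\ref{thm:structural} (instantiated as Theorem~\ref{thm:guaranteed}), replace $\E\hat V_\beta^2$ by $H_\beta+56/(dm)$ via Lemma~\ref{lem:entropy} and the condition $\iota_\Delta=1$ triggered by $\varepsilon=\tilde\Theta(d/(\Delta^2 m))$ (the paper's explicit $\varepsilon=\tfrac{75d}{\Delta^2 m}W(m/75)$ choice is exactly what makes the self-referential condition $m\ge\tfrac{75d}{\varepsilon\Delta^2}\log\tfrac{d}{\varepsilon\Delta^2}$ hold), and then plug in the stated $\rho$ and $k$. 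The constants $S=\tilde\Theta(\Delta^4 m^2)$ and $M=\tilde\BigO(d\sqrt m)$ are handled correctly, and pushing $\E$ inside the $\min$ to get $\E\hat V_\beta^2$ is legitimate via $\E\min\le\min\E$.

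Two imprecisions worth flagging, neither fatal. First, you claim $\rho M = \tilde\BigO(d^{2/3}m^{1/3}T^{-1/6})\le M/(\rho\sqrt T)$; this ratio is $\rho^2\sqrt T = d^{-2/3}m^{-1/3}T^{1/6}$, which exceeds $1$ once $T>d^4 m^2$, so the dominance fails asymptotically. You should instead take the other branch of the $\min$: with $D^2\le d/2$ and the relevant $\eta$ giving $1/\eta=\tilde\BigO(dm)$, one has $\rho^2 D^2/\eta = \tilde\BigO(d^{4/3}m^{2/3}/T^{1/3})$, which is exactly the term that survives and is how the paper argues. Second, your treatment of the $S/(\eta T)$ term by rebalancing $\min_\eta$ and then invoking $2\sqrt x\le 1+x$ is a valid overbound but is a detour from the paper, which simply plugs in the fixed $\eta=\sqrt{h_\beta(\Delta)\beta/(d^\beta m)}$ and evaluates $S/(\eta T)=\tilde\BigO(d\Delta^4 m^3/T)$ directly; your route incurs an extra additive constant from the $1$ in $1+x$ that must then be (harmlessly) absorbed.
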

Knowing the gap $\Delta$ is a strong assumption, as ideally we could set $\varepsilon$ without it.
Note that if $\varepsilon=\Omega(\frac 1{m^p})$ for some $p\in(0,1)$ then the condition $m\ge\frac{75d}{\varepsilon\Delta^2}\log\frac d{\varepsilon\Delta^2}$ only fails if $m\le\poly(\frac1\Delta)$, i.e. for gap decreasing in $m$.
We can use this together with the fact that minimizing over $\eta$ and $\beta$ in our bound allows us to replace them with any value, even a gap-dependent one, to derive a gap-{\em independent} setting of $\varepsilon$ that ensures a task-similarity-adaptive bound when $\Delta$ is not too small and falls back to the worst-case optimal guarantee otherwise.
Specifically, for indicator $\iota_\Delta=1_{m\ge\frac{75d}{\varepsilon\Delta^2}\log\frac d{\varepsilon\Delta^2}}$, setting $\eta=\Theta\left(\sqrt{\frac{h_\beta(\Delta)}{d^\beta m/\beta}}\right)$ in \eqref{eq:conditional} and using $\beta=\frac12$ if the condition $\iota_\Delta$ fails yields asymptotic regret at most\looseness-1
\begin{equation}
	\varepsilon m+\min_{\beta\in(0,1]}\BigO\hspace{-1mm}\left(\hspace{-1mm}\iota_\Delta\sr{\tfrac{H_\beta d^\beta m}\beta}+(1\hspace{-.5mm}-\hspace{-.5mm}\iota_\Delta)\sr{dm}\hspace{-1mm}\right)\hspace{-1mm}
	\le\varepsilon m+\tilde\BigO\hspace{-1mm}\left(\hspace{-1mm}\min\left\{\min_{\beta\in(0,1]}\sr{\tfrac{H_\beta d^\beta m}\beta}+\frac d{\Delta\sr\varepsilon},\sr{dm}\right\}\hspace{-1mm}\right)
\end{equation}
Thus setting $\varepsilon=\Theta(\sqrt d/m^\frac23)$ yields the desired dependence on the entropy $H_\beta$ and a fast term in $m$:\looseness-1
\begin{Cor}[c.f. Cor.~\ref{cor:guaranteed-min}]\label{cor:min}
	In the setting of Corollary~\ref{cor:guaranteed-main} but with $m=\Omega(d^\frac34)$ and unknown $\Delta$, using $\varepsilon=\Theta(\sqrt d/m^\frac23)$ ensures expected task-averaged regret at most
	\begin{equation}
		\min\left\{\min_{\beta\in(0,1]}2\sqrt{H_\beta d^\beta m/\beta}+\tilde\BigO\left(\frac{d^\frac34\sqrt[3]m}\Delta\right),8\sqrt{dm}\right\}+\tilde\BigO\left(\frac{d^\frac43m^\frac23}{\sqrt[3]T}+\frac{d^\frac53m^\frac56}{T^\frac23}+\frac{d^2m^\frac73}T\right)
	\end{equation}
\end{Cor}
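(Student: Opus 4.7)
The starting point is the conditional bound \eqref{eq:conditional} derived in Theorem~\ref{thm:guaranteed}. My plan has two tasks: (i)~verify the reformulation $\varepsilon m+\tilde\BigO(\min\{\min_{\beta\in(0,1]} 2\sqrt{H_\beta d^\beta m/\beta}+d/(\Delta\sqrt\varepsilon),\,8\sqrt{dm}\})$ already sketched before the corollary statement, and (ii)~plug in $\varepsilon=\Theta(\sqrt d/m^{2/3})$ and collect the $T$-remainder.

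For (i) I would split on the indicator $\iota_\Delta$. For each fixed $\beta$, setting $\eta=\sqrt{h_\beta(\Delta)\beta/(d^\beta m)}$ in \eqref{eq:conditional} turns the tuning part into $2\sqrt{h_\beta(\Delta)d^\beta m/\beta}$; this is legitimate because the outer $\min_{\eta>0}$ in Theorem~\ref{thm:structural-main} permits any positive value, so the \emph{analyst} may use $\Delta$ even though the EWOO subroutine (which learns the best $\eta$ in hindsight) cannot. In the $\iota_\Delta=1$ branch, $h_\beta=H_\beta+O(1/(md))$ gives $2\sqrt{H_\beta d^\beta m/\beta}$ plus an $O(\sqrt{d^{\beta-1}/\beta})$ term dominated by the $T$-remainder. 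In the $\iota_\Delta=0$ branch, choosing $\beta=\tfrac12$ and using $h_{1/2}=2(\sqrt d-1)$ bounds the tuning term by $4\sqrt{dm}$; moreover, the failure $\iota_\Delta=0$ reads $m<\tfrac{75d}{\varepsilon\Delta^2}\log\tfrac{d}{\varepsilon\Delta^2}$, which rearranges to $\sqrt{dm}\le\tilde\BigO(d/(\Delta\sqrt\varepsilon))$. The regret therefore lies below \emph{both} $8\sqrt{dm}$ and $\min_{\beta\in(0,1]} 2\sqrt{H_\beta d^\beta m/\beta}+\tilde\BigO(d/(\Delta\sqrt\varepsilon))$, hence below their minimum.

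For (ii) plugging in $\varepsilon=\Theta(\sqrt d/m^{2/3})$ yields $\varepsilon m=\sqrt d\,m^{1/3}$, subsumed by $d^{3/4}m^{1/3}/\Delta$ whenever $\Delta\le 1$ (otherwise the $\sqrt{dm}$ branch is the tighter one), and $d/(\Delta\sqrt\varepsilon)=d^{3/4}m^{1/3}/\Delta$ exactly. The hypothesis $m=\Omega(d^{3/4})$ guarantees $\varepsilon\le 1$ so that the projection $\*c_{\varepsilon/(1-\varepsilon)}$ is well-defined. The $T$-remainder inherited from Theorem~\ref{thm:structural-main} has a single $\varepsilon$-dependent contribution through the $(d/\varepsilon)^{2-\underline\beta}/(\eta T)$ term; with $\underline\beta=1/\log d$ and the same $\rho=1/(\sqrt[3]d\sqrt[6]{mT})$, $k=\lceil\sqrt[3]{d^2mT}\rceil$ carried over from Corollary~\ref{cor:guaranteed-main}, direct substitution of the new $\varepsilon$ yields the three advertised terms $\tilde\BigO(d^{4/3}m^{2/3}/T^{1/3}+d^{5/3}m^{5/6}/T^{2/3}+d^2m^{7/3}/T)$.

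The main obstacle is the careful bookkeeping in (ii): swapping the gap-optimized $\varepsilon=\Theta(d/(\Delta^2 m))$ of Corollary~\ref{cor:guaranteed-main} for the gap-\emph{independent} $\varepsilon=\Theta(\sqrt d/m^{2/3})$ forces us to retrace every $\varepsilon$-dependent constant in the compositional bound to confirm it collapses into precisely the stated $T$-polynomial terms, with no residual $\Delta$-dependence leaking through. The case-split in (i) is then elementary once the implication $\iota_\Delta=0\Rightarrow\sqrt{dm}\le\tilde\BigO(d/(\Delta\sqrt\varepsilon))$ is isolated.
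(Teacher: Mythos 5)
Your proposal is correct and follows essentially the same route as the paper: the appendix proof (Cor.~\ref{cor:guaranteed-min}) is simply "apply Theorem~\ref{thm:guaranteed}, substitute $\eta=\sqrt{h_\beta(\Delta)\beta/(d^\beta m)}$, and set $\rho=\sqrt[3]{1/(d\sqrt{mT})}$, $\varepsilon=\sqrt d/m^{2/3}$," and the case-split on $\iota_\Delta$ together with the observation that $\iota_\Delta=0$ implies $\sqrt{dm}=\tilde\BigO(d/(\Delta\sqrt\varepsilon))$ is exactly the argument sketched in the main body immediately before the corollary, which you have filled in faithfully. The only minor looseness is that the $T$-remainder does have a couple of $\varepsilon$-dependent contributions besides $(d/\varepsilon)^{2-\underline\beta}/(\eta T)$ (e.g.\ the $\log(d/\varepsilon)/(\eta k)$ piece of $L_\eta R_\Theta/k$), but these are logarithmic in $\varepsilon$ and absorbed by $\tilde\BigO$, so your accounting is sound.
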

While not logarithmic, the gap-dependent term is still $o(\sqrt m)$, and moreover the asymptotic regret is no worse than the worst-case optimal $\BigO(\sqrt{dm})$.
Note that the latter is only needed if $\Delta=o(1/\sqrt[6]m)$.\looseness-1

The main improvement in this section is in using the entropy of the true optima, which can be much smaller than that of the estimated optima if there are a few good arms but large noise.
Our use of the gap assumption for this seems difficult to avoid for this notion of task-similarity.
We can also compare to Corollary~\ref{cor:mab}~\eqref{eq:full}, which did not require $\Delta>0$ and had no fast terms but had a worse rate in $T$;
in contrast, the $\BigO(\frac1{\sqrt[3]T})$ rates above match that of the closest stochastic bandit result~\citep{azizi2022non}.
As before, for $s\ll d$ ``good'' arms we obtain $\BigO(\sqrt{sm\log d})$ asymptotic regret, assuming the gap is not too small.
Finally, we can also compare to the classic shifting regret bound for Exp3.S~\citep{auer2002exp3}, which translated to task-averaged regret is $\BigO(\sqrt{dm\log(dmT)})$.
This is worse than even running OMD separately on each task, albeit under weaker assumptions (not knowing task boundaries).
It also cannot take advantage of repeated optimal arms, e.g. the case of $s\ll d$ good arms.\looseness-1

\vspace{-1mm}
\subsection{Adapting to entropic task similarity implies robustness to outliers}\label{sec:robust}
\vspace{-1mm}

While we considered mainly the $s$-sparse setting as a way of exemplifying our results and comparing to other work such as~\citet{azizi2022non}, the fact that our approach can adapt to the Tsallis entropy $\min_\beta H_\beta$ of the optimal arms implies meaningful guarantees for any low-entropy distribution over the optimal arms, not just sparsely-supported ones.
One way to illustrate the importance of this is through an analysis of robustness to outlier tasks.
Specifically, suppose that the $s$-sparsity assumption---that optima $\mathring a_t$ lie in a subset of $[T]$ of size $s\ll d$---only holds for all but $\BigO(T^p)$ of the tasks $t\in[T]$, where $p\in[0,1)$.
Then the best we can do using an asymptotic bound of $\tilde\BigO(\sqrt{sm})$---e.g. that of~\citet{azizi2022non} in the stochastic case or from naively applying $\min_{\beta\in(0,1]}H_\beta d^\beta m/\beta\le esm\log d$ to any of our previous results---is to substitute $s+T^p$ instead of $s$, which will only improve over the single-task bound if $d=\omega(T^p)$, i.e. in the regime where the number of arms increases with the number of tasks.\looseness-1

However, our notion of task-similarity allows us to do much better, as we can show (c.f. Prop.~\ref{prp:robust}) that in the same setting $H_\beta=\BigO(s+\frac{d^{1-\beta}}{T^{\beta(1-p)}})$ for any $\beta\in[\frac1{\log d},\frac12]$.
Substituting this result into e.g. Corollary~\ref{cor:min} yields the same asymptotic result of $\BigO(\sqrt{sm\log d})$, although the rate in $T$ is a very slow $\BigO(\sqrt{dm}/T^\frac{1-p}{2\log d})$.
This demonstrates how our entropic notion of task-similarity simultaneously yields strong results in the $s$-sparse setting and is meaningful in more general settings.


\vspace{-1mm}
\section{Bandit linear optimization}\label{sec:blo}
\vspace{-1mm}

Our last application is bandit linear optimization, in which at task $t$ round $i$ we play $\*x_{t,i}\in\K$ in some convex $\K\subset\R^d$ and observe loss $\langle\ell_{t,i},\*x_{t,i}\rangle\in[-1,1]$.
We will again use a variant of mirror descent, using a {\bf self-concordant barrier} for $\psi$ and the specialized loss estimators of \citet[Alg.~1]{abernethy2008competing}.
More information on such regularizers can be found in the literature on interior point methods~\citep{nesterov1994ipm}.
We pick this class of algorithms because of their optimal dependence on the number of rounds and their applicability to any convex domain $\K$ via specific barriers $\psi$, which will yield interesting notions of task-similarity.
Our ability to handle non-smooth regularizers via the structural result (Thm.~\ref{thm:structural-main}) is even more important here, as barriers are infinite at the boundaries.
Indeed, we will {\em not} learn a $\beta$ parameterizing the regularizer and instead focus on tuning a boundary offset $\varepsilon>0$.
Here we make use of notation from Section~\ref{sec:setup}, where $\*c_\varepsilon$ maps points in $\K$ to a subset $\K_\varepsilon$ defined by the Minkowski function (c.f. Def.~\ref{def:minkowski}) centered at $\*x_1=\argmin_{\*x\in\K}\psi(\*x)$.\looseness-1

From \citet{abernethy2008competing} we have an upper bound on the expected task-averaged regret of their algorithm run from initializations $\*x_{t,1}\in\K^\circ$ with step-sizes $\eta_t>0$ and offsets $\varepsilon_t>0$:
\begin{equation}
	\E\frac1T\sum_{t=1}^T\sum_{i=1}^m\langle\ell_{t,i},\*x_{t,i}-\*{\mathring x}_t\rangle
	\le\frac1T\sum_{t=1}^T\frac{\E\Breg(\*c_{\varepsilon_t}(\*{\hat x}_t)||\*x_{t,1})}{\eta_t}+(32\eta_t d^2+\varepsilon_t)m
\end{equation}
We can show~\eqref{eq:Deps} that $D_\varepsilon^2
=\max_{\*x,\*y\in\K_\varepsilon}\Breg(\*x||\*y)
\le\frac{9\nu^\frac32 K\sqrt{S_1}}\varepsilon$, where $\nu$ is the self-concordance constant of $\psi$ and $S_1=\|\nabla^2\psi(\*x_1)\|_2$ is the spectral norm of its Hessian at the center $\*x_1$ of $\K$.
Restricting to tuning $\varepsilon\in[\frac1m,1]$---which is enough to obtain constant task-averaged regret above if the estimated optima $\*{\hat x}_t$ are identical---we can now apply Algorithm~\ref{alg:combined} via the following instantiation:\vspace{-.2mm}
\begin{align}\label{eq:blo-algo}
\begin{split}
1.&\textrm{ sample $\varepsilon_t$ via the MW distribution $\propto\exp(\*w_t)$ over the discretization $\Theta_k$ of $[\tfrac1m,1]$}\\
2.&\textrm{ run $\OMD_{\eta_t,\varepsilon_t}$ using the initialization $\*x_{t,1}
	=\tfrac1{t-1}\sum\nolimits_{s<t}\*c_{\varepsilon_t}(\hat{\*x}_t)
	=\*x_1+\tfrac{\sum\nolimits_{s<t}\hat{\*x}_t-\*x_1}{(1+\varepsilon_t)(t-1)}$ (FTL)}\\
3.&\textrm{ update EWOO at each $\varepsilon\in\Theta_k$ with loss $\tfrac{\Breg(\*c_\varepsilon(\*{\hat x}_t)||\*x_{t,1})+\rho^2D_\varepsilon^2}\eta+32\eta d^2$ for $D_\varepsilon^2=\tfrac{9\nu^\frac32K\sqrt{S_1}}\varepsilon$}\\
4.&\textrm{ update $\*p_{t+1}$ using multiplicative weights with expert losses $\tfrac{\Breg(\*c_\varepsilon(\*{\hat x}_t)||\*x_{t,1})}\eta+\varepsilon m$}
\end{split}
\end{align}
Note the similarity to the MAB case~\eqref{eq:mab-algo}, with the difference being the upper bound passed to EWOO and MW.
Our structural result bounds the expected task-averaged regret as follows (c.f. Thm.~\ref{thm:blo}):\looseness-1
\begin{equation}\vspace{-.2mm}
    \E\min_{\varepsilon\in[\frac1m,1],\eta>0}\frac{\hat V_\varepsilon^2}\eta+(32\eta d^2+\varepsilon)m+\tilde\BigO\left(\frac{\frac{m^2}T\hspace{-.75mm}+\hspace{-.75mm}\frac1k}\eta\hspace{-.75mm}+\hspace{-.75mm}\frac mk\hspace{-.75mm}+\hspace{-.75mm}m\min\left\{\frac{\rho^2}\eta,d\rho\right\}\hspace{-.75mm}+\hspace{-.75mm}\frac{dm}\rho\sr{\frac{\log k}T}\hspace{-.75mm}+\hspace{-.75mm}\frac{dm}{\rho^2T}\right)
\end{equation}
For $\rho=o_T(1)$ and $k=\omega_T(1)$ this becomes $o_T(\poly(m))+\E\min_{\varepsilon\in[\frac1m,1],\eta>0}\frac{\hat V_\varepsilon^2}\eta+32\eta d^2m+\varepsilon m$, where $\hat V_\varepsilon^2=\frac1T\sum_{t=1}^T\psi(\*c_\varepsilon(\*{\hat x}_t)-\psi(\*c_\varepsilon(\*{\hat{\bar x}}_t)$.
Then by tuning $\eta$ we get an asymptotic ($T\to\infty$) regret of $4d\hat V_\varepsilon\sqrt{2m}+\varepsilon m$ for any $\varepsilon\in[\frac1m,1]$.
Our analysis removes the explicit dependence on $\sqrt\nu$ that appears in the single-task regret~\citep{abernethy2008competing};
as an example, $\nu$ equals the number of inequalities defining a polytope $\K$, as in the bandit shortest-path application below.

The remaining challenge is to interpret $\hat V_\varepsilon^2$, which as we did for MAB we do via specific examples, in this case concrete action domains $\K$.
Our first example is for BLO over the unit sphere $\K=\{\*x\in\R^d:\|\*x\|_2\le1\}$ using the appropriate log-barrier regularizer $\psi(\*x)=-\log(1-\|\*x\|_2^2)$:
\begin{Cor}[c.f. Cor.~\ref{cor:sphere}]\label{cor:sphere-main}
	For BLO on the sphere,
	Algorithm~\ref{alg:combined} has expected task-averaged regret\vspace{-.5mm}\looseness-1
	\begin{equation}
	\tilde\BigO\left(\frac{dm^\frac32}{T^\frac34}+\frac{dm}{\sqrt[4]T}\right)+
	\min_{\varepsilon\in[\frac1m, 1]}4d\sqrt{2m\log\left(1+\frac{1-\E\|\*{\hat{\bar x}}\|_2^2}{2\varepsilon+\varepsilon^2}\right)}+\varepsilon m
	\end{equation}
\end{Cor}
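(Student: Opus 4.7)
The proof is a direct instantiation of the general BLO regret bound from Theorem~\ref{thm:blo} on the unit ball with log-barrier $\psi(\*x)=-\log(1-\|\*x\|_2^2)$, interpreting the task-similarity $\hat V_\varepsilon^2$ in closed form.

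First, the minimizer of $\psi$ over $\K$ is $\*x_1=\*0$, so the projection collapses to $\*c_\varepsilon(\*x)=\*x/(1+\varepsilon)$ and hence $\psi(\*c_\varepsilon(\*x))=-\log(1-\|\*x\|_2^2/(1+\varepsilon)^2)$. Using the fact that the estimated optima lie in the unit ball, i.e.\ $\|\*{\hat x}_t\|_2\le1$, I would obtain the pointwise upper bound $\psi(\*c_\varepsilon(\*{\hat x}_t))\le\log\tfrac{(1+\varepsilon)^2}{2\varepsilon+\varepsilon^2}$; averaging over $t$ and subtracting $\psi(\*c_\varepsilon(\*{\hat{\bar x}}))$ then telescopes the two logarithms into
$$\hat V_\varepsilon^2\le\log\!\left(1+\frac{1-\|\*{\hat{\bar x}}\|_2^2}{2\varepsilon+\varepsilon^2}\right).$$

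Next, the map $y\mapsto\log(1+c(1-y))$ is concave on $[0,1]$ for every $c>0$ (its second derivative is $-c^2/(1+c-cy)^2<0$), so Jensen's inequality applied at $y=\|\*{\hat{\bar x}}\|_2^2$ yields $\E\hat V_\varepsilon^2\le\log\!\big(1+(1-\E\|\*{\hat{\bar x}}\|_2^2)/(2\varepsilon+\varepsilon^2)\big)$. I would then push the expectation inside the minimum in the Theorem~\ref{thm:blo} bound via $\E\min\le\min\E$, substitute this estimate, and tune $\eta>0$ by AM-GM to balance the $\hat V_\varepsilon^2/\eta$ and $32\eta d^2m$ terms, producing the stated leading term of order $d\sqrt{m\log(\cdots)}+\varepsilon m$.

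Finally, I would track the residual $\tilde\BigO(\cdot)$ terms from Theorem~\ref{thm:blo} at the optimum $\eta^\ast=\tilde\Theta(1/(d\sqrt m))$ and an appropriate choice of $\rho$ and $k$ (roughly $\rho=\Theta(T^{-1/4})$ with $k$ polynomial in $T$), so that these collapse to the claimed $\tilde\BigO(dm^{3/2}/T^{3/4}+dm/\sqrt[4]T)$. The main conceptual content lies in the telescoping and Jensen steps that replace the per-task $\*{\hat x}_t$ with the averaged norm $\E\|\*{\hat{\bar x}}\|_2^2$, and in verifying that the constants $D_\varepsilon^2$ and $S_1$ from the spherical log-barrier contribute only to sub-leading error; everything else is routine substitution into the already-proved structural result.
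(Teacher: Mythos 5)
Your proposal is essentially correct and follows the same route as the paper: instantiate Theorem~\ref{thm:blo} with the spherical log-barrier, evaluate $\hat V_\varepsilon^2$ in closed form, apply Jensen, and then set $\eta$, $\rho$, and $k$. Two small departures in form. First, the paper uses that estimated optima of a nondegenerate linear loss over the ball lie on the boundary, so $\|\*{\hat x}_t\|_2=1$ exactly and the telescoping yields an \emph{identity} $\hat V_\varepsilon^2=\log\bigl(1+\frac{1-\|\*{\hat{\bar x}}\|_2^2}{2\varepsilon+\varepsilon^2}\bigr)$; your inequality version from $\|\*{\hat x}_t\|_2\le1$ gives the same upper bound, so nothing is lost. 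Second, you apply Jensen to $\E\hat V_\varepsilon^2$ (using concavity of $y\mapsto\log(1+c(1-y))$) and then must apply a second concavity (of $\sqrt{\cdot}$) when passing through AM-GM; the paper applies Jensen once, directly to $\E\hat V_\varepsilon$, via concavity of $y\mapsto\sqrt{\log(1+c(1-y))}$. Both are valid.

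One technical point worth making explicit in your final step: if you pick $\eta$ purely by AM-GM on $\E\hat V_\varepsilon^2/\eta+32\eta d^2m$, the balancing $\eta$ can degenerate to $0$ when the estimated optima nearly coincide (so $\E\hat V_\varepsilon^2\to0$), which makes the residual terms from Theorem~\ref{thm:blo}---e.g.\ $\frac{m^2}{\eta T}$, $\frac{1}{\eta k}$, and $\frac{m\rho^2}{\eta}$---blow up rather than collapse into the claimed $\tilde\BigO$. The paper avoids this by choosing $\eta=\frac{\hat V_\varepsilon}{4\sqrt{2dm}}+\frac{\sqrt m}{d\sqrt[4]T}$, i.e.\ adding a deterministic floor of order $T^{-1/4}$ to the balancing scale; your plan should include a similar floor before pushing through the error terms. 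Constants aside (your AM-GM balance is a factor of two larger than the paper's stated constant, which appears to be a minor inconsistency in the paper itself between the chosen $\eta$ and the displayed leading coefficient), the argument is sound and matches the paper's.
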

The bound above is decreasing in $\E\|\*{\hat{\bar x}}\|_2^2$, the expected squared norm of the average of the estimated optima $\*{\hat x}_t$.
We thus say that
{\em bandit linear optimization tasks over the sphere are similar if the norm of the empirical mean of their (estimated) optima is large}.
This makes intuitive sense:
if the tasks' optima are uniformly distributed, we should expect $\E\|\*{\hat{\bar x}}\|_2^2$ to be small, even decreasing in $d$.
On the other hand, in the degenerate case where the estimated optima $\*{\hat x}_t$ are the same across all tasks $t\in[T]$, we have $\E\|\*{\hat{\bar x}}\|_2^2=1$, so the asymptotic task-averaged regret is 1 because we can use $\varepsilon=\frac1m$.
Perhaps slightly more realistically, if it is $\frac1{m^p}$-away from 1 for some power $p\ge\frac12$ then setting $\varepsilon=\frac1{\sqrt m}$ can remove the logarithmic dependence on $m$.
These two regimes illustrate the importance of tuning $\varepsilon$.\looseness-1

As a last application, we apply our meta-BLO result to the shortest-path problem in online optimization \citep{takimoto2003path,kalai2005efficient}.
In its bandit variant \citep{awerbuch2004adaptive,dani2008price}, at each step $i=1,\dots,m$ the player must choose a path $p_i$ from a fixed source $u\in V$ to a fixed sink $v\in V$ in a directed graph $G(V,E)$.
At the same time the adversary chooses edge-weights $\ell_i\in\R^{|E|}$ and the player suffers the sum $\sum_{e\in p_t}\ell_i(e)$ of the weights in their chosen path $p_t$.
This can be relaxed as BLO over vectors $\*x$ in a set $\K\subset[0,1]^{|E|}$ defined by a set $\C$ of $\BigO(|E|)$ linear constraints $(\*a,b)$ $\langle\*a,\*x\rangle\le b$ enforcing flows from $u$ to $v$;
$u$ to $v$ paths can be sampled from any $\*x\in\K$ in an unbiased manner \citep[Proposition~1]{abernethy2008competing}.
On a single-instance, applying the BLO method of \citet{abernethy2008competing} ensures $\BigO(|E|^\frac32\sqrt m)$ regret on this problem.\looseness-1

In the multi-instance setting, comprising a sequence $t=1,\dots,T$ of shortest path instances with $m$ adversarial edge-weight vectors $\ell_{t,i}$ each, we can attempt to achieve better performance by tuning the same method across instances.
Notably, we can view this as the problem of learning predictions~\citep{khodak2022awp} in the algorithms with predictions paradigm from beyond-worst-case analysis~\cite{mitzenmacher2021awp}, with the OMD initialization on each instance being effectively a prediction of its optimal path. 
Our meta-learner then has the following average performance across bandit shortest-path instances:\looseness-1
\begin{Cor}[c.f. Cor.~\ref{cor:path}]\label{cor:path-main}
	For multi-task bandit online shortest path, Algorithm~\ref{alg:combined} with regularizer $\psi(\*x)=-\sum_{\*a,b\in\C}\log(b-\langle\*a,\*x\rangle)$ attains the following expected average regret across instances\vspace{-.5mm}\looseness-1
	\begin{equation}
		\tilde\BigO\left(\frac{|E|^4m^\frac32}{T^\frac34}+\frac{|E|^\frac52m^\frac56}{\sqrt[4]T}\right)+\min_{\varepsilon\in[\frac1m,1]}4|E|\E\sr{2m\sum_{\*a,b\in\C}\log\left(\frac{\frac1T\sum_{t=1}^Tb-\langle\*a,\*c_\varepsilon(\*{\hat x}_t)\rangle}{\sqrt[T]{\prod_{t=1}^Tb-\langle\*a,\*c_\varepsilon(\*{\hat x}_t)\rangle}}\right)}+\varepsilon m
	\end{equation}
\end{Cor}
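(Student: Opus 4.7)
The strategy is to cast the online shortest-path problem as a BLO instance on the flow polytope $\K\subset[0,1]^{|E|}$ and then invoke the general BLO meta-learning guarantee (Theorem~\ref{thm:blo}) with the specific $|\C|$-self-concordant log-barrier $\psi$, before specializing the abstract task-similarity $\hat V_\varepsilon^2$ into the concrete AM/GM-gap form appearing in the statement.

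The first step is a direct instantiation. Since $\K$ is cut out by $|\C|=\BigO(|E|)$ linear inequalities, the log-barrier $\psi(\*x)=-\sum_{\*a,b\in\C}\log(b-\langle\*a,\*x\rangle)$ is $\nu$-self-concordant with $\nu\le|\C|=\BigO(|E|)$ by the standard interior-point calculus~\citep{nesterov1994ipm}. Combining this with the crude $K=\max_{\*x\in\K}\|\*x-\*x_1\|_2=\BigO(\sqrt{|E|})$ coming from $\K\subset[0,1]^{|E|}$ and a corresponding bound on $S_1=\|\nabla^2\psi(\*x_1)\|_2$ at the analytic center $\*x_1=\argmin_{\*x\in\K}\psi(\*x)$ gives $D_\varepsilon^2\le\frac{9\nu^{3/2}K\sqrt{S_1}}\varepsilon=\tilde\BigO(\poly(|E|)/\varepsilon)$. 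Plugging these bounds into the BLO meta-regret yielded by Theorem~\ref{thm:blo}, using the instantiation \eqref{eq:blo-algo} of Algorithm~\ref{alg:combined} with $d=|E|$ and tuning $\rho$ and $k$ as in the sphere corollary, produces the meta-overhead $\tilde\BigO\bigl(|E|^4m^{3/2}/T^{3/4}+|E|^{5/2}m^{5/6}/T^{1/4}\bigr)$ together with the asymptotic term $4|E|\,\E\sqrt{2m\,\hat V_\varepsilon^2}+\varepsilon m$ inherited from the generic BLO analysis (after pulling $\min_\varepsilon$ outside of $\E$).

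The second step is to rewrite $\hat V_\varepsilon^2=\frac1T\sum_{t=1}^T\psi(\*c_\varepsilon(\*{\hat x}_t))-\psi(\*c_\varepsilon(\*{\hat{\bar x}}))$ in closed form. The key observation is that $\*c_\varepsilon(\*x)=\*x_1+\frac{\*x-\*x_1}{1+\varepsilon}$ is affine, so $\*c_\varepsilon(\*{\hat{\bar x}})=\frac1T\sum_t\*c_\varepsilon(\*{\hat x}_t)$ and hence for each constraint $(\*a,b)\in\C$ the slack satisfies $b-\langle\*a,\*c_\varepsilon(\*{\hat{\bar x}})\rangle=\frac1T\sum_{t=1}^T\bigl(b-\langle\*a,\*c_\varepsilon(\*{\hat x}_t)\rangle\bigr)$. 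Substituting the definition of $\psi$ then collapses $\hat V_\varepsilon^2$ into the per-constraint log AM/GM gap
\[
\hat V_\varepsilon^2=\sum_{\*a,b\in\C}\log\!\left(\frac{\frac1T\sum_{t=1}^Tb-\langle\*a,\*c_\varepsilon(\*{\hat x}_t)\rangle}{\sqrt[T]{\prod_{t=1}^Tb-\langle\*a,\*c_\varepsilon(\*{\hat x}_t)\rangle}}\right),
\]
which is non-negative by AM--GM and is exactly the quantity appearing under the square root in the claimed bound. Combining with the meta-overhead from the first step yields the corollary.

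The main obstacle is bookkeeping through the Theorem~\ref{thm:blo} constants to recover the specific $|E|$ exponents: the $|E|^4$ factor in the $T^{-3/4}$ term comes from $\nu^{3/2}\sqrt{S_1}K$ entering $D_\varepsilon^2$ together with the $d^2$ coming from the $32\eta d^2m$ slope after optimizing over $\eta$, while the $|E|^{5/2}m^{5/6}$ factor in the $T^{-1/4}$ term requires separately balancing the $dm/\rho$ and $dm/(\rho^2 T)$ error pieces through an appropriate choice of $\rho$. The AM--GM reinterpretation of $\hat V_\varepsilon^2$ is a routine but illuminating manipulation once the affine invariance of $\*c_\varepsilon$ under averaging is noted.
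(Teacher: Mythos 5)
Your proposal is correct and follows essentially the same route as the paper: instantiate Theorem~\ref{thm:blo} with $d=|E|$, $\nu=\BigO(|E|)$, $K=\BigO(\sqrt{|E|})$, and a bound on $S_1$ (the paper uses $S_1=\BigO(|E|^3)$), then exploit the affinity of $\*c_\varepsilon$ under averaging to collapse $\hat V_\varepsilon^2$ into the per-constraint log-AM/GM gap. The only minor gaps are that you leave $S_1$ unspecified and gesture at the $\rho,\eta$ tuning rather than committing to choices (the paper uses $\rho=\sqrt[4]{|E|/T}\,m^{1/6}$ and $\eta=\hat V_\varepsilon/(4\sqrt{2|E|m})+|E|^2\sqrt m/\sqrt[4]T$), but you correctly identify which error terms must be balanced to produce the stated exponents.
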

Here the asymptotic regret scales with the sum across all constraints $\*a,b\in\C$ of the log of the ratio between the arithmetic and geometric means across tasks of the distances $b-\langle\*a,\*c_\varepsilon(\*{\hat x}_t)\rangle$ from the estimated optimum flow $\*c_\varepsilon(\*{\hat x}_t)$ to the constraint boundary.
As it is difficult to separate the effect of the offset $\varepsilon$, we do not state an explicit task-similarity measure like in our previous settings.
Nevertheless, since the arithmetic and geometric means are equal exactly when all entries are equal---and otherwise the former is larger---the bound does show that regret is small when the estimated optimal flows $\*{\hat x}_t$ for each task are at similar distances from the constraints, i.e. the boundaries of the polytope.
Indeed, just as on the sphere, if the estimated optima are all the same then setting $\varepsilon=\frac1m$ again yields constant averaged regret.\looseness-1

\vspace{-1mm}
\section{Conclusion and limitations}
\vspace{-1mm}

We develop and apply a meta-algorithm for learning to initialize and tune bandit algorithms, obtaining task-averaged regret guarantees for both multi-armed and linear bandits that depend on natural, setting-specific notions of task similarity. 
For MAB, we meta-learn the initialization, step-size, and entropy parameter of Tsallis-entropic OMD and show good performance if the entropy of the optimal arms is small.
For BLO, we use OMD with self-concordant regularizers and meta-learn the initialization, step-size, and boundary-offset, yielding interesting domain-specific task-similarity measures.
Some natural directions for future work involve overcoming some limitations of our results:
can we adapt to a notion of task-similarity that depends on the true optima without assuming a gap for MAB, or at all for BLO?
Alternatively, can we design meta-learning algorithms that adapt to both stochastic and adversarial bandits, i.e. a ``best-of-both-worlds'' guarantee?
Beyond this, one could explore other partial information settings, such as contextual bandits or bandit convex optimization.\looseness-1

\section*{Acknowledgments}
We thank our anonymous reviewers for helpful suggestions, especially concerning the analysis of robustness to outliers.
This material is based on work supported in part by the National Science Foundation under grants CCF-1910321, FAI-1939606, IIS-1901403, SCC-1952085, and SES-1919453; the Defense Advanced Research Projects Agency under cooperative agreement HR00112020003; a Simons Investigator Award; an AWS Machine Learning Research Award; an Amazon Research Award; a Bloomberg Research Grant; a Microsoft Research Faculty Fellowship; a Google Faculty Research Award; a J.P. Morgan Faculty Award; a Facebook Research Award; a Mozilla Research Grant; a Facebook PhD Fellowship; and an NDSEG Fellowship. 
KYL is supported by the Israel Science Foundation (grant No. 447/20) and the Technion Center for
Machine Learning and Intelligent Systems (MLIS).
The work of RM was partially supported by the Israel Science Foundation grant number 1693/22.\looseness-1

\bibliographystyle{plainnat}
\bibliography{refs}

\appendix


\newpage
\section{Structural results}

\subsection{Properties of the Bregman divergence}

\begin{Lem}\label{lem:ftl}
	Let $\psi:\C\mapsto\R$ be a strictly convex function with $\max_{\*x\in\C}\|\nabla^2\psi(\*x)\|_2\le S$ over a convex set $\C\subset\R^d$ over size $\max_{\*x\in\C}\|\*x\|_2\le K$, and let $\Breg(\cdot||\cdot)$ be the Bregman divergence generated by $\psi$.
	Then for any points $\*x_1,\dots,\*x_T\in\C$ the actions $\*y_1=\argmin_{\*x\in\C}\psi(\*x)$ and $\*y_t=\frac1{t-1}\sum_{s<t}\*x_s$ have regret
	\begin{equation}
	\sum_{t=1}^T\Breg(\*x_t||\*y_t)-\Breg(\*x_t||\*y_{T+1})
	\le\sum_{t=1}^T\frac{8SK^2}{2t-1}
	\le8SK^2(1+\log T)
	\end{equation}
\end{Lem}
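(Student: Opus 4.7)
The plan is to apply the standard Follow-the-Leader / Be-the-Leader reduction, exploiting that for Bregman-divergence losses the FTL iterate has a particularly simple closed form. Observe first that $\*y_{t+1}=\frac1t\sum_{s\le t}\*x_s$ is exactly $\argmin_{\*y\in\C}\sum_{s\le t}\Breg(\*x_s||\*y)$: the gradient of this sum in $\*y$ equals $\nabla^2\psi(\*y)\sum_{s\le t}(\*y-\*x_s)$, which vanishes (by strict convexity of $\psi$) precisely when $\*y$ is the average. Hence the FTL-BTL lemma yields
\begin{equation*}
\sum_{t=1}^T\big[\Breg(\*x_t||\*y_t)-\Breg(\*x_t||\*y_{T+1})\big]
\le\sum_{t=1}^T\big[\Breg(\*x_t||\*y_t)-\Breg(\*x_t||\*y_{t+1})\big],
\end{equation*}
reducing the task to bounding each per-round stability gap by $\frac{8SK^2}{2t-1}$.

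For the per-round bound, I would rewrite the left side via the three-point identity $\Breg(\*x_t||\*y_t)-\Breg(\*x_t||\*y_{t+1})=\Breg(\*y_{t+1}||\*y_t)+\langle\nabla\psi(\*y_{t+1})-\nabla\psi(\*y_t),\*x_t-\*y_{t+1}\rangle$ (verified by straightforward expansion of each Bregman divergence). Taylor's theorem and $\|\nabla^2\psi\|_2\le S$ then give $\Breg(\*y_{t+1}||\*y_t)\le\frac S2\|\*y_{t+1}-\*y_t\|_2^2$, while the mean-value theorem and Cauchy--Schwarz yield $|\langle\nabla\psi(\*y_{t+1})-\nabla\psi(\*y_t),\*x_t-\*y_{t+1}\rangle|\le S\|\*y_{t+1}-\*y_t\|_2\|\*x_t-\*y_{t+1}\|_2$.

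The decisive step is to substitute the explicit FTL recursion $\*y_{t+1}-\*y_t=\frac1t(\*x_t-\*y_t)$ and $\*x_t-\*y_{t+1}=\frac{t-1}t(\*x_t-\*y_t)$, combined with the diameter bound $\|\*x_t-\*y_t\|_2\le2K$. Collecting terms gives a per-round bound of $\frac{S\|\*x_t-\*y_t\|_2^2(2t-1)}{2t^2}\le\frac{2SK^2(2t-1)}{t^2}$, which one checks is at most $\frac{8SK^2}{2t-1}$ since the comparison reduces to $(2t-1)^2\le4t^2$, i.e.\ $1\le4t$. Summing over $t$ and using $\sum_{t=1}^T\frac1{2t-1}\le\sum_{t=1}^T\frac1t\le1+\log T$ yields the final logarithmic bound.

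The main obstacle is not conceptual but bookkeeping: each individual step is standard, yet landing \emph{exactly} on the stated constant $\frac{8SK^2}{2t-1}$ requires keeping the $\frac1{t^2}$ quadratic term and the $\frac{t-1}{t^2}$ cross term together rather than bounding each by the coarser $\frac{2K}t$ separately (which would lose constants). A slightly cleaner alternative would be a direct second-order Taylor expansion of $\psi$ around $\*y_t$ inside both copies of $\Breg(\*x_t||\cdot)$; the first-order terms telescope thanks to the FTL recursion, leaving the same quadratic form plus cross term to bound.
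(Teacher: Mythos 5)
Your proof is correct in its arithmetic and lands exactly on the stated per-round constant, but it takes a genuinely different route from the paper. The paper's proof is a one-liner: it computes $\nabla_{\*y}\Breg(\*x||\*y)=\nabla^2\psi(\*y)(\*y-\*x)$, concludes that each loss $\Breg(\*x_t||\cdot)$ is $2SK$-Lipschitz on $\C$, and then invokes \citet[Prop.~B.1]{khodak2019adaptive} (an existing FTL-on-Bregman-regret lemma, noting that its strong-convexity assumption on $\psi$ can be relaxed to strict convexity). You instead reconstruct that lemma from scratch: BTL reduction, three-point identity, Taylor/mean-value bounds, and substitution of the explicit FTL recursion $\*y_{t+1}-\*y_t=\frac1t(\*x_t-\*y_t)$, $\*x_t-\*y_{t+1}=\frac{t-1}t(\*x_t-\*y_t)$. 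What the paper's route buys is brevity; what yours buys is a self-contained, citation-free argument that transparently exhibits where the $2t-1$ in the denominator comes from. I checked your bookkeeping: the per-round total is $\frac{S\|\*x_t-\*y_t\|^2(2t-1)}{2t^2}\le\frac{2SK^2(2t-1)}{t^2}\le\frac{8SK^2}{2t-1}$, where the last step is $(2t-1)^2\le4t^2$; all correct.

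One small gap worth fixing: to justify the BTL step you need $\*y_{t+1}=\bar{\*x}_t$ to be the \emph{global} minimizer of $\sum_{s\le t}\Breg(\*x_s||\cdot)$ over $\C$, but your argument only shows it is a stationary point via $\nabla_{\*y}\sum_{s\le t}\Breg(\*x_s||\*y)=\nabla^2\psi(\*y)\sum_{s\le t}(\*y-\*x_s)=0$. Since these Bregman divergences can be non-convex in the second argument (the paper stresses this is a feature of the setting), stationarity alone does not yield optimality. The fix is the standard identity
\begin{equation*}
\sum_{s\le t}\Breg(\*x_s||\*y)=\sum_{s\le t}\Breg(\*x_s||\bar{\*x}_t)+t\,\Breg(\bar{\*x}_t||\*y)\ge\sum_{s\le t}\Breg(\*x_s||\bar{\*x}_t),
\end{equation*}
which holds for any $\*y$ and makes $\bar{\*x}_t$ the unique global minimizer by strict convexity of $\psi$. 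This is exactly the paper's Claim~\ref{clm:bregman} plus nonnegativity of $\Breg$, so you can simply cite it and your argument is airtight.
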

\begin{proof}
	Note that
	\begin{equation}
	\nabla_{\*y}\Breg(\*x||\*y)
	=-\nabla\psi(\*y)-\nabla_{\*y}\langle\nabla\psi(\*y),\*x\rangle+\nabla_{\*y}\langle\nabla\psi(\*y),\*y\rangle
	=\diag(\nabla^2\psi(\*y))(\*y-\*x)
	\end{equation}
	so $\Breg(\*x_t||\*y)$ is $2SK$-Lipschitz w.r.t. the Euclidean norm.
	Applying~\citet[Prop.~B.1]{khodak2019adaptive} yields the result (note that its assumption of strong convexity of the regularizer can be replaced with strict convexity without changing the proof or result).
\end{proof}

\begin{Clm}\label{clm:bregman}
	Let $\psi:\K\mapsto\R$ be a strictly-convex function with Bregman divergence $\Breg(\cdot||\cdot)$ over a convex set $\K\subset\R^d$ containing points $\*x_1,\dots,\*x_T$.
	Then their mean $\bar{\*x}=\frac1T\sum_{t=1}^T\*x_t$ satisfies
	\begin{equation}
	\sum_{t=1}^T\Breg(\*x_t||\bar{\*x})
	=\sum_{t=1}^T\psi(\*x_t)-\psi(\bar{\*x})
	\end{equation}
\end{Clm}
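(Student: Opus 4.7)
The plan is to prove this by directly expanding the definition of the Bregman divergence and exploiting the fact that $\bar{\*x}$ is the arithmetic mean of the $\*x_t$, which causes the linear term in the Bregman divergence to telescope to zero.

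First I would write out, for each $t$, the definition
\begin{equation}
\Breg(\*x_t \| \bar{\*x}) = \psi(\*x_t) - \psi(\bar{\*x}) - \langle \nabla\psi(\bar{\*x}), \*x_t - \bar{\*x}\rangle.
\end{equation}
Then I would sum this identity over $t=1,\dots,T$ and pull the gradient of $\psi$ at $\bar{\*x}$ out of the inner-product sum using linearity, obtaining
\begin{equation}
\sum_{t=1}^T \Breg(\*x_t \| \bar{\*x}) = \sum_{t=1}^T \bigl(\psi(\*x_t) - \psi(\bar{\*x})\bigr) - \Bigl\langle \nabla\psi(\bar{\*x}), \sum_{t=1}^T (\*x_t - \bar{\*x})\Bigr\rangle.
\end{equation}

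The key observation is that by the definition $\bar{\*x} = \frac{1}{T}\sum_{t=1}^T \*x_t$, we have $\sum_{t=1}^T (\*x_t - \bar{\*x}) = T\bar{\*x} - T\bar{\*x} = \*0$, so the inner-product term vanishes, yielding the claimed identity. I should briefly note that $\bar{\*x} \in \K$ by convexity of $\K$, so $\psi(\bar{\*x})$ and $\nabla\psi(\bar{\*x})$ are well-defined.

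There is no real obstacle here: the claim is essentially a one-line algebraic identity whose only content is that the first-order term in the Bregman divergence averages out when the center is the mean. The same computation also makes clear (though not asked) that $\bar{\*x}$ is the minimizer of $\*y \mapsto \sum_t \Breg(\*x_t \| \*y)$ among all $\*y$, since the first-order optimality condition $\sum_t (\*x_t - \*y) = \*0$ holds exactly at $\*y = \bar{\*x}$, which is why this quantity naturally appears as the optimum of the FTL update in Algorithm~\ref{alg:combined}.
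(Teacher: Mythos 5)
Your proof is correct and essentially identical to the paper's: both expand the definition of $\Breg$, sum over $t$, and observe that the linear term $\sum_{t}\langle\nabla\psi(\bar{\*x}),\*x_t-\bar{\*x}\rangle$ vanishes by the definition of the arithmetic mean. The extra remark about $\bar{\*x}$ being the minimizer is a nice observation and matches how the claim is used in the paper, but it is not part of the proof itself.
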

\begin{proof}
	\begin{align}
	\begin{split}
	\sum_{t=1}^T\Breg(\*x_t||\bar{\*x})
	&=\sum_{t=1}^T\psi(\*x_t)-\psi(\bar{\*x})-\langle\nabla\psi(\bar{\*x}),\*x_t-\bar{\*x}\rangle\\
	&=\sum_{t=1}^T\psi(\*x_t)-\psi(\bar{\*x})-\langle\nabla\psi(\bar{\*x}),\sum_{t=1}^T\*x_t-\bar{\*x}\rangle=\sum_{t=1}^T\psi(\*x_t)-\psi(\bar{\*x})
	\end{split}
	\end{align}
\end{proof}

\subsection{Tuning the step-size}

\begin{Lem}\label{lem:ewoo}
	Let $\ell_1,\dots,\ell_T:\R_{>0}\mapsto\R_{>0}$ be a sequence of functions of form $\ell_t(x)=\frac{B_t^2}x+G^2x$ for adversarially chosen $B_t\in[0,D]$ and some $G>0$.
	Then for any $\rho\ge0$, the actions of EWOO~\citep[Fig.~4]{hazan2007logarithmic} with parameter $\frac{2\rho^2}{DG}$ run on the modified losses $\frac{B_t^2+\rho^2D^2}x+G^2x$ over the domain $\left[\frac{\rho D}G,\frac DG\sqrt{1+\rho^2}\right]$ achieves regret w.r.t. any $x>0$ of
	\begin{equation}
		\sum_{t=1}^T\ell_t(x)-\ell_t(x)
		\le\min\left\{\frac{\rho^2D^2}x,\rho DG\right\}T+\frac{DG(1+\log(T+1))}{2\rho^2}
	\end{equation}
\end{Lem}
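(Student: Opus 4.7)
The plan is to invoke Hazan's logarithmic regret bound for EWOO on exp-concave losses, applied to the regularized losses $\tilde\ell_t(x) := (B_t^2+\rho^2D^2)/x + G^2x$. The crux is verifying that each $\tilde\ell_t$ is $\frac{2\rho^2}{DG}$-exp-concave on the restricted domain $[\rho D/G,\, D\sqrt{1+\rho^2}/G]$. Writing $C := B_t^2+\rho^2D^2$, the condition $\tilde\ell_t''(x) \ge \alpha(\tilde\ell_t'(x))^2$ reduces to $2Cx \ge \alpha(C-G^2x^2)^2$; substituting $a = C/D^2 \in [\rho^2,\,1+\rho^2]$ and $b = Gx/D \in [\rho,\,\sqrt{1+\rho^2}]$ with $\alpha = 2\rho^2/(DG)$ turns this into $ab \ge \rho^2(a-b^2)^2$. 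Since the difference is concave in $a$, it suffices to check the two endpoints: at $a = \rho^2$ the inequality becomes $b \ge (b^2-\rho^2)^2$, which follows because $b^2-\rho^2 \in [0,1]$ implies $(b^2-\rho^2)^2 \le b^2$, so $(b^2-\rho^2)^2 \le b$ on $b \le 1$ and $(b^2-\rho^2)^2 \le 1 \le b$ on $b \ge 1$; at $a = 1+\rho^2$ it becomes $(1+\rho^2)b \ge \rho^2(1+\rho^2-b^2)^2$, which follows from $(1+\rho^2-b^2)^2 \le 1$ and $(1+\rho^2)b \ge (1+\rho^2)\rho = \rho+\rho^3 \ge \rho^2$.

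With this exp-concavity, Theorem~7 of \citet{hazan2007logarithmic} in dimension one with parameter $\alpha = 2\rho^2/(DG)$ gives, for every $y$ in the restricted domain, $\sum_t \tilde\ell_t(x_t) - \tilde\ell_t(y) \le \frac{DG(1+\log(T+1))}{2\rho^2}$. Using $\tilde\ell_t(x) = \ell_t(x) + \rho^2D^2/x$ and $1/x_t > 0$, this immediately implies
\begin{equation*}
\sum_{t=1}^T \ell_t(x_t) - \ell_t(y) \;\le\; \frac{DG(1+\log(T+1))}{2\rho^2} + \rho^2D^2\sum_{t=1}^T\Bigl(\tfrac{1}{y}-\tfrac{1}{x_t}\Bigr) \;\le\; \frac{DG(1+\log(T+1))}{2\rho^2} + \frac{\rho^2D^2T}{y}.
\end{equation*}

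The last step extends this from $y$ in the restricted domain to arbitrary $x > 0$. If $x \in [\rho D/G,\, D\sqrt{1+\rho^2}/G]$, taking $y = x$ gives the first-term bound $\rho^2D^2T/x$, which is at most $\rho DGT$ since $x \ge \rho D/G$, so it matches $\min\{\rho^2D^2/x,\,\rho DG\}T$. If $x$ lies outside this interval, project it to the nearest endpoint $y$: since $\ell_t$ is convex with minimizer $B_t/G \in [0,D/G]$, one checks in each regime that either $\ell_t(y) \le \ell_t(x)$ (so replacing $x$ by $y$ only enlarges the regret being bounded) or $\ell_t(x)$ is so large that the left-hand side is already negative; in both subcases the first-term bound is at most $\rho^2D^2T/y \le \rho DGT$, again dominating $\min\{\rho^2D^2/x,\,\rho DG\}T$. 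The main obstacle is the exp-concavity step: naively using $|C-G^2x^2| \le D^2$ and $Cx \ge \rho^3D^3/G$ only yields $\alpha = 2\rho^3/(DG)$ and would weaken the regret bound by a factor $1/\rho$, so getting the claimed rate really requires the joint endpoint analysis rather than bounding numerator and denominator separately.
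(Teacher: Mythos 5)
Your exp-concavity verification is correct and goes a level deeper than the paper (which simply cites Proposition~C.1 of Khodak et al.~2019): the reduction to $ab \ge \rho^2(a-b^2)^2$, the concavity-in-$a$ observation that lets you check only the endpoints, and the two endpoint checks are all sound, and you are right that a crude numerator/denominator bound would lose a factor of~$\rho$. The EWOO invocation (Hazan et al.\ Theorem~7) and the passage from $\tilde\ell_t$ to $\ell_t$ on the restricted domain are also fine.

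The gap is in the final step, extending from the EWOO domain $[\rho D/G,\, D\sqrt{1+\rho^2}/G]$ to arbitrary $x>0$ by projecting $x$ to the nearer endpoint $y$. Your dichotomy --- ``either $\ell_t(y)\le\ell_t(x)$, or $\ell_t(x)$ is so large the LHS is already negative'' --- simply fails when $x < \rho D/G$ and the $B_t$ are small. Take $B_t\equiv 0$: then $\ell_t(z)=G^2z$, so $\ell_t(y)=G^2 y > G^2 x = \ell_t(x)$ (projection \emph{increases} the loss), while $\ell_t(x)$ is tiny rather than large, and the left-hand side $\sum_t \ell_t(x_t)-\ell_t(x) = G^2\sum_t x_t - TG^2x$ is positive. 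In that case the chain $\sum_t[\ell_t(x_t)-\ell_t(x)] \le \frac{DG(1+\log(T+1))}{2\rho^2} + \rho^2D^2T/y + \sum_t[\ell_t(y)-\ell_t(x)]$ picks up a \emph{positive} extra term $\sum_t[\ell_t(y)-\ell_t(x)] = TG^2(y-x) > 0$, so the bound you derive is $\rho DGT + TG^2(y-x) + R$, exceeding the claimed $\rho DGT + R$. Nothing in your argument absorbs that excess.

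There is also a sign-direction slip in the other regime, $x > D\sqrt{1+\rho^2}/G$: there $\ell_t(y)\le\ell_t(x)$ does hold, but the resulting first term is $\rho^2D^2T/y$, which (because $y<x$) is \emph{larger} than the stated $\rho^2D^2T/x$. Saying ``$\rho^2D^2T/y \le \rho DGT$, dominating $\min\{\rho^2D^2/x,\rho DG\}T$'' is backwards: you have produced an upper bound that is weaker than, not implied by, the lemma's statement. To close this regime you would need an explicit comparison such as $\rho^2D^2/y + D^2(1/y-1/x) - G^2(x-y) \le \rho^2 D^2/x$, which reduces to $(1+\rho^2)D^2 \le G^2 xy$ and does hold when $x \ge y = D\sqrt{1+\rho^2}/G$; but you did not supply it, and the analogous calculation in the small-$x$ regime does not go through at all, as shown above. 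The paper sidesteps all of this by invoking Corollary~C.2 of Khodak et al.~(2019) directly, which is engineered precisely to deliver the $\min\{\rho^2D^2/x,\,\rho DG\}T$ form for all $x>0$; if you want an unconditional self-contained proof you would need to reproduce that corollary's argument rather than a nearest-endpoint projection.
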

\begin{proof}
	By~\citet[Prop.~C.1]{khodak2019adaptive} the modified functions are $\frac{2\rho^2}{DG}$-exp-concave.
	Then~\citet[Cor.~C.2]{khodak2019adaptive} with $B_t$ set to $\frac{B_t}G$, $D$ to $\frac DG$, $\alpha_t=G^2$, and $\varepsilon=\frac{\rho D}G$ yields the result.
\end{proof}

\newpage
\begin{Lem}\label{lem:aruba}
	For $\*{\hat x}_1,\dots,\*{\hat x}_T\in\partial\K$ consider a sequence of functions of form
	\begin{equation}
		U_t(\*x,\eta)=\frac{\Breg(\*c_\varepsilon(\*{\hat x}_t)||\*x)}\eta+\eta G^2m
	\end{equation}
	where $\Breg$ is the Bregman divergence of a strictly convex d.g.f. $\psi:\K^\circ\mapsto\R$ and where $\*x_1=\argmin_{\*x\in\K}\psi(\*x)$ defines the projection $\*c_\varepsilon(\*x)=\*x_1+\frac{\*x-\*x_1}{1+\varepsilon}$ for some $\varepsilon>0$ .
	Suppose we play $\*x_{t+1}\gets\*c_\varepsilon\left(\frac1t\sum_{s=1}^t\*{\hat x}_s\right)$ and set $\eta_t$ using the actions of EWOO~\citep[Fig.~4]{hazan2007logarithmic} with parameter $\frac{2\rho^2}{DG}$ for some $\rho,D_\varepsilon>0$ s.t. $\Breg(\*c_\varepsilon(\*{\hat x}_t)||\*x)\le D_\varepsilon^2~\forall~\*x\in\K_\varepsilon$ on the functions $\frac{\Breg(\*c_\varepsilon(\*{\hat x}_t)||\*x_t)+\rho^2D_\varepsilon^2}\eta+\eta G^2m$ over the domain $\left[\frac{\rho D_\varepsilon}{G\sqrt m},\frac{D_\varepsilon}G\sqrt{\frac{1+\rho^2}m}\right]$, with $\eta_1$ being at the midpoint of the domain.
	Then $U_t(\*x_t,\eta_t)\le D_\varepsilon G\sqrt m\left(\frac1\rho+\sqrt{1+\rho^2}\right)~\forall~t\in[T]$ and
	\begin{align}
	\begin{split}
		\sum_{t=1}^TU_t(\*x_t,\eta_t)
		&\le\min_{\eta>0,\*x\in\K}\sum_{t=1}^T\frac{\Breg(\*c_\varepsilon(\*{\hat x}_t)||\*x)}\eta+\eta G^2m\\
		&\qquad+\min\left\{\frac{\rho^2D_\varepsilon^2}\eta,\rho D_\varepsilon G\right\}T+\frac{D_\varepsilon G(1+\log(T+1))}{2\rho^2}+\frac{8S_\varepsilon K^2(1+\log T)}\eta
	\end{split}
	\end{align}
	for $K=\max_{\*x\in\K}\|\*x\|_2$ and $S_\varepsilon=\max_{\*x\in\K_\varepsilon}\|\nabla^2\psi(\*x)\|_2$.
\end{Lem}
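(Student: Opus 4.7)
The plan is to view the algorithm as a two-level tuner---EWOO over $\eta$ and FTL over $\*x$---and compose their regret guarantees from Lemmas~\ref{lem:ewoo} and~\ref{lem:ftl}. The per-round bound follows by inspection: both $\*c_\varepsilon(\*{\hat x}_t)$ and $\*x_t$ lie in $\K_\varepsilon$, so by assumption $\Breg(\*c_\varepsilon(\*{\hat x}_t)||\*x_t)\le D_\varepsilon^2$, while the EWOO domain forces $1/\eta_t\le G\sqrt m/(\rho D_\varepsilon)$ and $\eta_t\le (D_\varepsilon/G)\sqrt{(1+\rho^2)/m}$; plugging these into the two terms of $U_t(\*x_t,\eta_t)$ yields the claimed $D_\varepsilon G\sqrt m(1/\rho+\sqrt{1+\rho^2})$.

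For the cumulative bound I would first apply Lemma~\ref{lem:ewoo} to the loss sequence $\ell_t(\eta)=\Breg(\*c_\varepsilon(\*{\hat x}_t)||\*x_t)/\eta+\eta G^2m$, matching its ``$B_t^2$'' to $\Breg(\*c_\varepsilon(\*{\hat x}_t)||\*x_t)\in[0,D_\varepsilon^2]$, its ``$D$'' to $D_\varepsilon$, and its ``$G$'' to the effective constant $G\sqrt m$ (so that the $G^2x$ term of the lemma becomes our $G^2m\cdot\eta$). The lemma's EWOO parameter $2\rho^2/(DG)$ and domain $[\rho D/G,(D/G)\sqrt{1+\rho^2}]$ then coincide with those used by the algorithm, and the lemma yields $\sum_tU_t(\*x_t,\eta_t)\le\min_{\eta>0}\sum_t[\Breg(\*c_\varepsilon(\*{\hat x}_t)||\*x_t)/\eta+\eta G^2m]+R_{\text{EWOO}}$, where $R_{\text{EWOO}}=\min\{\rho^2D_\varepsilon^2/\eta,\rho D_\varepsilon G\sqrt m\}T+D_\varepsilon G\sqrt m(1+\log(T+1))/(2\rho^2)$.

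Next I would strip the played $\*x_t$ from the inner sum via Lemma~\ref{lem:ftl}. Because $\*c_\varepsilon$ is affine, the update simplifies to $\*x_{t+1}=\tfrac1t\sum_{s\le t}\*c_\varepsilon(\*{\hat x}_s)$, so $(\*x_t)_{t\ge1}$ are exactly the FTL iterates on the sequence $(\*c_\varepsilon(\*{\hat x}_s))_s\subset\K_\varepsilon$, with $\*x_1=\argmin_{\*x\in\K}\psi(\*x)$ coinciding with $\argmin_{\*x\in\K_\varepsilon}\psi(\*x)$ since the unconstrained minimizer of $\psi$ lies in $\K_\varepsilon$. Lemma~\ref{lem:ftl} with $\C=\K_\varepsilon$ thus gives $\sum_t[\Breg(\*c_\varepsilon(\*{\hat x}_t)||\*x_t)-\Breg(\*c_\varepsilon(\*{\hat x}_t)||\*x_{T+1})]\le 8S_\varepsilon K^2(1+\log T)$. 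Since $\*x_{T+1}$ is the arithmetic mean of the $\*c_\varepsilon(\*{\hat x}_s)$, the three-point identity $\sum_t\Breg(\*c_\varepsilon(\*{\hat x}_t)||\*x)=\sum_t\Breg(\*c_\varepsilon(\*{\hat x}_t)||\*x_{T+1})+T\Breg(\*x_{T+1}||\*x)$---immediate from Claim~\ref{clm:bregman} and the Bregman first-order optimality condition---shows $\*x_{T+1}$ minimizes $\sum_t\Breg(\*c_\varepsilon(\*{\hat x}_t)||\cdot)$ over $\K$. Dividing the FTL regret by $\eta$, relaxing $\*x_{T+1}$ to an arbitrary $\*x\in\K$, and plugging into the EWOO inequality produces the claimed bound after taking the joint minimum over $(\eta,\*x)$.

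There is no deep obstacle: the proof is a bookkeeping composition of the two preceding lemmas. The main care points are (i) correctly identifying the EWOO parameter ``$G$'' with $G\sqrt m$ so the algorithm's step-size range and exp-concavity parameter match Lemma~\ref{lem:ewoo}, and (ii) verifying that the affinity of $\*c_\varepsilon$ lets the algorithm's update be read as standard FTL on $(\*c_\varepsilon(\*{\hat x}_s))_s$, which is what permits applying Lemma~\ref{lem:ftl} with $\C=\K_\varepsilon$ (yielding $S=S_\varepsilon$) rather than with $\K$, where the Hessian spectral norm may diverge near the boundary.
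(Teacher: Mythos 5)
Your proof is correct and follows essentially the same route as the paper: compose the EWOO guarantee (Lemma~\ref{lem:ewoo}, with the effective scale identified as $G\sqrt m$) with the FTL guarantee (Lemma~\ref{lem:ftl}, valid on $\K_\varepsilon$ because $\*c_\varepsilon$ is affine and the minimizer of $\psi$ lies in $\K_\varepsilon$), and relax $\K_\varepsilon$ to $\K$ via the mean-as-minimizer property. Minor note: your careful tracking of the effective EWOO scale correctly yields $\rho D_\varepsilon G\sqrt m$ and $D_\varepsilon G\sqrt m/(2\rho^2)$, whereas the stated lemma and its proof drop the $\sqrt m$ factors; the downstream Theorem~\ref{thm:structural}, which uses $M=D_\theta\sqrt{g(\theta)m}$, confirms those factors belong there, so the paper's statement is off by $\sqrt m$ in those two terms.
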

\begin{proof}
	The first claim follows by directly substituting the worst-case values of $\eta$ into $U_t(\*x,\eta)$.
	For the second, apply Lemma~\ref{lem:ewoo} followed by Lemma~\ref{lem:ftl}:
	\begin{align}
	\begin{split}
		\sum_{t=1}^T&U_t(\*x_t,\eta_t)\\
		&=\sum_{t=1}^T\frac{\Breg(\*c_\varepsilon(\*{\hat x}_t)||\*x_t)}{\eta_t}+\eta_tG^2m\\
		&\le\min_{\eta>0}\min\left\{\frac{\rho^2D_\varepsilon^2}\eta,\rho D_\varepsilon G\right\}T+\frac{D_\varepsilon G(1+\log(T+1))}{2\rho^2}+\sum_{t=1}^T\frac{\Breg(\*c_\varepsilon(\*{\hat x}_t)||\*x)}\eta+\eta G^2m\\
		&\le\min_{\eta>0}\min\left\{\frac{\rho^2D_\varepsilon^2}\eta,\rho D_\varepsilon G\right\}T+\frac{D_\varepsilon G(1+\log(T+1))}{2\rho^2}+\frac{8S_\varepsilon K^2(1+\log T)}\eta\\
		&\qquad+\min_{\*x\in\K_\varepsilon}\sum_{t=1}^T\frac{\Breg(\*c_\varepsilon(\*{\hat x}_t)||\*x)}\eta+\eta G^2m
	\end{split}
	\end{align}
	Conclude by noting that the sum of Bregman divergence to $\*c_\varepsilon(\*{\hat x}_t)$ is minimized on their convex hull, a subset of $\K_\varepsilon$.
\end{proof}

\subsection{Computational and space complexity}\label{sec:runtime}

Algorithm~\ref{alg:combined} implicitly maintains a separate copy of FTL for each hyperparameter in the continuous space of EWOO and the grid $\Theta_k$ over the domain of $\theta$, but explicitly just needs to average the estimated task-optima $\*{\hat x}_t$;
this is due to the mean-as-minimizer property of Bregman divergences and the linearity of $\*c_\varepsilon$.
Thus the memory it uses is $\BigO(d+k)$, where $k$ is size of the discretization of $\Theta$ and should be viewed as sublinear in $T$, e.g. for MAB with implicit exploration and BLO $k=\BigO(\sqrt[4]d\sqrt T)$.
Computationally, at each timestep $t$ and for each grid point we must compute two single-dimensional integrals;
the integrands are sums of upper bounds that just need to be incremented once per round, leading to a total per-iteration complexity of $\BigO(k)$ (ignoring the running of OMD).
Although outside the scope of this work, it may be possible to avoid integration by tuning $\eta$ with MW as well, rather than EWOO, but likely at the cost of worse regret because it would not take advantage of the exp-concavity of $U_t^{(\rho)}$.\looseness-1

\subsection{Main structural result}

\begin{Thm}\label{thm:structural}
	 Consider a family of strictly convex functions $\psi_\theta:\K^\circ\mapsto\R$ parameterized by $\theta$ lying in an interval $\Theta\subset\R$ of radius $R_\Theta$ that are all minimized at the same $\*x_1\in\K^\circ$, and for $\*{\hat x}_1,\dots,\*{\hat x}_T\in\partial\K$ consider a sequence of functions of form $U_t(\*x,\eta,\theta)$~\eqref{eq:rub}, as well as the associated regularized upper bounds $U_t^{(\rho)}$~\eqref{eq:modrub}.
	Define the maximum divergence $D=\max_{\theta\in\Theta}D_\theta$, radius $K=\max_{\*x\in\K}\|\*x\|_2$, and $L_\eta$ the Lipschitz constant w.r.t. $\theta\in\Theta$ of $\frac{\hat V_\theta^2}\eta+\eta g(\theta)m+f(\theta)m$.
	Then Algorithm~\ref{alg:combined} with $\Theta_k\subset\Theta$ the uniform discretization of $\Theta$ s.t. $\max_{\theta\in\Theta}\min_{\theta'\in\Theta_k}|\theta-\theta'|\le\frac{R_\Theta}k$, $\rho\in(0,1)$, $\underline\eta(\theta)=\frac{\rho D_\theta}{\sqrt{g(\theta)m}}$, $\overline\eta(\theta)=D_\theta\sqrt{\frac{1+\rho^2}{g(\theta)m}}$, $\alpha(\theta)=\frac{2\rho^2}{D_\theta\sqrt{g(\theta)m}}$, and $\lambda=\left(M\left(\frac1\rho+\sqrt{1+\rho^2}\right)+Fm\right)^{-1}\sqrt{\frac{\log k}{2T}}$ leads to a sequence $(\*x_t,\eta_t(\theta_t),\theta_t)$ s.t. $\E\sum_{t=1}^TU_t(\*x_t,\eta_t(\theta_t),\theta_t)$ is bounded by
	\begin{align}
	\begin{split}
		&\E\min_{\theta\in\Theta,\eta>0}\frac{8SK^2(1+\log T)}\eta+\left(\frac{\hat V_\theta^2}\eta+\eta g(\theta)m+f(\theta)m+\frac{L_\eta R_\Theta}k+\min\left\{\frac{\rho^2D^2}\eta,\rho M\right\}\right)T\\
		&\qquad+\left(\frac{4M}\rho+Fm\right)\sqrt{T\log k}+\frac{M(1+\log(T+1))}{2\rho^2}
	\end{split}
	\end{align}
	and $\sum_{t=1}^TU_t(\*x_t,\eta_t(\theta_t),\theta_t)$ is bounded w.p. $\ge1-\delta1_{k>1}$ by 
	\begin{align}
	\begin{split}
		&\min_{\theta\in\Theta,\eta>0}\frac{8SK^2(1+\log T)}\eta+\left(\frac{\hat V_\theta^2}\eta+\eta g(\theta)m+f(\theta)m+\frac{L_\eta R_\Theta}k+\min\left\{\frac{\rho^2D^2}\eta,\rho M\right\}\right)T\\
		&\qquad+\left(\frac{4M}\rho+Fm\right)\left(\sqrt{T\log k}+1_{k>1}\sqrt{\frac T2\log\frac1\delta}\right)+\frac{M(1+\log(T+1))}{2\rho^2}
	\end{split}
	\end{align}
\end{Thm}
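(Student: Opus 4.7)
The plan is to peel off the three nested layers of Algorithm~\ref{alg:combined}: I would first reduce $\sum_{t=1}^T U_t(\*x_t,\eta_t(\theta_t),\theta_t)$ to $\sum_{t=1}^T U_t(\*x_t,\eta_t(\theta^\star),\theta^\star)$ for the best fixed $\theta^\star\in\Theta_k$ via a multiplicative-weights regret bound, then reduce to the best pair $(\eta,\*x)$ for that $\theta^\star$ via the combined EWOO$+$FTL analysis of Lemma~\ref{lem:aruba}, and finally lift from $\Theta_k$ to $\Theta$ using Lipschitzness in $\theta$. The choices of $\underline\eta,\overline\eta,\alpha,\lambda$ in the statement are exactly what is needed to instantiate the three regret bounds.

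For the MW step, the first part of Lemma~\ref{lem:aruba}, combined with the additive $f(\theta)m\le Fm$ in $U_t$, shows that every expert loss $U_t(\*x_t,\eta_t(\theta),\theta)$ fed to MW is bounded by $M(1/\rho+\sqrt{1+\rho^2})+Fm\le 2M/\rho+M+Fm$ for $\rho\in(0,1)$. Since $\theta_t$ is drawn from the MW distribution over the grid $\Theta_k$ against these bounded losses, the standard exponential-weights guarantee with the $\lambda$ chosen in the statement yields
\begin{equation*}
\E\sum_{t=1}^T U_t(\*x_t,\eta_t(\theta_t),\theta_t)\le\min_{\theta^\star\in\Theta_k}\sum_{t=1}^T U_t(\*x_t,\eta_t(\theta^\star),\theta^\star)+\Bigl(\tfrac{4M}{\rho}+Fm\Bigr)\sqrt{T\log k}.
\end{equation*}
For the high-probability version I would additionally control the martingale difference $U_t(\*x_t,\eta_t(\theta_t),\theta_t)-\E_{\theta_t}[\,\cdot\,]$ by Azuma--Hoeffding using the same uniform bound, producing the extra $(4M/\rho+Fm)\sqrt{(T/2)\log(1/\delta)}$; this term is absent when $k=1$ because $\theta_t$ is then deterministic.

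For each fixed $\theta^\star\in\Theta_k$, the algorithm's initialization update $\*x_{t+1}=\tfrac1t\sum_{s\le t}\*{\hat x}_s$ is exactly FTL on the Bregman terms $\Breg_{\theta^\star}(\*c_{\theta^\star}(\*{\hat x}_s)\|\cdot)$ (using linearity of $\*c_{\theta^\star}$ and the mean-as-minimizer property), and $\eta_t(\theta^\star)$ is the EWOO iterate on $U_s^{(\rho)}(\*x_s,\cdot,\theta^\star)$ over $[\underline\eta(\theta^\star),\overline\eta(\theta^\star)]$ with the exp-concavity parameter $\alpha(\theta^\star)$ dictated by Lemma~\ref{lem:aruba}. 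Invoking that lemma with $G^2$ replaced by $g(\theta^\star)$---and noting that the extra $f(\theta^\star)m$ is a constant in $(\eta,\*x)$ and hence cancels in both the EWOO and FTL regrets---gives
\begin{equation*}
\sum_{t=1}^T U_t(\*x_t,\eta_t(\theta^\star),\theta^\star)\le\min_{\eta>0,\*x\in\K_{\theta^\star}}\sum_{t=1}^T U_t(\*x,\eta,\theta^\star)+\min\!\Bigl\{\tfrac{\rho^2D^2}{\eta},\rho M\Bigr\}T+\tfrac{M(1+\log(T+1))}{2\rho^2}+\tfrac{8SK^2(1+\log T)}{\eta}.
\end{equation*}
By Claim~\ref{clm:bregman}, the $\*x$-minimum of the inner sum collapses to $T\hat V_{\theta^\star}^2/\eta+\eta g(\theta^\star)mT+f(\theta^\star)mT$, recovering the leading terms of the target inequality for the discretized parameter.

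The final step lifts $\theta^\star\in\Theta_k$ to arbitrary $\theta\in\Theta$: because the uniform grid is $R_\Theta/k$-dense and $\hat V_\theta^2/\eta+\eta g(\theta)m+f(\theta)m$ is $L_\eta$-Lipschitz in $\theta$, the discretization costs at most $L_\eta R_\Theta/k$ per round, i.e.\ $L_\eta R_\Theta T/k$ in total, which matches the $L_\eta R_\Theta T/k$ term inside the min. Summing the three bounds and passing to per-iteration scaling yields the expected inequality; the high-probability bound differs only in the Azuma tail. The main obstacle is the bookkeeping around the EWOO layer: one must check that the interval and exp-concavity constant prescribed by Lemma~\ref{lem:aruba} coincide with those of Algorithm~\ref{alg:combined} for each $\theta^\star$, that $M$ and $D$ are the correct uniform-in-$\theta$ envelopes of $D_\theta\sqrt{g(\theta)m}$ and $D_\theta$ respectively, and that the $\min\{\rho^2 D^2/\eta,\rho M\}T+M(1+\log(T+1))/(2\rho^2)$ EWOO term is not amplified by a factor of $k$ when combining across the grid---it is not, because EWOO is run independently per $\theta^\star$ and we pay this regret only for the argmin $\theta^\star$.
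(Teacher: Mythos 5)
Your proposal is correct and follows essentially the same route as the paper's proof: peel off MW to reduce to the best fixed $\theta^\star\in\Theta_k$ using the uniform bound on expert losses from Lemma~\ref{lem:aruba}, invoke the EWOO$+$FTL analysis of that lemma per $\theta^\star$, collapse the $\*x$-minimum via Claim~\ref{clm:bregman}, and lift from $\Theta_k$ to $\Theta$ via $L_\eta$-Lipschitzness, with the high-probability version obtained by a martingale tail bound (the paper cites \citet[Lem.~4.1]{cesa-bianchi2006prediction}, which is the Azuma-Hoeffding argument you describe). The only bookkeeping you leave slightly implicit---that the EWOO envelope constants $M$ and $D$ are uniform over $\theta$ and that the per-$\theta^\star$ EWOO regret is not multiplied by $k$---is exactly the point the paper handles, and you identify it correctly.
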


\begin{proof}
	In the following proof, we first consider online learning $U_t(\cdot,\cdot,\theta)$ for fixed $\theta\in\Theta_k$.
	To tune $\eta$, we online learn the one-dimensional losses $\Breg_\theta(\*c_\theta(\*{\hat x}_t)||\*c_{\theta}(\*x_t))/\eta+\eta g(\theta)$, where $\*c_\theta(\*{\hat x}_t)$ is the ($\eta_t(\theta)$-independent) action of FTL at time $t$.
	As discussed, the corresponding regularized losses $U_t^{(\rho)}$ are exp-concave, and so running EWOO yields $\tilde\BigO\left(M/\rho^2+\min\left\{\rho^2D^2/\eta,\rho M\right\}T\right)$ regret w.r.t. the original sequence~\citep[Cor.~C.2]{khodak2019adaptive}.
	At the same time, we show that FTL has logarithmic regret on the sequence $\Breg_\theta(\*c_\theta(\*{\hat x}_t)||\cdot)$ that scales with the spectral norm $S$ of $\nabla^2\psi_\theta$~(c.f. Lem.~\ref{lem:ftl}), and that the average loss of the optimal comparator is $\hat V_\theta^2$~(c.f. Claim~\ref{clm:bregman}).
	Thus, since we only care about a fixed comparator $\eta$, dividing by $\eta T$ yields the first and last terms~\eqref{eq:structural}.
	We run a copy of these algorithms for each $\theta\in\Theta_k$;
	since their losses are bounded by $\tilde\BigO(M/\rho+Fm)$, textbook results for MW yield $\BigO(\sqrt{T\log k})$ regret w.r.t. $\theta\in\Theta_k$, which we then extend to $\Theta\supset\Theta_k$ using $L_\eta$-Lipschitzness. 
	
	Formally, we have that
	\begin{align}
	\begin{split}
		&\E\sum_{t=1}^TU_t(\*x_t,\eta_t(\theta_t),\theta_t)\\
		&=\E\sum_{t=1}^T\frac{B_{\theta_t}(\*c_{\theta_t}(\*{\hat x}_t)||\*x_t)}{\eta_t(\theta_t)}+\eta_t(\theta_t)g(\theta)m+f(\theta)m\\
		&\le\left(M\left(\frac1\rho+\sqrt2\right)+Fm\right)\sqrt{2T\log k}+\E\min_{\theta\in\Theta_k}\sum_{t=1}^T\frac{B_\theta(\*c_\theta(\*{\hat x}_t)||\*x_t)}{\eta_t(\theta)}+\eta_t(\theta)g(\theta)m+f(\theta)m\\
		&\le\left(\frac{4M}\rho+Fm\right)\sqrt{T\log k}+\E\min_{\theta\in\Theta_k,\eta>0,\*x\in\K}\sum_{t=1}^T\frac{\Breg_\theta(\*c_\theta(\*{\hat x}_t)||\*x)}\eta+\eta g(\theta)m+f(\theta)m\\
		&\qquad+\min\left\{\frac{\rho^2D_\theta^2}\eta,\rho D_\theta\sqrt{g(\theta)m}\right\}T+\frac{D_\theta\sqrt{g(\theta)m}(1+\log(T+1))}{2\rho^2}+\frac{8SK^2(1+\log T)}\eta
	\end{split}
	\end{align}
	where the first inequality is the regret of multiplicative weights with step-size $\lambda$~\citep[Cor.~2.14]{shalev-shwartz2011oco} and the second is by applying Lemma~\ref{lem:aruba} for each $\theta$.
	We then simplify and apply the definition of $\hat V_\theta^2$ via Claim~\ref{clm:bregman} and conclude by applying Lipschitzness w.r.t. $\theta$:
	\begin{align}
	\begin{split}
		&\E\sum_{t=1}^TU_t(\*x_t,\eta_t(\theta_t),\theta_t)\\
		&\le\left(\frac{4M}\rho+Fm\right)\sqrt{T\log k}+\E\min_{\theta\in\Theta_k,\eta>0}\frac{\hat V_\theta^2T}\eta+\eta g(\theta)mT+f(\theta)mT\\
		&\qquad+\min\left\{\frac{\rho^2D^2}\eta,\rho M\right\}T+\frac{M(1+\log(T+1))}{2\rho^2}+\frac{8SK^2(1+\log T)}\eta\\
		&\le\E\min_{\theta\in\Theta,\eta>0}\frac{8SK^2(1+\log T)}\eta+\left(\frac{\hat V_\theta^2}\eta+\eta g(\theta)m+f(\theta)m+\frac{L_\eta R_\Theta}k+\min\left\{\frac{\rho^2D^2}\eta,\rho M\right\}\right)T\\
		&\qquad+\left(\frac{4M}\rho+Fm\right)\sqrt{T\log k}+\frac{M(1+\log(T+1))}{2\rho^2}
	\end{split}
	\end{align}
	The w.h.p. guarantee follows by \citet[Lem.~4.1]{cesa-bianchi2006prediction}.
\end{proof}

\section{Implicit exploration}

\subsection{Properties of the Tsallis entropy}

\begin{Lem}\label{lem:tsallis-lipschitz}
	For any $\varepsilon\in(0,1]$ and $\*x\in\triangle$ s.t. $\*x(a)\ge\frac\varepsilon d~\forall~a\in[d]$ the $\beta$-Tsallis entropy $H_\beta(\*x)=-\frac{1-\sum_{a=1}^d\*x^\beta(a)}{1-\beta}$ is $d\log\frac d\varepsilon$-Lipschitz w.r.t. $\beta\in[0,1]$.
\end{Lem}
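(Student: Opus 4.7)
The plan is to bound $|\partial H_\beta(\*x)/\partial\beta|$ uniformly by $d\log(d/\varepsilon)$ on $[0,1]$ and conclude via the mean value theorem. The starting point is the identity $\sum_a \*x^\beta(a) - 1 = \int_\beta^1 h(t)\,dt$ for $h(t) := \sum_a \*x^t(a)\log(1/\*x(a))$, which follows from $(d/dt)\sum_a \*x^t(a) = -h(t)$ together with $\sum_a \*x(a) = 1$. Dividing by $1-\beta$ exhibits $H_\beta(\*x)$ as the average of $h$ over $[\beta, 1]$, and the quotient rule then gives $\partial H_\beta/\partial\beta = (H_\beta(\*x) - h(\beta))/(1-\beta)$.

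Next, I would exploit the monotonicity $h'(t) = -\sum_a \*x^t(a)(\log \*x(a))^2 \leq 0$. Since $h$ is nonincreasing, $h(t) \leq h(\beta)$ on $[\beta, 1]$, so $H_\beta(\*x) \leq h(\beta)$, and applying the stronger pointwise estimate $h(\beta) - h(t) \leq h(\beta) - h(1)$ inside the averaging representation yields
\[\Big|\frac{\partial H_\beta}{\partial\beta}\Big| \;=\; \frac{h(\beta) - H_\beta(\*x)}{1-\beta} \;\leq\; \frac{h(\beta) - h(1)}{1-\beta} \;=\; \sum_a \log(1/\*x(a))\,\frac{\*x^\beta(a) - \*x(a)}{1-\beta}.\]
To finish I would invoke convexity of $\beta \mapsto \*x(a)^\beta$ on $[0,1]$ (second derivative $\*x^\beta(a)(\log \*x(a))^2 \geq 0$), which gives the chord bound $\*x^\beta(a) \leq 1 - \beta(1-\*x(a))$ and hence $(\*x^\beta(a) - \*x(a))/(1-\beta) \leq 1 - \*x(a) \leq 1$. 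Combined with $\log(1/\*x(a)) \leq \log(d/\varepsilon)$, this produces the target $|\partial H_\beta/\partial\beta| \leq \sum_a \log(1/\*x(a)) \leq d\log(d/\varepsilon)$, and continuity of $H_\beta$ at $\beta = 1$ extends the Lipschitz bound to the whole interval.

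The main obstacle is retrieving the correct log-exponent. A naive pointwise Taylor expansion of each summand of $\psi_\beta$ about $\beta = 1$ gives only $|\partial H_\beta/\partial\beta| \leq \tfrac12\sum_a \*x^\beta(a)(\log \*x(a))^2 \leq \tfrac{d}{2}\log^2(d/\varepsilon)$, which is a factor of $\log(d/\varepsilon)$ too large. The savings come from (i) the averaging/monotonicity trick in the second step, which replaces a $(1-\beta)^{-2}$ singularity by the first-order secant slope $(h(\beta)-h(1))/(1-\beta)$, and (ii) the chord bound $(p^\beta - p)/(1-\beta) \leq 1-p$ from convexity, which is tight at the worst-case endpoint $\beta = 0$ (where one can check directly that for uniform $\*x$ the derivative really does scale like $d\log d$, matching the claimed Lipschitz constant).
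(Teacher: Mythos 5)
Your proof is correct, but it takes a genuinely different route from the paper. The paper computes $\partial_\beta H_\beta$ directly, rewrites it via the $\beta$-logarithm $\log_\beta x = \frac{x^{1-\beta}-1}{1-\beta}$ using a cited identity from Yamano, and then controls the result through H\"older's inequality and subadditivity of $x^a$, with a separate L'H\^opital argument to handle $\beta=1$. You instead introduce the integral representation $H_\beta(\*x) = \frac{1}{1-\beta}\int_\beta^1 h(t)\,dt$ with $h(t) = \sum_a \*x^t(a)\log(1/\*x(a))$, which exhibits $H_\beta$ as an average of the nonincreasing function $h$. This immediately gives $h(\beta) - H_\beta \le h(\beta) - h(1)$, and the chord bound from convexity of $\beta\mapsto p^\beta$ squeezes $(p^\beta - p)/(1-\beta)$ down to $1-p$, yielding $|\partial_\beta H_\beta| \le \sum_a \log(1/\*x(a))(1-\*x(a)) \le d\log(d/\varepsilon)$. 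All steps check out: the quotient rule gives $\partial_\beta H_\beta = (H_\beta - h(\beta))/(1-\beta)$, the averaging argument is valid, the chord bound $p^\beta \le 1 - \beta(1-p)$ is exactly the secant inequality, and continuity at $\beta=1$ (where the integral average converges to $h(1)$) replaces the paper's L'H\^opital computation. Your version is more self-contained and elementary---no external citation, no H\"older---and as you note the averaging step is precisely what recovers the single $\log$ factor that a naive pointwise Taylor expansion loses.
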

\begin{proof}
	Let $\log_\beta x=\frac{x^{1-\beta}-1}{1-\beta}$ be the $\beta$-logarithm function and note that by \citet[Equation~6]{yamano2002tsallis} we have
	$\log_\beta x-\log x=(1-\beta)(\partial_b\log_\beta x+\log_\beta x\log x)\ge0~\forall~\beta\in[0,1]$.
	Then we have for $\beta\in[0,1)$ that\looseness-1
	\begin{align}
	\begin{split}
	|\partial_\beta H_\beta(\*x)|
	&=\left|\frac{-H_\beta(\*x)-\sum_{a=1}^d\*x^\beta(a)\log\*x(a)}{1-\beta}\right|\\
	&=\frac1{1-\beta}\left|\sum_{a=1}^d\*x^\beta(a)(\log_\beta\*x(a)-\log\*x(a))\right|\\
	&=\frac1{1-\beta}\sum_{a=1}^d\*x^\beta(a)(\log_\beta\*x(a)-\log\*x(a))\\
	&\le\frac1{1-\beta}\left(\sum_{a=1}^d\*x(a)\right)^\beta\left(\sum_{a=1}^d(\log_\beta\*x(a)-\log\*x(a))^{\frac1{1-\beta}}\right)^{1-\beta}\\
	&\le\frac1{1-\beta}\sum_{a=1}^d\log_\beta\*x(a)-\log\*x(a)
	\le\frac d{1-\beta}(\log_\beta\frac d\varepsilon-\log\frac d\varepsilon)
	\le-d\log\frac d\varepsilon
	\end{split}
	\end{align}
	where the fourth inequality follows by H\"older's inequality, the fifth by subadditivity of $x^a$ for $a\in(0,1]$, the sixth by the fact that $\partial_x(\log_\beta x-\log x)=x^{-\beta}-1/x\le0~\forall~\beta,x\in[0,1)$, and the last line by substituting $\beta=0$ since $\partial_\beta\left(\frac{\log_\beta x-\log x}{1-\beta}\right)
	=\frac{2(x-x^\beta)-(1-\beta)(x^\beta+x)\log x}{x^\beta(1-\beta)^3}\le0~\forall~\beta\in[0,1),x\in(0,1/d]$.
	For $\beta=1$, applying L'H\^opital's rule yields
	\begin{equation}
	\lim_{\beta\to1}\partial_\beta H_\beta(\*x)
	=-\frac12\lim_{\beta\to1}\sum_{a=1}^d\*x^\beta(a)\log^2\*x(a)(1-(1-\beta)\log\*x(a))
	=-\frac12\sum_{a=1}^d\*x(a)\log^2\*x(a)
	\end{equation}
	which is bounded on $[-2d/e^2,0]$.
\end{proof}

\begin{Lem}\label{lem:tsallis-entropy}
	Consider $\*x_1,\dots,\*x_T\in\triangle$ s.t. $\*x_t(a_t)=1$ for some $a_t\in[d]$, and let $\*{\bar x}=\frac1T\sum_{t=1}^T\*x_t$ be their average.
	For any $\varepsilon\in(0,1]$ and $\beta\in(0,1]$ we have that for every $t\in[T]$
	\begin{equation}
		H_\beta(\*{\bar x}^{(\varepsilon)})-H_\beta(\*x_t^{(\varepsilon)})
		\le H_\beta(\*{\bar x})
	\end{equation}
	where recall that 
	$\*x^{(\varepsilon)}
	=\*c_\frac\varepsilon{1-\varepsilon}(\*x)
	=\*1_d/d+(1-\varepsilon)(\*x-\*1_d/d)
	=(1-\varepsilon)\*x+\frac\varepsilon d\*1_d$.
\end{Lem}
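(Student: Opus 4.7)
The plan is to reduce the lemma to a single-variable convexity fact and apply Jensen's inequality coordinatewise. After multiplying both sides by $1-\beta\ge0$ (the $\beta=1$ endpoint follows by continuity in $\beta$, or equivalently from the standard Shannon mixing bound $H_1(q)\le H_1(\lambda)+\sum_i\lambda_iH_1(p_i)$), the target inequality becomes
\begin{equation*}
\sum_{a=1}^d\bigl((1-\varepsilon)\*{\bar x}(a)+\varepsilon/d\bigr)^\beta-\sum_{a=1}^d\*{\bar x}(a)^\beta\;\le\;\bigl((1-\varepsilon)+\varepsilon/d\bigr)^\beta+(d-1)(\varepsilon/d)^\beta-1.
\end{equation*}
Defining $h(u)=((1-\varepsilon)u+\varepsilon/d)^\beta-u^\beta$ on $[0,1]$, the left side equals $\sum_a h(\*{\bar x}(a))$ and the right side equals $h(1)+(d-1)h(0)$, which is exactly $\sum_a h(\*x_t(a))$ for any $t$ because each $\*x_t$ is a basis vector. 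Since $\*{\bar x}(a)=\frac{1}{T}\sum_t\*x_t(a)$, the inequality is then nothing but coordinatewise Jensen applied to $h$, and it suffices to show that $h$ is convex on $[0,1]$.

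The main computation is therefore the convexity of $h$. A direct differentiation yields $h''(u)=\beta(\beta-1)\bigl[(1-\varepsilon)^2\bigl((1-\varepsilon)u+\varepsilon/d\bigr)^{\beta-2}-u^{\beta-2}\bigr]$. Since $\beta(\beta-1)\le0$ for $\beta\in(0,1]$, I need the bracket to be nonpositive. Writing $(1-\varepsilon)u+\varepsilon/d\ge(1-\varepsilon)u$ and using that $x\mapsto x^{\beta-2}$ is decreasing on $(0,\infty)$ (as $\beta<2$), we get $\bigl((1-\varepsilon)u+\varepsilon/d\bigr)^{\beta-2}\le(1-\varepsilon)^{\beta-2}u^{\beta-2}$, so the bracket is at most $\bigl((1-\varepsilon)^\beta-1\bigr)u^{\beta-2}\le0$. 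Convexity then extends to $u=0$ by continuity.

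The only non-mechanical step is choosing the right coordinatewise function: naive attempts via subadditivity $(a+b)^\beta\le a^\beta+b^\beta$, or via Tsallis mixing inequalities of the form $H_\beta(q)\le H_\beta(\lambda)+\sum_i\lambda_i^\beta H_\beta(p_i)$, carry a multiplicative $\sum_i\lambda_i^\beta\ge1$ slack that is too loose. The decomposition $F(\*x)=\sum_a h(\*x(a))$ sidesteps this by directly exploiting the one-hot structure of the $\*x_t$: the quantity $(1-\beta)H_\beta(\*x_t^{(\varepsilon)})$ matches $h(1)+(d-1)h(0)$ exactly, and convexity of this specific $h$ closes the gap.
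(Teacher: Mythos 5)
Your proof is correct, and it takes a genuinely different route from the paper's. The paper proceeds by sorting the coordinates w.l.o.g. so that $\bar{\*x}(1)\le\cdots\le\bar{\*x}(d)$ with $a_t=1$, then computes the $\varepsilon$-derivative of $H_\beta(\*{\bar x}^{(\varepsilon)})-H_\beta(\*x_t^{(\varepsilon)})$ and shows every summand is nonpositive (using the sorting), so the quantity is non-increasing in $\varepsilon$ and bounded by its value $H_\beta(\*{\bar x})$ at $\varepsilon=0$. You instead fix $\varepsilon$ and observe that, after multiplying by $1-\beta$, the whole inequality decomposes coordinatewise: both sides are sums over $a$ of the single-variable function $h(u)=((1-\varepsilon)u+\varepsilon/d)^\beta-u^\beta$, with the right-hand side equal to $\sum_a h(\*x_t(a))=h(1)+(d-1)h(0)$ precisely because each $\*x_t$ is one-hot. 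The lemma then falls out of Jensen applied to the convex $h$, with $\*{\bar x}(a)$ being the average of the $\*x_t(a)$. The convexity computation for $h$ is correct. Your approach avoids the multivariate derivative bookkeeping and the need to sort coordinates, and isolates the convexity step cleanly; the paper's derivative argument carries along the monotonicity in $\varepsilon$ as an extra fact, which is of no use in the application. Two very small points. First, the intermediate estimate $((1-\varepsilon)u+\varepsilon/d)^{\beta-2}\le(1-\varepsilon)^{\beta-2}u^{\beta-2}$ degenerates at $\varepsilon=1$; that endpoint is trivial anyway (both projected vectors collapse to $\*1_d/d$), or one can note the bracket is directly $-u^{\beta-2}<0$ there. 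Second, the alternative Shannon-mixing justification you sketch for $\beta=1$ does not quite work as stated (the $H_1(1-\varepsilon,\varepsilon)$ overhead from the grouping inequality is not absorbed), so you should rely on the continuity-in-$\beta$ argument, which is fine, or apply the same coordinatewise Jensen scheme directly with $h_1(u)=-((1-\varepsilon)u+\varepsilon/d)\log((1-\varepsilon)u+\varepsilon/d)+u\log u$, which is also convex on $(0,1]$.
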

\begin{proof}
	Assume w.l.o.g. that $\Bar{\*x}(1) \leq \Bar{\*x}(2) \leq \ldots \leq \Bar{\*x}(d)$ and $a_t=1$, so that $\*x_t^{(\varepsilon)}=\*e_1^{(\varepsilon)}$.
	We take the derivative
	\begin{equation}
	\begin{aligned}
	\partial_\varepsilon H_\beta &\left((1 - \varepsilon)\*{\bar x}+ \frac{\varepsilon}{d} \*1_d \right)
	- \partial_\varepsilon H_\beta \left( \*e_1^{(\varepsilon)} \right)\\
	&= \frac{d}{1-\beta} \sum_{a=1}^{d-1} \left( \frac{1}{((1 - \varepsilon) \*{\bar x}(a) + \varepsilon/d)^{1-\beta}} - \frac{1}{(\varepsilon/d)^{1-\beta}}\right)\\
	&+ \frac{d}{1-\beta} \sum_{a=1}^{d-1} \left( \frac{1}{((1 - \varepsilon) + \varepsilon/d)^{1-\beta}} - \frac{1}{((1 - \varepsilon)\*{\bar x}(d) + \varepsilon/d)^{1-\beta}}\right)\\
	&+ \frac{d^2}{1-\beta} \sum_{a=1}^{d-1} \*{\bar x}(a) \left( \frac{1}{((1 - \varepsilon)\*{\bar x}(d) + \varepsilon/d)^{1-\beta}} - \frac{1}{((1 - \varepsilon)\*{\bar x}(a) + \varepsilon/d)^{1-\beta}}\right)\\
	\end{aligned}
	\end{equation}
	By the assumption that $\*{\bar x}(a)$ is non-decreasing in $a$, each of the summands above become non-positive. So for $\varepsilon \in (0, 1]$ the derivative is non-positive, and for $\varepsilon \rightarrow 0^+$ it goes to $- \infty$. Thus the l.h.s. of the bound is monotonically non-increasing in $\varepsilon$ for all $\varepsilon\in[0,1]$.
	The result then follows from the fact that for $\varepsilon = 0$ we have $H_\beta \left((1 - \varepsilon) \Bar{\*x} + \frac{\varepsilon}{d} \mathbf{1}_d \right) - H_\beta \left( \*e_1^{(\varepsilon)} \right) = H_\beta(\Bar{\*x})$.
\end{proof}

\subsection{Implicit exploration bounds}

\begin{Lem}\label{lem:mirror}
	Suppose we play $\OMD_{\beta,\eta}$ with regularizer $\psi_\beta$ the negative Tsallis entropy and initialization $\*x_1\in\triangle$ on the sequence of linear loss functions $\ell_1,\dots,\ell_T\in[0,1]^d$.
	Then for any $\*x\in\triangle$ we have
	\begin{equation}
	\sum_{t=1}^T\langle\ell_t,\*x_t-\*x\rangle
	\le\frac{\Breg_\beta(\*x||\*x_1)}\eta+\frac\eta\beta\sum_{a=1}^d\*x_t^{2-\beta}(a)\ell_t^2(a)
	\end{equation}
\end{Lem}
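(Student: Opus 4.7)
The plan is to follow the standard mirror-descent regret analysis and then specialize the stability-term bound using the Hessian structure of the Tsallis regularizer. First, I would invoke the three-point identity for Bregman divergences together with the first-order optimality of the OMD iterate $\*x_{t+1}$ to obtain, for any comparator $\*x\in\triangle$,
\begin{equation}
\eta\langle\ell_t,\*x_t-\*x\rangle\le \Breg_\beta(\*x||\*x_t)-\Breg_\beta(\*x||\*x_{t+1})+\eta\langle\ell_t,\*x_t-\*x_{t+1}\rangle-\Breg_\beta(\*x_{t+1}||\*x_t).
\end{equation}
Summing over $t$ telescopes the first two terms into $\Breg_\beta(\*x||\*x_1)-\Breg_\beta(\*x||\*x_{T+1})\le\Breg_\beta(\*x||\*x_1)$ since the Bregman divergence is non-negative. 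Dividing through by $\eta$ reduces the lemma to bounding the per-round stability term $\eta\langle\ell_t,\*x_t-\*x_{t+1}\rangle-\Breg_\beta(\*x_{t+1}||\*x_t)$ by roughly $\tfrac{\eta^2}{\beta}\sum_a\*x_t^{2-\beta}(a)\ell_t^2(a)$.

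Second, I would exploit the Hessian structure of the Tsallis regularizer. A direct computation gives $\nabla^2\psi_\beta(\*p)=\beta\diag(\*p^{\beta-2})$, whose inverse is $\tfrac1\beta\diag(\*p^{2-\beta})$. Applying Fenchel--Young in the local norm induced by this Hessian (or equivalently a second-order Taylor expansion of $\Breg_\beta(\cdot||\*x_t)$ around $\*x_t$) yields a stability bound of the form
\begin{equation}
\eta\langle\ell_t,\*x_t-\*x_{t+1}\rangle-\Breg_\beta(\*x_{t+1}||\*x_t)\le\frac{\eta^2}{\beta}\sum_{a=1}^d\*x_t^{2-\beta}(a)\ell_t^2(a),
\end{equation}
as desired (any small numerical constant gap with the statement can be absorbed using the non-negativity of the losses arising from the implicit-exploration estimators).

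The main obstacle is ensuring the local-norm bound is evaluated at $\*x_t$ rather than at some intermediate point on the segment from $\*x_t$ to $\*x_{t+1}$. For this the standard device is to show that the Hessian entries $\*p^{\beta-2}(a)$ only \emph{grow} as $\*p(a)$ shrinks, and to argue via the structure of the OMD update (which tilts mass away from coordinates with large $\ell_t(a)$) that one cannot shrink a coordinate $\*x_t(a)$ by more than a controlled factor in a single step. Putting this together with the telescoped divergence term completes the proof, with the remaining work being mostly a careful bookkeeping of constants which I would defer to the appendix.
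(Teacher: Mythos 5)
Your plan is the standard ``agile'' OMD analysis (three-point identity plus a local-norm / Fenchel--Young bound on the stability term), whereas the paper's proof uses the ``lazy'' two-step formulation with pre-projection point $\*y_{t+1}$ defined by $\nabla\psi_\beta(\*y_{t+1})=\nabla\psi_\beta(\*x_t)-\eta\ell_t$, invokes the standard inequality
$\sum_t\langle\ell_t,\*x_t-\*x\rangle\le\tfrac{\Breg_\beta(\*x||\*x_1)}\eta+\tfrac1\eta\sum_t\Breg_\beta(\*x_t||\*y_{t+1})$,
and then bounds $\Breg_\beta(\*x_t||\*y_{t+1})$ \emph{exactly} by an explicit scalar computation. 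Because the Tsallis gradient gives a closed form $\*y_{t+1}^{\beta-1}(a)=\*x_t^{\beta-1}(a)+\tfrac{1-\beta}\beta\eta\ell_t(a)$, the paper can apply the elementary inequality $(1+x)^\alpha\le1+\alpha x+\alpha(\alpha-1)x^2$ for $x\ge0$, $\alpha<0$ coordinatewise to obtain $\*y_{t+1}^\beta(a)\le\*x_t^\beta(a)-\eta\*x_t(a)\ell_t(a)+\tfrac{\eta^2}\beta\*x_t^{2-\beta}(a)\ell_t^2(a)$, from which the claimed bound drops out with no constants to absorb.

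The genuine gap in your version is exactly the issue you flag and then wave away: the local-norm cancellation requires the Hessian of $\psi_\beta$ to be evaluated at the \emph{same} point in both the Fenchel--Young inequality and the second-order Taylor lower bound on $\Breg_\beta(\*x_{t+1}||\*x_t)$, but Taylor's theorem only puts it at an unspecified point $\tilde{\*z}$ on the segment from $\*x_t$ to $\*x_{t+1}$. Your heuristic (that the update ``tilts mass away'' from high-loss coordinates, so $\nabla^2\psi_\beta$ can only grow) is not correct as stated: because $\*x_{t+1}$ is a \emph{projection} onto the simplex, mass that is removed from some coordinates must reappear on others, so some $\*x_{t+1}(a)$ exceed $\*x_t(a)$, and there $\nabla^2\psi_\beta(\tilde{\*z})(a)=\beta\tilde{\*z}(a)^{\beta-2}$ is strictly \emph{smaller} than at $\*x_t$. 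This weakens the lower bound on $\Breg_\beta(\*x_{t+1}||\*x_t)$ precisely in the direction you need it to hold, so the ``controlled shrinkage'' argument does not close the proof. Making the local-norm route rigorous for Tsallis requires a genuine multiplicative-stability lemma (e.g.\ showing $\*x_{t+1}(a)\ge c\,\*x_t(a)$ for all $a$ and a universal $c$, as is done for negative entropy), which you have not supplied; the paper sidesteps the issue entirely by working with the unconstrained pre-projection point $\*y_{t+1}$, whose coordinates satisfy $\*y_{t+1}(a)\le\*x_t(a)$ coordinatewise for nonnegative losses and admit an exact formula, so no intermediate-point argument is ever needed. A secondary inaccuracy: the statement has the exact coefficient $\eta/\beta$, and the paper's proof produces it exactly; your claim that ``any small numerical constant gap can be absorbed using non-negativity of the losses'' does not follow and would require a different lemma statement downstream.
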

\begin{proof}
	Note that the following proof follows parts of the course notes by \citet{luo2017tsallis}, which we reproduce for completeness.
	The OMD update at each step $t$ involves the following two steps: set $\*y_{t+1}\in\triangle$ s.t. $\nabla\psi_\beta(\*y_{t+1})=\nabla\psi_\beta(\*x_t)-\eta\ell_t$ and then set $\*x_{t+1}=\argmin_{\*x\in\triangle}\Breg_\beta(\*x,\*y_{t+1})$ \citep[Algorithm~14]{hazan2015oco}.
	Note that by \citet[Equation~5.3]{hazan2015oco} and nonnegativity of the Bregman divergence we have
	\begin{equation}
	\sum_{t=1}^T\langle\ell_t,\*x_t-\*x\rangle
	\le\frac{\Breg_\beta(\*x||\*x_1)}\eta+\frac1\eta\sum_{t=1}^T\Breg_\beta(\*x_t||\*y_{t+1})
	\end{equation}
	To bound the second term, note that when $\psi_\beta$ is the negative Tsallis entropy we have
	\begin{align}
	\begin{split}
	\Breg_\beta&(\*x_t||\*y_{t+1})\\
	&=\frac1{1-\beta}\sum_{a=1}^d\left(\*y_{t+1}^\beta(a)-\*x_t^\beta(a)+\frac\beta{\*y_{t+1}^{1-\beta}(a)}(\*x_t(a)-\*y_{t+1}(a)\right)\\
	&=\frac1{1-\beta}\sum_{a=1}^d\left((1-\beta)\*y_{t+1}^\beta(a)-\*x_t^\beta(a)+\beta\left(\frac1{\*x_t^{1-\beta}(a)}+\frac{1-\beta}\beta\eta\ell_t(a)\right)\*x_t(a)\right)\\
	&=\sum_{a=1}^d\left(\*y_{t+1}^\beta(a)-\*x_t^\beta(a)+\eta\*x_t(a)\ell_t(a)\right)
	\end{split}
	\end{align}
	Plugging the following result, which follows from $(1+x)^\alpha\le1+\alpha x+\alpha(\alpha-1)x^2~\forall~x\ge0,\alpha<0$, into the above yields the desired bound.
	\begin{align}
	\begin{split}
	\*y_{t+1}^\beta(a)
	=\*x_t^\beta(a)\left(\frac{\*y_{t+1}^{\beta-1}(a)}{\*x_t^{\beta-1}(a)}\right)^\frac\beta{\beta-1}
	&=\*x_t^\beta(a)\left(1+\frac{1-\beta}\beta\eta\*x_t^{1-\beta}(a)\ell_t(a)\right)^\frac\beta{\beta-1}\\
	&\le\*x_t^\beta(a)\left(1-\eta\*x_t^{1-\beta}(a)\ell_t(a)+\frac{\eta^2}\beta\*x_t^{2-2\beta}(a)\ell_t(a)^2\right)\\
	&=\*x_t^\beta(a)-\eta\*x_t(a)\ell_t(a)+\frac{\eta^2}\beta\*x_t^{2-\beta}(a)\ell_t(a)^2
	\end{split}
	\end{align}
\end{proof}

\begin{Thm}\label{thm:implicit}
	In Algorithm~\ref{alg:combined}, let $\OMD_{\eta,\beta}$ be online mirror descent with the Tsallis entropy regularizer $\psi_\beta$ over $\gamma$-offset loss estimators, $\Theta_k$ is a subset of $[\underline\beta,\overline\beta]\subset[\frac1{\log d},1]$, and 
	\begin{equation}
		U_t(\*x,\eta,\beta)
		=\frac{\Breg_\beta(\*{\hat x}_t^{(\varepsilon)}||\*x)}\eta+\frac{\eta d^\beta m}\beta
	\end{equation}
	where $\*{\hat x}_t^{(\varepsilon)}=(1-\varepsilon)\*{\hat x}_t+\varepsilon\*1_d/d$.
	Note that $U_t^{(\rho)}(\*x,\eta,\beta)=U_t(\*x,\eta,\beta)+\frac{\rho^2(d^{1-\beta}-1)}{\eta(1-\beta)}$.
	Then there exists settings of $\underline\eta,\overline\eta,\alpha,\lambda$ s.t. for all $\varepsilon,\rho,\gamma\in(0,1)$ we have w.p. $\ge1-\delta$ that
	\begin{align}
	\begin{split}
		&\sum_{t=1}^T\sum_{i=1}^m\ell_{t,i}(a_{t,i})-\ell_{t,i}(\mathring a_t)\\
		&\le(\varepsilon+\gamma d)mT+\frac{2+\sqrt{\frac{d\log d}{em}}}\gamma\log\frac5\delta+\frac{8d\sqrt m}\rho\left(1_{k>1}\sqrt{T\log\frac{5k}\delta}+\frac{1+\log(T+1)}{16\rho}\right)\\
		&\quad+\min_{\beta\in[\underline\beta,\overline\beta],\eta>0}\frac{8\left(\frac d\varepsilon\right)^{2-\underline\beta}(1+\log T)}\eta+\left(\frac{\hat H_\beta}\eta+\frac{\eta d^\beta m}\beta+\frac{L_\eta(\overline\beta-\underline\beta)}{2k}+d\min\left\{\frac{\rho^2}{2\eta},\rho\sqrt m\right\}\right)T
	\end{split}
	\end{align}
	for $L_\eta=\left(\frac{\log\frac d\varepsilon}\eta+\eta m\log^2d\right)d$.
\end{Thm}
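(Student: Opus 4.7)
\textbf{Proof plan for Theorem~\ref{thm:implicit}.} The argument has two essentially independent components: (i)~express the actual regret as at most $\sum_t U_t(\*x_{t,1},\eta_t,\beta_t)$ plus a $(\varepsilon+\gamma d)mT$ term and concentration residuals, and (ii)~invoke Theorem~\ref{thm:structural} with the task-specific constants arising from the Tsallis regularizer. I would proceed exactly as sketched in~\eqref{eq:rub-implicit}, then compute the parameters that Theorem~\ref{thm:structural} asks for.

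For part~(i), fix task $t$ and split the per-task regret against $\*{\mathring x}_t$ into three pieces: (a)~the gap between $\langle\ell_{t,i},\*x_{t,i}\rangle$ and $\langle\hat\ell_{t,i},\*x_{t,i}\rangle$, (b)~the ``$\hat\ell$-regret'' $\sum_i\langle\hat\ell_{t,i},\*x_{t,i}-\*{\hat x}_t^{(\varepsilon)}\rangle$, and (c)~the gap between $\langle\hat\ell_t,\*{\hat x}_t^{(\varepsilon)}\rangle$ and $\langle\ell_t,\*{\mathring x}_t\rangle$. Piece~(b) is bounded directly by Lemma~\ref{lem:mirror}, whose second summand $\frac{\eta_t}{\beta_t}\sum_{a,i}\*x_{t,i}^{2-\beta_t}(a)\hat\ell_{t,i}^2(a)$ is controlled by $\frac{\eta_t d^{\beta_t} m}{\beta_t}$ (using $\hat\ell_{t,i}(a)\*x_{t,i}(a)\le 1$ and H\"older), yielding exactly $U_t(\*x_{t,1},\eta_t,\beta_t)$. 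Pieces~(a) and~(c) are the implicit-exploration bias/concentration contributions: using the $\gamma$-offset estimator machinery of~\citet{neu2015explore} together with the simple facts that $\*{\hat x}_t$ is the $\hat\ell_t$-minimizer and that $\|\*{\hat x}_t^{(\varepsilon)}-\*{\hat x}_t\|_1\le2\varepsilon$, one obtains a deterministic $(\varepsilon+\gamma d)mT$ term plus a martingale remainder handled by Neu's high-probability device, contributing $\frac{2+\sqrt{d\log d/(em)}}{\gamma}\log(5/\delta)$ once union-bounded with the other failure events.

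For part~(ii), identify $g(\beta)=d^\beta/\beta$ and $f(\beta)=0$, and set the constants that Theorem~\ref{thm:structural} requires: $D_\beta^2=\frac{d^{1-\beta}-1}{1-\beta}$, so $D^2=\BigO(d)$ on $[\tfrac1{\log d},1]$; $M=\max_\beta D_\beta\sqrt{g(\beta)m}\le d\sqrt m/2$ (which gives the prefactor $8d\sqrt m/\rho$); $K=1$; and $S\le\max_\beta\beta(d/\varepsilon)^{2-\beta}\le(d/\varepsilon)^{2-\underline\beta}$ since $\nabla^2\psi_\beta(\*x)=\beta\diag(\*x^{\beta-2})$ and $\*x_{t,1}\in\triangle^{(\varepsilon)}$. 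The Lipschitz constant $L_\eta$ of $\hat V_\beta^2/\eta+\eta d^\beta m/\beta$ in $\beta$ comes from two sources: Lemma~\ref{lem:tsallis-lipschitz} gives $2d\log(d/\varepsilon)/\eta$ for the first summand (since $\hat V_\beta^2$ is a difference of two Tsallis entropies of points in $\triangle^{(\varepsilon)}$), and direct differentiation of $\eta d^\beta m/\beta$ on $[\tfrac1{\log d},1]$ yields $\le\eta m\,d\log^2 d$, totalling the claimed $L_\eta=(\log(d/\varepsilon)/\eta+\eta m\log^2 d)d$. Substituting these into Theorem~\ref{thm:structural} with $R_\Theta=\overline\beta-\underline\beta$ and adding the implicit-exploration error from part~(i) yields the stated bound after a single union bound that absorbs the $1/\delta$ into each high-probability event (splitting $\delta$ into five pieces gives the $\log(5/\delta)$ and $\log(5k/\delta)$ terms).

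The main technical obstacle is a clean derivation of the $\beta$-Lipschitz bound $L_\eta$: the Tsallis entropy is not globally Lipschitz in $\beta$, and one must exploit the $\varepsilon$-boundary offset together with a careful derivative computation, as in Lemma~\ref{lem:tsallis-lipschitz}, to replace the unbounded $\log$ at the boundary by $\log(d/\varepsilon)$. A secondary difficulty is ensuring that the modified regularized objective $U_t^{(\rho)}$ differs from $U_t$ only by $\frac{\rho^2(d^{1-\beta}-1)}{\eta(1-\beta)}$, which matches the $D_\beta^2$ used in~\eqref{eq:mab-algo} and is what lets Theorem~\ref{thm:structural} contribute the $d\min\{\rho^2/(2\eta),\rho\sqrt m\}$ and the EWOO logarithmic terms cleanly.
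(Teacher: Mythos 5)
Your proposal follows essentially the same route as the paper's proof: decompose the true regret via the $\gamma$-offset estimator identity, bound the $\hat\ell$-regret by Lemma~\ref{lem:mirror}, control the bias and concentration terms via Neu's implicit-exploration device to get $(\varepsilon+\gamma d)mT+\tilde\BigO(1/\gamma)$, and then plug the Tsallis-specific constants ($g(\beta)=d^\beta/\beta$, $f\equiv0$, $D_\beta^2=\frac{d^{1-\beta}-1}{1-\beta}$, $S=(d/\varepsilon)^{2-\underline\beta}$, $K=1$, and the Lipschitz constant $L_\eta$ from Lemma~\ref{lem:tsallis-lipschitz} plus differentiation of $d^\beta/\beta$) into Theorem~\ref{thm:structural}. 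This is exactly the paper's proof up to reorganization.

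Two points to tighten. First, the paper takes $M=d\sqrt m$ (not $\le d\sqrt m/2$); your parenthetical that the $8d\sqrt m/\rho$ prefactor follows from $M\le d\sqrt m/2$ is internally inconsistent, since the structural theorem contributes $4M/\rho$ and $M/(2\rho^2)$, and you need $M\approx d\sqrt m$ to recover the stated constants. Second, and more substantively, Theorem~\ref{thm:structural} yields a bound in terms of $\hat V_\beta^2=\frac1T\sum_t\psi_\beta(\*{\hat x}_t^{(\varepsilon)})-\psi_\beta(\*{\hat{\bar x}}^{(\varepsilon)})$, not $\hat H_\beta=-\psi_\beta(\*{\hat{\bar x}})$ as the theorem statement requires. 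Your plan uses $\hat V_\beta^2$ but never converts it to $\hat H_\beta$; the paper closes this with Lemma~\ref{lem:tsallis-entropy}, which shows $\hat V_\beta^2\le\hat H_\beta$ precisely because the estimated optima $\*{\hat x}_t$ are vertices of $\triangle$. You should add that step explicitly, as without it the claimed bound is not literally produced.
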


\begin{proof}
	In this setting we have $g(\beta)=d^\beta/\beta$, $f(\beta)=0$, $D_\beta^2=\frac{d^{1-\beta}-1}{1-\beta}$, $D\le\sqrt{d/2}$, $M=d\sqrt m$, $F=0$, $S=(d/\varepsilon)^{2-\underline\beta}$, and $K=1$. 
	We have that
	\begin{align}
	\begin{split}
	\sum_{t=1}^T&\sum_{i=1}^m\ell_{t,i}(a_{t,i})-\ell_{t,i}(\mathring a_t)\\
	&=\sum_{t=1}^T\sum_{i=1}^m\langle\hat\ell_{t,i},\*x_{t,i}\rangle-\ell_{t,i}(\mathring a_t)+\gamma\sum_{a=1}^d\hat\ell_{t,i}(a)\\
	&\le\sum_{t=1}^T\frac{\Breg_{\beta_t}(\hat{\*x}_t^{(\varepsilon)}||\*x_{t,1})}{\eta_t}+\sum_{i=1}^m\langle\hat\ell_{t,i},\hat{\*x}_t^{(\varepsilon)}\rangle-\ell_{t,i}(\mathring a_t)+\frac{\eta_t}{\beta_t}\sum_{a=1}^d\*x_{t,i}^{2-\beta_t}(a)\hat\ell_{t,i}^2(a)+\gamma\sum_{a=1}^d\hat\ell_{t,i}(a)\\
	&\le\varepsilon mT+\sum_{t=1}^T\frac{\Breg_{\beta_t}(\hat{\*x}_t^{(\varepsilon)}||\*x_{t,1})}{\eta_t}+\sum_{i=1}^m\langle\hat\ell_{t,i},\hat{\*x}_t^{(\varepsilon)}\rangle-\langle\ell_{t,i},\*{\mathring x}_t^{(\varepsilon)}\rangle\\
	&\qquad+\sum_{t=1}^T\frac{\eta_t}{\beta_t}\sum_{i=1}^m\sum_{a=1}^d\*x_{t,i}^{1-\beta_t}(a)\hat\ell_{t,i}(a)+\gamma\sum_{a=1}^d\hat\ell_{t,i}(a)
	\end{split}
	\end{align}
	where the equality follows similarly to \citet{luo2017tsallis} since $\langle\hat\ell_{t,i},\*x_{t,i}\rangle=\ell_{t,i}(a_{t,i})-\gamma\sum_{a=1}^d\hat\ell_{t,i}(a)$, the first inequality follows by Lemma~\ref{lem:mirror} and the second by H\"older's inequality and the definitions of $\hat\ell_{t,i}$ and $\hat{\*x}_{t,i}^{(\varepsilon)}$.
	We next apply the optimality of $\hat a_t$ for $\sum_{i=1}^m\hat\ell_{t,i}$ to get
	\begin{align}\label{eq:implicit}
	\begin{split}
	\sum_{t=1}^T&\sum_{i=1}^m\ell_{t,i}(a_{t,i})-\ell_{t,i}(\mathring a_t)\\
	&\le\varepsilon mT+\sum_{t=1}^T\frac{\Breg_{\beta_t}(\hat{\*x}_t^{(\varepsilon)}||\*x_{t,1})}{\eta_t}+(1-\varepsilon)\sum_{i=1}^m\hat\ell_{t,i}(\mathring a_t)-\ell_{t,i}(\mathring a_t)+\frac\varepsilon d\sum_{a=1}^d\hat\ell_{t,i}(a)-\ell_{t,i}(a)\\
	&\qquad+\sum_{t=1}^T\frac{\eta_t}{\beta_t}\sum_{i=1}^m\sum_{a=1}^d\*x_{t,i}^{1-\beta_t}(a)\hat\ell_{t,i}(a)+\gamma\sum_{a=1}^d\hat\ell_{t,i}(a)\\
	&\le\varepsilon mT+\frac{1+\frac\varepsilon d+\frac{\overline\eta}{\underline\beta}+\gamma}{2\gamma}\log\frac5\delta+\sum_{t=1}^T\frac{\Breg_{\beta_t}(\hat{\*x}_t^{(\varepsilon)}||\*x_{t,1})}{\eta_t}\\
	&\qquad+\sum_{t=1}^T\frac{\eta_t}{\beta_t}\sum_{i=1}^m\sum_{a=1}^d\*x_{t,i}^{1-\beta_t}(a)\ell_{t,i}(a)+\gamma\sum_{a=1}^d\ell_{t,i}(a)\\
	&\le\varepsilon mT+\frac{2+\sqrt{\frac{d\log d}{em}}}\gamma\log\frac5\delta+\gamma dmT+\sum_{t=1}^T\frac{\Breg_{\beta_t}(\hat{\*x}_t^{(\varepsilon)}||\*x_{t,1})}{\eta_t}+\frac{\eta_td^{\beta_t}m}{\beta_t}
	\end{split}
	\end{align}
	where the the second inequality follows by \citet[Lemma~1]{neu2015explore} applied to each of the last four terms and the fifth by the definition of $\ell_{t,i}$ and using $\max_{\beta\in[\frac1{\log d},1]}\overline\eta(\beta)\le\sqrt{\frac d{em\log d}}$.
	Substituting into Theorem~\ref{thm:structural} and simplifying yields the result except with $\hat V_\beta^2=\frac1T\sum_{t=1}^T\psi_\beta(\*{\hat x}_t^{(\varepsilon)})-\psi_\beta(\*{\hat{\bar x}}^{(\varepsilon)})$ in place of $\hat H_\beta$, but the former is bounded by the latter by Lemma~\ref{lem:tsallis-entropy}.
\end{proof}

\begin{Cor}\label{cor:one}
	Let $\underline\beta=\overline\beta=1$.
	Then w.h.p. we can ensure task-averaged regret at most
	\begin{equation}
	2\sqrt{\hat H_1dm}+\tilde\BigO\left(\frac{d\sqrt m+d^\frac23m^\frac23}{\sqrt[3]T}\right)
	\end{equation}
	so long as $mT\ge d^2$ or alternatively ensure
	\begin{equation}
		\min\left\{2\sqrt{\hat H_1dm}+\tilde\BigO\left(\frac{d^\frac34m^\frac34+d\sqrt m}{\sqrt[4]T}\right),2\sqrt{dm\log d}+\tilde\BigO\left(\frac{d^\frac32\sqrt m}{\sqrt T}\right)\right\}
	\end{equation}
	so long as $mT\ge d$.
\end{Cor}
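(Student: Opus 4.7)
The plan is to specialize Theorem~\ref{thm:implicit} to $\underline\beta = \overline\beta = 1$. With only one value of $\beta$ in the grid ($k=1$), the MW regret term with indicator $1_{k>1}$ and the Lipschitz-to-grid penalty $L_\eta(\overline\beta-\underline\beta)/(2k)$ both vanish. Using $g(1)=d$ and $(d/\varepsilon)^{2-\underline\beta}=d/\varepsilon$, selecting the second branch $\rho\sqrt m$ inside the inner $\min$ (removing its $\eta$-dependence), optimizing $\eta=\sqrt{(\hat H_1+8d\log T/(\varepsilon T))/(dm)}$, applying $\sqrt{a+b}\le\sqrt a+\sqrt b$, and dividing by $T$ gives a task-averaged bound of the form
\begin{equation*}
2\sqrt{\hat H_1\, dm}+(\varepsilon+\gamma d)m+d\rho\sqrt m+\tilde{\BigO}\!\left(\frac{1}{\gamma T}+\frac{d\sqrt m}{\rho^2 T}+\sqrt{\frac{d^2 m}{\varepsilon T}}\right).
\end{equation*}

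For the first claimed inequality (requiring $mT\ge d^2$), I would balance the three pairs of tuning terms: set $\gamma=1/\sqrt{dmT}$ so $\gamma dm$ and $1/(\gamma T)$ both equal $\sqrt{dm/T}$ (which is dominated); set $\rho=T^{-1/3}$ so $d\rho\sqrt m$ and $d\sqrt m/(\rho^2 T)$ both equal $d\sqrt m/T^{1/3}$; and set $\varepsilon=(d^2/(mT))^{1/3}$ so $\varepsilon m$ and $\sqrt{d^2 m/(\varepsilon T)}$ both equal $d^{2/3}m^{2/3}/T^{1/3}$. The feasibility constraint $\varepsilon\le 1$ is precisely the hypothesis $mT\ge d^2$, and summing the pieces yields the claimed $2\sqrt{\hat H_1\, dm}+\tilde{\BigO}((d\sqrt m+d^{2/3}m^{2/3})/T^{1/3})$.

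For the alternative bound (requiring only $mT\ge d$), I would instantiate two parameter settings and take the pointwise minimum, with the algorithm running whichever gives the better upper bound. For the first branch of the $\min$, set $\varepsilon=\sqrt{d/(mT)}$ (feasible exactly when $mT\ge d$) and $\rho=T^{-1/4}$; the same balancing argument yields $\tilde{\BigO}((d^{3/4}m^{3/4}+d\sqrt m)/T^{1/4})$ next to $2\sqrt{\hat H_1\, dm}$. For the worst-case branch, use the trivial estimate $\hat H_1\le\log d$, which allows the residual $\sqrt{d^2 m\log T/(\varepsilon T)}$ to be absorbed into $\sqrt{dm\log d}$ whenever $\varepsilon T\gtrsim d\log T/\log d$; combined with $\rho=T^{-1/2}$ (which makes $d\rho\sqrt m=d\sqrt m/\sqrt T$ and absorbs $d\sqrt m/(\rho^2 T)=d\sqrt m$ into $d^{3/2}\sqrt m/\sqrt T$) and a modest choice of $\varepsilon$, this delivers the $\tilde{\BigO}(d^{3/2}\sqrt m/\sqrt T)$ remainder. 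The main obstacle I expect is bookkeeping: verifying that the $\sqrt{a+b}$ split is not wasteful in every feasible regime of $(d,m,T)$, and that the worst-case branch's two constraints on $\varepsilon$ (simultaneously $\varepsilon m=\tilde{\BigO}(d^{3/2}\sqrt m/\sqrt T)$ and $\varepsilon T\gtrsim d\log T/\log d$) are compatible with $\varepsilon\le 1$, potentially via a small case split on whether $T\gtrsim d$ (both cases being consistent with $mT\ge d$).
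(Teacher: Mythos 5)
Your overall strategy is the same as the paper's: specialize Theorem~\ref{thm:implicit} to $\underline\beta=\overline\beta=1$ (so the grid and Lipschitz terms vanish), balance the $\gamma$, $\rho$, $\varepsilon$ terms, and optimize $\eta$ in the surviving part. Your parameter choices for the first bound ($\gamma=1/\sqrt{dmT}$, $\varepsilon=(d^2/(mT))^{1/3}$, $\rho=T^{-1/3}$) and for the first branch of the second bound ($\varepsilon=\sqrt{d/(mT)}$, $\rho=T^{-1/4}$) coincide with the paper's, and your $\eta=\sqrt{(\hat H_1+8d\log T/(\varepsilon T))/(dm)}$ is equivalent to the paper's $\eta=\sqrt{\hat H_1/(dm)}+\text{(offset)}$ up to constants. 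So for those two bounds your proposal is correct and essentially the paper's proof.

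The gap is in the worst-case branch. You commit to the $\rho\sqrt m$ side of $\min\{\rho^2/(2\eta),\rho\sqrt m\}$ up front and then set $\rho=T^{-1/2}$, which makes the other $\rho$-term
$d\sqrt m\cdot\tilde\BigO(1)/(\rho^2 T)=\tilde\BigO(d\sqrt m)$
independent of $T$; this fits inside $\tilde\BigO(d^{3/2}\sqrt m/\sqrt T)$ only when $T\lesssim d$. You flag a case split, but the case split cannot be completed within your strategy: once you have frozen the $\rho\sqrt m$ branch you need simultaneously $d\rho\sqrt m\le d^{3/2}\sqrt m/\sqrt T$ and $d\sqrt m/(\rho^2 T)\le d^{3/2}\sqrt m/\sqrt T$, i.e. $\rho\in[d^{-1/4}T^{-1/4},\,d^{1/2}T^{-1/2}]$, and this interval is empty whenever $T>d^3$. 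So for $T\gg d^3$ no choice of $\rho$ under your pre-selection delivers the claimed remainder. The paper sidesteps this by keeping $\rho=T^{-1/4}$ for \emph{both} $\eta$-choices and leaving the $\min$ intact: with $\eta=\sqrt{\log d/(dm)}$ and large $T$ the $\rho^2/(2\eta)$ side is the smaller one and contributes $d\cdot\rho^2/\eta=d^{3/2}\sqrt m/\sqrt{T\log d}$, which is within budget, while $d\sqrt m/(\rho^2 T)=d\sqrt m/\sqrt T$ is also within budget. You should therefore not fix a branch of the $\min$ for the worst-case estimate; reuse $\rho=T^{-1/4}$ and let the $\min$ select the branch.

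Minor note: your separate concern about compatibility of the constraints on $\varepsilon$ is resolved by just reusing $\varepsilon=\sqrt{d/(mT)}$ (the paper's choice) for both $\eta$-settings in the second bound; then $\varepsilon m=\sqrt{dm/T}$ is dominated and there is no second constraint on $\varepsilon$ to worry about, because the $8d\log T/(\varepsilon\eta T)$ term is simply evaluated and kept rather than forced below a target via a lower bound on $\varepsilon$.
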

\begin{proof}
	Applying Theorem~\ref{thm:implicit}, simplifying, and dividing by $T$ yields task-averaged regret at most
	\begin{align}
	\begin{split}
	&(\varepsilon+\gamma d)m+\frac{2+\sqrt{\frac{d\log d}{em}}}{\gamma T}\log\frac5\delta+\left(\frac{1+\log(T+1)}{2\rho^2T}+\min\left\{\frac{\rho^2}{\eta\sqrt m},\rho\right\}\right)d\sqrt m\\
	&\qquad+\min_{\eta>0}\frac{8d(1+\log T)}{\varepsilon\eta T}+\left(\frac{\hat H_1}\eta+\eta dm\right)
	\end{split}
	\end{align}
	Set $\gamma=\frac1{\sqrt{dmT}}$.
	Then set $\varepsilon=\sqrt[3]{\frac{d^2}{mT}}$ and $\rho=\frac1{\sqrt[3]T}$, and use $\eta=\sqrt{\frac{\hat H_1}{dm}}+\frac1{\sqrt[3]{dmT}}$ to get the first result.
	Otherwise, set $\varepsilon=\sqrt{\frac d{mT}}$ and $\rho=\frac1{\sqrt[4]T}$, and use the better of $\eta=\sqrt{\frac{\hat H_1}{dm}}+\frac1{\sqrt[4]{dmT}}$ and $\eta=\sqrt{\frac{\log d}{dm}}$ to get the second.
\end{proof}

\begin{Cor}\label{cor:half}
	Let $\underline\beta=\frac12$ and $\overline\beta=1$ and assume $mT\ge d^\frac52$.
	Then w.h.p. we can ensure task-averaged regret at most
	\begin{equation}
		\min_{\beta\in[\frac12,1]}2\sqrt{\hat H_\beta d^\beta m/\beta}+\tilde\BigO\left(\frac{d^\frac57m^\frac57}{T^\frac27}+\frac{d\sqrt m}{\sqrt[4]T}\right)
	\end{equation}
	using $k=\left\lceil\sqrt[4]d\sqrt T\right\rceil$.
\end{Cor}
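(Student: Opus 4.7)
The plan is to specialize Theorem~\ref{thm:implicit} to the interval $[\underline\beta,\overline\beta]=[\frac12,1]$, divide by $T$, and judiciously choose the remaining free parameters $\varepsilon,\gamma,\rho$ (alongside the stated $k=\lceil\sqrt[4]d\sqrt T\rceil$) so that every term outside of $\min_\beta 2\sqrt{\hat H_\beta d^\beta m/\beta}$ falls into one of the two advertised error rates $\tilde\BigO(d^{5/7}m^{5/7}/T^{2/7})$ or $\tilde\BigO(d\sqrt m/\sqrt[4]T)$.

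First I would handle the ``easy'' parameters. Setting $\rho=1/\sqrt[4]T$ makes the three $\rho$-dependent contributions---namely $d\sqrt m\sqrt{\log k}/(\rho\sqrt T)$ from the MW confidence, $d\sqrt m/(\rho^2 T)$ from EWOO regularization, and the envelope $d\rho\sqrt m$ of $d\min\{\rho^2/(2\eta),\rho\sqrt m\}$---all collapse to $\tilde\BigO(d\sqrt m/\sqrt[4]T)$. Setting $\gamma=1/\sqrt{dmT}$ balances $\gamma dm$ against $1/(\gamma T)$, both giving $\sqrt{dm/T}$, which is dominated by the same rate.

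The main work is the joint minimization over $\beta\in[\tfrac12,1]$ and $\eta>0$. Collecting the $1/\eta$-coefficients into $A=\hat H_\beta+8(d/\varepsilon)^{3/2}(1+\log T)/T+d\log(d/\varepsilon)/(4k)$ and the $\eta$-coefficients into $B=d^\beta m/\beta+dm\log^2 d/(4k)$ (where the final summand in each comes from $L_\eta(\overline\beta-\underline\beta)/(2k)$ with $L_\eta=d(\log(d/\varepsilon)/\eta+\eta m\log^2 d)$), the $\eta$-minimum is $2\sqrt{AB}$. I would then expand this via $\sqrt{x+y}\le\sqrt x+\sqrt y$ to separate the main term $2\sqrt{\hat H_\beta d^\beta m/\beta}$ from six cross terms. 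Using $\hat H_\beta\le\frac{d^{1-\beta}-1}{1-\beta}\le 2\sqrt d$ valid for $\beta\ge\tfrac12$ along with the prescribed $k$, five of the six cross terms reduce to $\tilde\BigO(d\sqrt m/\sqrt[4]T)$; the exception is $2\sqrt{(d/\varepsilon)^{3/2}(1+\log T)d^\beta m/(\beta T)}=\tilde\BigO(d^{5/4}\sqrt m/(\varepsilon^{3/4}\sqrt T))$, which must be balanced against $\varepsilon m$.

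Finally I would choose $\varepsilon$ to balance this surviving cross term against $\varepsilon m$: solving $\varepsilon m=d^{5/4}\sqrt m/(\varepsilon^{3/4}\sqrt T)$ gives $\varepsilon=\Theta(d^{5/7}/(mT)^{2/7})$, for which both sides equal $\Theta(d^{5/7}m^{5/7}/T^{2/7})$, and the assumption $mT\ge d^{5/2}$ is precisely what ensures $\varepsilon\le1$ so this setting is feasible. The main obstacle will be the bookkeeping in the $2\sqrt{AB}$ expansion, as the several cross terms mixing $\hat H_\beta$, the $(d/\varepsilon)^{3/2}$ FTL-constant correction, and the $L_\eta/k$ Lipschitz correction must each individually be verified to meet one of the two target rates. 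The specific exponent $(d/\varepsilon)^{3/2}$---rather than the $(d/\varepsilon)^2$ that would arise for smaller $\underline\beta$---is what ultimately yields the improved $T^{-2/7}$ rate relative to Corollary~\ref{cor:logd}, so the $\underline\beta=\tfrac12$ restriction must be invoked explicitly when bounding both $\hat H_\beta$ and $d^\beta/\beta$.
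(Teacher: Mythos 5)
Your proposal is correct and follows essentially the same route as the paper's proof: both apply Theorem~\ref{thm:implicit}, divide by $T$, and choose $\gamma=1/\sqrt{dmT}$, $\rho=1/\sqrt[4]{T}$, $k=\lceil\sqrt[4]{d}\sqrt{T}\rceil$, and $\varepsilon=d^{5/7}/(mT)^{2/7}$. The only cosmetic difference is that you close the $\eta$-minimum exactly as $2\sqrt{AB}$ and then bound the cross terms via subadditivity of $\sqrt{\cdot}$, whereas the paper plugs in the explicit near-optimal choice $\eta=\sqrt{\beta\hat H_\beta/(md^\beta)}+1/(dmT)^{2/7}$; these are equivalent bookkeeping and yield the same rates.
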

\begin{proof}
	Applying Theorem~\ref{thm:implicit}, simplifying, and dividing by $T$ yields task-averaged regret at most
	\begin{align}
	\begin{split}
	&(\varepsilon+\gamma d)m+\frac{2+\sqrt{\frac{d\log d}{em}}}{\gamma T}\log\frac5\delta+\frac{8d\sqrt m}\rho\left(\sqrt{\frac{\log\frac{5k}\delta}T}+\frac{1+\log(T+1)}{16\rho T}\right)\\
	&\quad+\min_{\beta\in[\underline\beta,\overline\beta],\eta>0}\frac{8d^\frac32(1+\log T)}{\varepsilon^\frac32\eta T}+\left(\frac{\hat H_\beta}\eta+\frac{\eta d^\beta m}\beta+\frac d{4k}\left(\frac{\log\frac d\varepsilon}\eta+\eta m\log^2d\right)+\rho d\sqrt m\right)
	\end{split}
	\end{align}
	Set $\gamma=\frac1{\sqrt{dmT}}$, $\varepsilon=\frac{d^\frac57}{(mT)^\frac27}$, $\rho=\frac1{\sqrt[4]T}$, and use $\eta=\sqrt{\frac{\beta\hat H_\beta}{md^\beta}}+\frac1{(dmT)^\frac27}$ to get the result.
\end{proof}

\begin{Cor}\label{cor:logd}
	Let $\underline\beta=\frac1{\log d}$ and $\overline\beta=1$ and assume $mT\ge d^3$.
	Then w.h.p. we can ensure task-averaged regret at most
	\begin{equation}
		\min_{\beta\in(0,1]}2\sqrt{\hat H_\beta d^\beta m/\beta}+\tilde\BigO\left(\frac{d^\frac34m^\frac34+d\sqrt m}{\sqrt[4]T}\right)
	\end{equation}
	using $k=\left\lceil\sqrt[4]d\sqrt T\right\rceil$.
\end{Cor}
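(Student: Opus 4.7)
The plan is to follow the same template as in the proofs of Corollaries~\ref{cor:one} and~\ref{cor:half}: apply Theorem~\ref{thm:implicit} with $\underline\beta=\frac{1}{\log d}$ and $\overline\beta=1$, divide the resulting bound by $T$, and then choose the free parameters $\gamma,\varepsilon,\rho,k$ so that every error term is absorbed into $\tilde\BigO((d^{3/4}m^{3/4}+d\sqrt{m})/\sqrt[4]{T})$. Dividing the guarantee of Theorem~\ref{thm:implicit} by $T$ produces three groups of terms: (i) the implicit-exploration terms $(\varepsilon+\gamma d)m+\tilde\BigO(1/(\gamma T))$, (ii) the $\rho$-induced terms $\tilde\BigO(d\sqrt{m}/(\rho\sqrt{T})+d\sqrt{m}/(\rho^2T)+d\rho\sqrt{m})$, and (iii) the $\eta$-$\beta$ trade-off $\frac{\hat H_\beta}{\eta}+\frac{\eta d^\beta m}{\beta}+\frac{8(d/\varepsilon)^{2-1/\log d}\log T}{\eta T}+\frac{L_\eta}{2k}$.

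First I would set $\rho=T^{-1/4}$ so the two $\rho$-terms in (ii) balance at rate $d\sqrt{m}/\sqrt[4]{T}$, and $\gamma=1/\sqrt{dmT}$ so the contributions of (i) coming from $\gamma$ are at most $\sqrt{dm/T}$, which is absorbed into the target. The key choice, distinct from the $\underline\beta=\tfrac12$ case in Corollary~\ref{cor:half}, is $\varepsilon=d^{3/4}/(mT)^{1/4}$; this lies in $(0,1]$ exactly when $mT\ge d^3$, which is precisely the hypothesis of the corollary. Setting $k=\lceil\sqrt[4]{d}\sqrt{T}\rceil$ as prescribed will then make $L_\eta/(2k)$ small enough.

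The heart of the argument is the $\eta$-optimization in (iii). Because $d/\varepsilon\ge1$, one has $(d/\varepsilon)^{2-1/\log d}\le(d/\varepsilon)^2=d^{1/2}(mT)^{1/2}$, so combining the $\hat H_\beta$ contribution with the $(d/\varepsilon)^{2-1/\log d}\log T/T$ contribution into a single $1/\eta$ term and optimizing against $\eta d^\beta m/\beta$ yields the announced main term $2\sqrt{\hat H_\beta d^\beta m/\beta}$ plus an additive residue of order $\sqrt{(d/\varepsilon)^2 d^\beta m\log T/(\beta T)}=\sqrt{d^{\beta+1/2}m^{3/2}\log T/(\beta\sqrt{T})}$. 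Evaluated over $\beta\in[1/\log d,1]$, this residue is largest at $\beta=1$, giving exactly $\tilde\BigO(d^{3/4}m^{3/4}/T^{1/4})$, the first target term.

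The hard part will be the routine but careful bookkeeping for the Lipschitz term $L_\eta/(2k)$ with $L_\eta=d(\log(d/\varepsilon)/\eta+\eta m\log^2 d)$: because the optimal $\eta$ can be as small as $\tilde\Theta(1/\sqrt{dm})$, this term is of order $\tilde\BigO(d^{5/4}\sqrt{m}/\sqrt{T})$, which is dominated by $d^{3/4}m^{3/4}/\sqrt[4]{T}$ whenever $mT\ge d^2$---implied by our hypothesis. I would also need to verify that the $(d/\varepsilon)^{1/\log d}$ correction contributes only an $\BigO(1)$ factor absorbable into $\tilde\BigO$, which follows because $(d/\varepsilon)^{1/\log d}=\exp(\log(d/\varepsilon)/\log d)$ is of constant order whenever $\log(d/\varepsilon)=\BigO(\log d)$, i.e. for $\varepsilon\ge 1/\poly(d)$. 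Once all the error contributions are subsumed into the target rate, the high-probability guarantee is inherited directly from that of Theorem~\ref{thm:implicit}.
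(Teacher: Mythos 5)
Your proposal is correct and follows essentially the same route as the paper's proof: both apply Theorem~\ref{thm:implicit} with the interval $[\frac1{\log d},1]$, divide by $T$, and choose $\gamma=\frac1{\sqrt{dmT}}$, $\varepsilon=\frac{d^{3/4}}{\sqrt[4]{mT}}$, $\rho=\frac1{\sqrt[4]{T}}$, $k=\lceil\sqrt[4]d\sqrt T\rceil$; the paper then just substitutes $\eta=\sqrt{\beta\hat H_\beta/(md^\beta)}+1/\sqrt[4]{dmT}$ rather than explicitly optimizing as you do, but the two calculations produce the same residue. The one missing step is the justification of the change from $\min_{\beta\in[\frac1{\log d},1]}$ (which is what Theorem~\ref{thm:implicit} delivers) to the announced $\min_{\beta\in(0,1]}$ in the corollary statement: the paper dispatches this with the observation that both $\hat H_\beta$ and $d^\beta/\beta$ are decreasing for $\beta<\frac1{\log d}$, so the optimizer over $(0,1]$ is never below $\frac1{\log d}$; you should add this line.
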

\begin{proof}
	Applying Theorem~\ref{thm:implicit}, dividing by $T$, and simplifying yields
	\begin{align}
	\begin{split}
	&(\varepsilon+\gamma d)m+\frac{2+\sqrt{\frac{d\log d}{em}}}{\gamma T}\log\frac5\delta+\frac{8d\sqrt m}\rho\left(\sqrt{\frac{\log\frac{5k}\delta}T}+\frac{1+\log(T+1)}{16\rho T}\right)\\
	&\quad+\min_{\beta\in[\underline\beta,\overline\beta],\eta>0}\frac{8d^2(1+\log T)}{\varepsilon^2\eta T}+\left(\frac{\hat H_\beta}\eta+\frac{\eta d^\beta m}\beta+\frac d{2k}\left(\frac{\log\frac d\varepsilon}\eta+\eta\log^2d\right)+\rho d\sqrt m\right)
	\end{split}
	\end{align}
	Note that $\hat H_\beta$ and $\frac{d^\beta}\beta$ are both decreasing on $\beta<\frac1{\log d}$, so $\beta$ in the chosen interval is optimal over all $\beta\in(0,1]$.
	Set $\gamma=\frac1{\sqrt{dmT}}$, $\varepsilon=\frac{d^\frac34}{\sqrt[4]{mT}}$, $\rho=\frac1{\sqrt[4]T}$, and use $\eta=\sqrt{\frac{\beta\hat H_\beta}{md^\beta}}+\frac1{\sqrt[4]{dmT}}$ to get the result.
\end{proof}

\ifdefined\arxiv\newpage\fi
\section{Guaranteed exploration}

\subsection{Best-arm identification}

\begin{Lem}\label{lem:identification}
	Suppose for $\varepsilon>0$ we run OMD on task $t\in[T]$ with initialization $\*x_{t,1}\in\triangle^{(\varepsilon)}$, regularizer $\psi_{\beta_t}+I_{\triangle^{(\varepsilon)}}$ for some $\beta_t\in(0,1]$, and unbiased loss estimators ($\gamma=0$).
	If Assumption~\ref{asu:gap} holds and $m>\frac{28d\log d}{3\varepsilon\Delta^2}$ then $\*{\hat x}_t=\*{\mathring x}_t$ w.p. $\ge1-d\kappa$, where $\kappa=\exp\left(-\frac{3\varepsilon\Delta^2m}{28d}\right)$.
\end{Lem}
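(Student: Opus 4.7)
The plan is to reduce the event $\*{\hat x}_t=\*{\mathring x}_t$ to a best-arm identification statement and then apply a Bernstein-type concentration inequality to the unbiased loss estimators. Since $\*{\hat x}_t$ is the minimizer of the linear function $\langle\hat\ell_t,\cdot\rangle$ over the simplex $\triangle$, it lies at the vertex $\*e_{\hat a_t}$ where $\hat a_t=\argmin_a\hat\ell_t(a)$. Thus $\*{\hat x}_t=\*{\mathring x}_t$ iff $\hat a_t=\mathring a_t$, and a union bound reduces the task to showing, for each $a\neq\mathring a_t$, that $\Pr(\hat\ell_t(\mathring a_t)\ge\hat\ell_t(a))\le\kappa$.

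Fix such an $a$ and let $D_i=\hat\ell_{t,i}(\mathring a_t)-\hat\ell_{t,i}(a)$, whose conditional expectation given $\mathcal{F}_{i-1}$ is $\ell_{t,i}(\mathring a_t)-\ell_{t,i}(a)$ because the estimators are unbiased ($\gamma=0$). The guaranteed-exploration constraint $\*x_{t,i}\in\triangle^{(\varepsilon)}$ (enforced by the added indicator $I_{\triangle^{(\varepsilon)}}$ in the regularizer) implies $\*x_{t,i}(a'),\*x_{t,i}(\mathring a_t)\ge\varepsilon/d$, which gives the uniform bound $|\hat\ell_{t,i}(\cdot)|\le d/\varepsilon$ and second-moment bound $\E[\hat\ell_{t,i}(\cdot)^2\mid\mathcal{F}_{i-1}]\le d/\varepsilon$. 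Crucially, at each round only one arm is played, so $\hat\ell_{t,i}(\mathring a_t)\hat\ell_{t,i}(a)=0$ almost surely; expanding $D_i^2$ and taking conditional expectation then yields $\E[D_i^2\mid\mathcal{F}_{i-1}]\le 2d/\varepsilon$, avoiding a loose cross-term bound.

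Summing the conditional expectations gives $\E\sum_i D_i=\sum_i(\ell_{t,i}(\mathring a_t)-\ell_{t,i}(a))\le-\Delta m$ by Assumption~\ref{asu:gap}. Applying Freedman's inequality (the martingale form of Bernstein, needed because $\*x_{t,i}$ is $\mathcal{F}_{i-1}$-adapted) to the centered sum, the probability that $\sum_i D_i\ge 0$ requires a deviation of at least $\Delta m$ and is therefore at most
\begin{equation*}
\exp\!\left(-\frac{(\Delta m)^2}{2(2md/\varepsilon+(d/\varepsilon)\Delta m/3)}\right)=\exp\!\left(-\Omega(\varepsilon\Delta^2m/d)\right),
\end{equation*}
with the careful constant-tracking (using $\Delta\le 1$) giving the claimed exponent $3\varepsilon\Delta^2m/(28d)$. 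A union bound over the $d-1$ suboptimal arms yields the overall failure probability $d\kappa$, and the hypothesis $m>28d\log d/(3\varepsilon\Delta^2)$ is precisely the condition ensuring $d\kappa<1$ so that the guarantee is nontrivial.

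The main obstacle is constant-tracking: one must use the martingale-difference form of Bernstein (since $\*x_{t,i}$ depends on the history), and must exploit the disjointness $\hat\ell_{t,i}(a)\hat\ell_{t,i}(\mathring a_t)=0$ to keep the conditional variance at $2d/\varepsilon$ per round rather than a looser bound; everything else reduces to standard concentration bookkeeping.
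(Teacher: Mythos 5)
Your proof is correct and reaches the claimed bound, but it takes a genuinely different route from the paper's. The paper (following Abbasi-Yadkori et al.) controls each arm's estimator individually: it writes $\Pr(\hat{\*x}_t\neq\mathring{\*x}_t)$ in terms of the event that $\hat\ell_t(\mathring a_t)$ overshoots $\ell_t(\mathring a_t)$ by $m\Delta_{\mathring a_t}/2$ or some $\hat\ell_t(a)$ undershoots $\ell_t(a)$ by $m\Delta_a/2$ (the ``midpoint'' trick), and then applies a scalar martingale Bernstein inequality (Fan et al.\ 2012) to each centered sum $\sum_i\hat\ell_{t,i}(a)-\ell_{t,i}(a)$ with per-round variance $\le d/\varepsilon$. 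You instead work with the pairwise differences $D_i=\hat\ell_{t,i}(\mathring a_t)-\hat\ell_{t,i}(a)$ and the full gap $\Delta_a m$, exploiting the observation that at most one of $\hat\ell_{t,i}(\mathring a_t)$ and $\hat\ell_{t,i}(a)$ is nonzero in any round to obtain $\E[D_i^2\mid\mathcal F_{i-1}]\le 2d/\varepsilon$. This sidesteps the midpoint decomposition entirely and, since the threshold doubles while the variance only doubles, actually yields a slightly better exponent than $28d$---roughly $3\varepsilon\Delta^2 m/(16d)$ after accounting for the centering (see the next sentence)---so the stated $28d$ is comfortably implied. One small bookkeeping point: the range constant you feed into Freedman should be the bound on the \emph{centered} increments $D_i-\E[D_i\mid\mathcal F_{i-1}]$, which is $d/\varepsilon+1$ rather than $d/\varepsilon$; this does not affect the validity of your argument, only the precise constant. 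Your reading of the hypothesis $m>28d\log d/(3\varepsilon\Delta^2)$ as exactly the condition $d\kappa<1$ is also correct. Both routes are sound; yours avoids the per-arm midpoint and the cross-comparison of gaps $\Delta_a$ versus $\Delta_{\mathring a_t}$ at the cost of a doubled conditional variance, netting a cleaner argument with a marginally tighter constant.
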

\begin{proof}
	We extend the proof by \citet[Appendices~B and~F]{abbasi2018best} to arbitrary lower bounds $\varepsilon/d$ on the probability.
	First, since
	$0\le\hat\ell_{t,i}(a)\le\frac d\varepsilon\ell_{t,i}(a)$ we have that 
	\begin{equation}
	-\frac d\varepsilon\le-1\le-\ell_{t,i}(a)\le\hat\ell_{t,i}(a)-\ell_{t,i}(a)\le\left(\frac d\varepsilon-1\right)\ell_{t,i}(a)\le\frac d\varepsilon
	\end{equation}
	and so $|\hat\ell_{t,i}(a)-\ell_{t,i}(a)|\le\frac d\varepsilon$.
	Therefore since the variance of the estimated losses is a scaled Bernoulli we have that
	\begin{equation}
	\Var(\hat\ell_{t,i}(a)-\ell_{t,i}(a))
	=\Var(\hat\ell_{t,i}(a))
	=\*x_{t,i}(a)(1-\*x_{t,i}(a))\left(\frac{\ell_{t,i}(a)}{\*x_{t,i}(a)}\right)^2
	\le\frac{\ell_{t,i}^2(a)}{\*x_{t,i}(a)}
	\le\frac d\varepsilon
	\end{equation}
	We can thus apply a martingale concentration inequality of \citet[Corollary~2.1]{fan2012hoeffding} to the martingale difference sequence (MDS) $\frac\varepsilon d(\hat\ell_{t,i}(a)-\ell_{t,i}(a))\in[-\frac\varepsilon d,1]$ to obtain
	\begin{align}\label{eq:mds}
	\begin{split}
	\Pr\left(\sum_{i=1}^m\hat\ell_{t,i}(a)-\ell_{t,i}(a)\ge\frac{m\Delta_a}2\right)
	&=\Pr\left(\frac\varepsilon d\sum_{i=1}^m\hat\ell_{t,i}(a)-\ell_{t,i}(a)\ge\frac{\varepsilon m\Delta_a}{2d}\right)\\
	&\le\Pr\left(\max_{j\in[m]}\frac\varepsilon d\sum_{i=j}^m\hat\ell_{t,i}(a)-\ell_{t,i}(a)\ge\frac{\varepsilon m\Delta_a}{2d}\right)\\
	&\le\exp\left(-\frac{2\left(\frac{\varepsilon m\Delta_a}{2d}\right)^2}{\min\left\{m(1+\varepsilon/d)^2,4(\varepsilon m/d+\frac{\varepsilon m\Delta_a}6)\right\}}\right)\\
	&\le\exp\left(-\frac{2\left(\frac{\varepsilon m\Delta_a}{2d}\right)^2}{4(\varepsilon m/d+\frac{\varepsilon m\Delta_a}6)}\right)\\
	&=\exp\left(-\frac{3\varepsilon m\Delta_a^2}{4d(6+\Delta_a)}\right)\\
	&\le\exp\left(-\frac{3\varepsilon m\Delta_a^2}{28d}\right)
	\end{split}
	\end{align}
	where $\Delta_a=\frac1m\left|\sum_{i=1}^m\ell_{t,i}(a)-\min_{a'\ne a}\sum_{i=1}^m\ell_{t,i}(a')\right|$ is the per-arm loss gap in the last step we apply $\Delta_a\le1$.
	For the symmetric MDS $-\frac\varepsilon d\le\ell_{t,i}(a)-\hat\ell_{t,i}(a)\le1$ we have
	\begin{align}\label{eq:symmetric}
	\begin{split}
	\Pr\left(\sum_{i=1}^m\hat\ell_{t,i}(a)-\ell_{t,i}(a)\le-\frac{m\Delta_a}2\right)
	&=\Pr\left(\sum_{i=1}^m\ell_{t,i}(a)-\hat\ell_{t,i}(a)\ge\frac{m\Delta_a}2\right)\\
	&\le\exp\left(-\frac{2\left(\frac{m\Delta_a}2\right)^2}{4\left(\frac{dm}\varepsilon+\frac{m\Delta_a}6\right)}\right)\\
	&\le\exp\left(-\frac{3\varepsilon m\Delta_a^2/d}{4(6+\varepsilon\Delta_a/d)}\right)\\
	&\le\exp\left(-\frac{3\varepsilon m\Delta_a^2}{28d}\right)
	\end{split}
	\end{align}
	We can then conclude that
	\begin{align}\label{eq:symmetric2}
	\begin{split}
	\Pr&\left(\*{\hat x}_t\ne\*{\mathring x}_t\right)\\
	&\le\Pr\left(\exists~a\ne\mathring a_tt:\sum_{i=1}^m\hat\ell_{t,i}(a)\le\sum_{i=1}^m\ell_{t,i}(\mathring a_t)\right)\\
	&\le\Pr\left(\sum_{i=1}^m\hat\ell_{t,i}(\mathring a_t)\ge\sum_{i=1}^m\ell_{t,i}(\mathring a_t)+\frac{m\Delta_{\mathring a_t}}2~\vee~\exists~a\ne\mathring a_t:\sum_{i=1}^m\hat\ell_{t,i}(a)\le\sum_{i=1}^m\ell_{t,i}(a)-\frac{m\Delta_a}2\right)\\
	&\le\Pr\left(\sum_{i=1}^m\hat\ell_{t,i}(\mathring a_t)\ge\sum_{i=1}^m\ell_{t,i}(\mathring a_t)+\frac{m\Delta_{\mathring a_t}}2\right)+\sum_{a\ne\mathring a_t}\Pr\left(\sum_{i=1}^m\hat\ell_{t,i}(a)\le\sum_{i=1}^m\ell_{t,i}(a)-\frac{m\Delta_a}2\right)\\
	&\le\exp\left(-\frac{3\varepsilon m\Delta_{\mathring a_t}^2}{28d}\right)+\sum_{a\ne\mathring a_t}\exp\left(-\frac{3\varepsilon m\Delta_a^2}{28d}\right)\\
	&\le d\exp\left(-\frac{3\varepsilon m\Delta^2}{28d}\right)
	\end{split}
	\end{align}
	where the second-to-last line follows by substituting the bounds~\eqref{eq:mds} and~\eqref{eq:symmetric} into the left and right terms, respectively.
\end{proof}

\begin{Lem}\label{lem:entropy}
	Suppose on each task $t\in[T]$ we run OMD as in Lemma~\ref{lem:identification}.
	Then for any $\beta\in(0,1]$ we have  $\frac1T\E\sum_{t=1}^T\psi_\beta(\*{\hat x}_t^{(\varepsilon)})-\psi_\beta(\*{\hat{\bar x}}^{(\varepsilon)})\le-\psi_\beta(\*{\mathring{\bar x}})+\frac{3d\kappa\beta}{1-\beta}\left(\left(\frac d\varepsilon\right)^{1-\beta}-1\right)$.
\end{Lem}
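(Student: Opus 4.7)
The plan is to decompose each summand with respect to the \emph{true} optima $\*{\mathring x}_t$ and bound the three resulting pieces separately. Specifically, write
$$\psi_\beta(\*{\hat x}_t^{(\varepsilon)}) - \psi_\beta(\*{\hat{\bar x}}^{(\varepsilon)}) = \bigl[\psi_\beta(\*{\mathring x}_t^{(\varepsilon)}) - \psi_\beta(\*{\mathring{\bar x}}^{(\varepsilon)})\bigr] + \bigl[\psi_\beta(\*{\hat x}_t^{(\varepsilon)}) - \psi_\beta(\*{\mathring x}_t^{(\varepsilon)})\bigr] + \bigl[\psi_\beta(\*{\mathring{\bar x}}^{(\varepsilon)}) - \psi_\beta(\*{\hat{\bar x}}^{(\varepsilon)})\bigr].$$
Averaging the first bracket over $t$ and applying Lemma~\ref{lem:tsallis-entropy} directly to the standard-basis vectors $\*{\mathring x}_1,\dots,\*{\mathring x}_T$ gives $\frac{1}{T}\sum_{t=1}^T \psi_\beta(\*{\mathring x}_t^{(\varepsilon)}) - \psi_\beta(\*{\mathring{\bar x}}^{(\varepsilon)}) \le -\psi_\beta(\*{\mathring{\bar x}})$, which matches the leading term of the target bound exactly.

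The second bracket vanishes pointwise. Both $\*{\hat x}_t$ and $\*{\mathring x}_t$ are argmins over the simplex $\K=\triangle$ of linear functions, hence standard basis vectors, so their $\varepsilon$-projections $\*{\hat x}_t^{(\varepsilon)}$ and $\*{\mathring x}_t^{(\varepsilon)}$ are coordinate permutations of one another; invariance of $\psi_\beta$ under permutations then gives $\psi_\beta(\*{\hat x}_t^{(\varepsilon)}) = \psi_\beta(\*{\mathring x}_t^{(\varepsilon)})$ identically.

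For the third bracket, I would apply convexity of $\psi_\beta$ to obtain $\psi_\beta(\*{\mathring{\bar x}}^{(\varepsilon)}) - \psi_\beta(\*{\hat{\bar x}}^{(\varepsilon)}) \le \langle \nabla\psi_\beta(\*{\mathring{\bar x}}^{(\varepsilon)}), \*{\mathring{\bar x}}^{(\varepsilon)} - \*{\hat{\bar x}}^{(\varepsilon)}\rangle$, then exploit the fact that $\*{\mathring{\bar x}}^{(\varepsilon)} - \*{\hat{\bar x}}^{(\varepsilon)}$ is orthogonal to $\*1_d$ (being a difference of simplex elements) to subtract a constant from the gradient without affecting the inner product. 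Since $\partial_a\psi_\beta(\*z) = -\frac{\beta}{1-\beta}\*z(a)^{\beta-1} \in \bigl[-\frac{\beta}{1-\beta}(d/\varepsilon)^{1-\beta}, -\frac{\beta}{1-\beta}\bigr]$ on $\triangle^{(\varepsilon)}$, shifting the gradient by $\frac{\beta}{1-\beta}\*1_d$ yields a vector of $\ell_\infty$-norm at most $\frac{\beta}{1-\beta}((d/\varepsilon)^{1-\beta}-1)$. H\"older's inequality then gives $|\psi_\beta(\*{\mathring{\bar x}}^{(\varepsilon)}) - \psi_\beta(\*{\hat{\bar x}}^{(\varepsilon)})| \le \frac{\beta((d/\varepsilon)^{1-\beta}-1)}{1-\beta}\|\*{\mathring{\bar x}}^{(\varepsilon)} - \*{\hat{\bar x}}^{(\varepsilon)}\|_1$. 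Finally, the $\ell_1$-distance between the averages is at most $\frac{1}{T}\sum_t\|\*{\hat x}_t - \*{\mathring x}_t\|_1$ by the triangle inequality, and each summand is either $0$ or $2$ depending on whether $\*{\hat x}_t = \*{\mathring x}_t$, so Lemma~\ref{lem:identification} bounds its expectation by $2d\kappa$.

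Combining the three pieces yields the claimed bound---in fact with leading constant $2$ rather than $3$, leaving some slack. The main obstacle will be the gradient-shift argument for the third piece: the crucial $-1$ inside the parenthetical factor $(d/\varepsilon)^{1-\beta}-1$ comes precisely from using the zero-sum structure of the distribution difference to cancel the minimum-magnitude component of $\nabla\psi_\beta$; without this trick one would pick up an extra $+1$ and lose the form stated in the lemma.
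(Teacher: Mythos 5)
Your argument is correct and in fact yields a slightly sharper constant (a $2d\kappa$ correction in place of the paper's $3d\kappa$), via a genuinely different decomposition. The paper begins from the identity (Claim~\ref{clm:bregman}) $\frac1T\sum_t\psi_\beta(\*{\hat x}_t^{(\varepsilon)})-\psi_\beta(\*{\hat{\bar x}}^{(\varepsilon)})=\min_{\*x\in\triangle^{(\varepsilon)}}\frac1T\sum_t\Breg_\beta(\*{\hat x}_t^{(\varepsilon)}||\*x)$, moves the expectation inside the minimum by Jensen, replaces the law of $\hat a_t$ by a worst-case law subject to the per-arm constraints $\Pr(\hat a_t=a)\le2\kappa$ for $a\ne\mathring a_t$ and $\Pr(\hat a_t=\mathring a_t)\ge1-d\kappa$, and then solves the resulting constrained maximization over the mean law $\bar{\*p}$, where the constraint arithmetic produces $\|\bar{\*p}-\*{\mathring{\bar x}}\|_1\le3d\kappa$. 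You bypass the Jensen step and the constrained-optimization step by telescoping through the true optima: the first bracket is handled identically by Lemma~\ref{lem:tsallis-entropy}; the middle bracket is identically zero since $\psi_\beta(\*e_a^{(\varepsilon)})$ takes the same value for every $a\in[d]$; and the third bracket is controlled by convexity of $\psi_\beta$, the same gradient-shift trick the paper embeds in its computation of $\|\nabla h\|_\infty$, H\"older, and the triangle inequality $\E\|\*{\hat{\bar x}}-\*{\mathring{\bar x}}\|_1\le\tfrac2T\sum_t\Pr(\*{\hat x}_t\ne\*{\mathring x}_t)\le2d\kappa$. A pleasant side effect of your route is that it only needs the aggregate identification probability stated in Lemma~\ref{lem:identification}, not the finer per-arm bound implicit in the paper's constraint set, and replacing the constraint-sum arithmetic by a triangle inequality is precisely where your factor $2$ improves on the paper's $3$.
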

\begin{proof}
	We consider the expected divergence of the best initialization under the worst-case distribution of best arm estimation, which satisfies Lemma \ref{lem:identification} and \eqref{eq:symmetric2}.
	We have by Claim~\ref{clm:bregman} and the mean-as-minimizer property of Bregman divergences that
	\begin{equation}
	\begin{aligned}
	\frac1T\E\sum_{t=1}^T\psi_\beta(\*{\hat x}_t^{(\varepsilon)})-\psi_\beta(\*{\hat{\bar x}}^{(\varepsilon)})
	&=\E \min_{\*x \in \triangle^{(\varepsilon)}} \frac{1}{T} \sum_{t=1}^T \Breg_\beta \left( \*{\hat x}_t^{(\varepsilon)} || \*x \right)\\
	&\leq \min_{\*x \in \triangle^{(\varepsilon)}} \E \frac{1}{T} \sum_{t=1}^T \Breg_\beta \left( \*{\hat x}_t^{(\varepsilon)} || \*x \right)\\
	&= \min_{\*x \in \triangle^{(\varepsilon)}} \frac{1}{T} \sum_{t=1}^T \sum_{a=1}^d \mathbb{P}(a=\hat a_t) \Breg_\beta \left( \*e_a^{(\varepsilon)} || \*x \right)\\
	&\leq \max_{\substack{\*p_t \in \triangle, \forall t \in [T]\\ \*p_t(a) \leq 2\kappa, \forall t \in [T], a\ne\mathring a_t\\ 1 - d \kappa \leq \*p_t(a), \forall t \in [T], a=\mathring a_t}} \min_{\*x \in \triangle^{(\varepsilon)}} \frac{1}{T} \sum_{t=1}^T \sum_{a=1}^d \*p_t(a) \Breg_\beta \left( \*e_a^{(\varepsilon)} || \*x \right)\\
	\end{aligned}
	\end{equation}
	To simplify the last expression, we define $\*{\bar p}=\frac1T\sum_{t=1}^T\*p_t$ and again apply the (weighted) mean-as-minimizer property, followed by Claim~\ref{clm:bregman}:
	\begin{align}
	\begin{split}
		\min_{\*x \in \triangle^{(\varepsilon)}} \frac{1}{T} \sum_{t=1}^T \sum_{a=1}^d \*p_t(a) \Breg_\beta \left( \*e_a^{(\varepsilon)} || \*x \right)
		=\min_{\*x \in \triangle^{(\varepsilon)}} \sum_{a=1}^d \Bar{\*p}(a) \Breg_\beta \left( \*e_a^{(\varepsilon)} || \*x \right)
		&=\sum_{a=1}^d\Breg_\beta\left(\*e_a^{(\varepsilon)} || \*{\bar p}^{(\varepsilon)}\right)\\
		&=\psi_\beta(\*e_1^{(\varepsilon)})-\psi_\beta(\*{\bar p}^{(\varepsilon)})
	\end{split}
	\end{align}
	By substituting into the previous inequality, we can bound the expected divergence for the worst-case $\*p_t$ as follows:
	\begin{equation}\label{eqn:opt-p}
	\begin{aligned}
	\frac1T\E\sum_{t=1}^T\psi_\beta(\*{\hat x}_t^{(\varepsilon)})-\psi_\beta(\*{\hat{\bar x}}^{(\varepsilon)})
	&\leq \psi_\beta \left( \*e_1^{(\varepsilon)} \right) 
	+ \max_{\substack{\*p_t \in \triangle, \forall t \in [T]\\ \*p_t(a) \leq 2\kappa, \forall t \in [T], a \neq \mathring a_t\\ 1 - d \kappa \leq \*p_t(a), \forall t \in [T], a = \mathring a_t}}-\psi_\beta(\*{\bar p}^{(\varepsilon)})\\
	&\leq \psi_\beta \left( \*e_1^{(\varepsilon)} \right) 
	+ \max_{\substack{ \sum_{t=1}^T \sum_{a=1}^d \*p_t(a) = T \\ \sum_{t=1}^T \*p_t(a) \geq (1 - d \kappa) \*{\mathring{\bar x}}(a) T, \forall a \\ \sum_{t=1}^T \*p_t(a) \leq (2 \kappa (1 - \*{\mathring{\bar x}}(a))T + \*{\mathring{\bar x}}(a) T), \forall a }}-\psi_\beta(\*{\bar p}^{(\varepsilon)})\\
	&= \psi_\beta \left( \*e_1^{(\varepsilon)} \right) 
	- \min_{\substack{ \Bar{\*p} \in \triangle\\ \Bar{\*p}(a) \geq (1 - d \kappa) \*{\mathring{\bar x}}(a), \forall a\\ \Bar{\*p}(a) \leq 2 \kappa + (1 - 2 \kappa) \*{\mathring{\bar x}}(a), \forall a }} \psi_\beta(\*{\bar p}^{(\varepsilon)})\\
	\end{aligned}
	\end{equation}
	We use the shorthand $h(\*x) = \psi_\beta \left((1 - \varepsilon) \*x + \frac{\varepsilon}{d} \mathbf{1}_d \right)$.
	We have
	\begin{equation}
	\begin{aligned}
	-\partial_{\*x(a)} \left(\psi_\beta(\*x) \right)
	&=
	\partial_{\*x(a)} \left( \frac{1}{(1-\beta)} \left( \sum_{b=1}^{d}\*x(b)^\beta - 1 \right)\right)
	\\&=
	\partial_{\*x(a)} \left( \frac{1}{(1-\beta)} \left( \sum_{b=1}^{d}\*x(b)^\beta +\beta d^{1-\beta} (1 - \sum_{b=1}^{d}\*x(b)) - 1 \right)\right)
	\\&=\frac{\beta}{1-\beta} \cdot \left( \*x(a)^{\beta-1}-d^{1-\beta} \right)
	\end{aligned}
	\end{equation}
	and therefore
	\begin{equation}
	\begin{aligned}
	\| \nabla h(\*x)\|_{\infty}
	&=
	\max_{a=1,\dots,d}
	\left|\partial_{\*x(a)}\psi_\beta \left((1 - \varepsilon) \*x + \frac{\varepsilon}{d} \mathbf{1}_d \right)\right|
	\\&\leq
	\frac{\beta}{1-\beta}
	\max_{a=1,\dots,d}
	\left| ((1-\varepsilon)\*x(a)+\varepsilon/d)^{\beta-1} - d^{1-\beta}\right|
	\\&\leq
	\frac{\beta}{1-\beta} \left(\left(\frac{d}{\varepsilon}\right)^{1-\beta} - 1\right)
	=
	\beta \log_\beta \left(\frac{d}{\varepsilon}\right)
	\end{aligned}
	\end{equation}
	Finally, by convexity of $h$ we have
	\begin{equation}
	\begin{aligned}
	\min_{\substack{ \Bar{\*p} \in \triangle\\ \Bar{\*p}(a) \geq (1 - d \kappa) \*{\mathring{\bar x}}(a), \forall a\\ \Bar{\*p}(a) \leq 2 \kappa + (1 - 2 \kappa) \*{\mathring{\bar x}}(a), \forall a }} h(\Bar{\*p})
	&\geq h(\*{\mathring{\bar x}}) - \| \nabla h(\*{\mathring{\bar x}})\|_{\infty} \max_{\substack{ \Bar{\*p} \in \triangle\\ \Bar{\*p}(a) \geq (1 - d \kappa) \*{\mathring{\bar x}}(a), \forall a\\ \Bar{\*p}(a) \leq 2 \kappa + (1 - 2 \kappa) \*{\mathring{\bar x}}(a), \forall a }} \| \Bar{\*p} - \*{\mathring{\bar x}} \|_1\\
	&\geq h(\*{\mathring{\bar x}}) - 3d \kappa \| \nabla h(\*{\mathring{\bar x}})\|_{\infty}\\
	&\geq h(\*{\mathring{\bar x}}) - 3 d \kappa \beta \log_\beta \left(\frac{d}{\varepsilon}\right)\\
	\end{aligned}
	\end{equation}
	so we can substitute into \eqref{eqn:opt-p} to get
	\begin{equation}\label{eqn:opt-sub}
	\frac1T\E\sum_{t=1}^T\psi_\beta(\*{\hat x}_t^{(\varepsilon)})-\psi_\beta(\*{\hat{\bar x}}^{(\varepsilon)})
	\le-\psi_\beta(\*{\mathring{\bar x}}^{(\varepsilon)})+\frac{3d\kappa\beta}{1-\beta}\left(\left(\frac d\varepsilon\right)^{1-\beta}-1\right)
	\end{equation}
	Applying Lemma~\ref{lem:tsallis-entropy} completes the proof.
\end{proof}

\ifdefined\arxiv\newpage\fi
\subsection{Guaranteed exploration bounds}

\begin{Lem}\label{lem:guaranteed}
	Suppose we play $\OMD_{\beta,\eta}$ with initialization $\*x_1\in\triangle^{(\varepsilon)}$, regularizer $\psi_\beta+I_{\triangle^{(\varepsilon)}}$ for some $\beta\in(0,1]$, and unbiased loss estimators ($\gamma=0$) on the sequence of loss functions  $\ell_1,\dots,\ell_T\in[0,1]^d$.
	Then for any $\mathring a\in[d]$ we have expected regret
	\begin{equation}
	\E\sum_{t=1}^T\ell_t(a_t)-\ell_t(\mathring a)
	\le\frac{\E\Breg_\beta(\*{\hat x}^{(\varepsilon)}||\*x_1)}\eta+\frac{\eta d^\beta m}\beta+\varepsilon m
	\end{equation}
	for $\*{\hat x}$ the estimated optimum of the loss estimators $\hat\ell_1,\dots,\hat\ell_T$.
\end{Lem}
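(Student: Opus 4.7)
The plan is to proceed in three main steps: apply the within-task mirror descent bound against an interior comparator, handle the projection gap using unbiasedness and optimality of the estimated optimum, and bound the quadratic OMD overhead by taking conditional expectations of the loss-estimator variance.

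First, I would apply Lemma~\ref{lem:mirror} with comparator $\*x = \*{\hat x}^{(\varepsilon)} \in \triangle^{(\varepsilon)}$. The added indicator $I_{\triangle^{(\varepsilon)}}$ constrains iterates and comparator alike to $\triangle^{(\varepsilon)}$, but does not change the standard mirror descent regret bound (the proof in Lemma~\ref{lem:mirror} relies only on non-negativity of the Bregman divergence after projection). This yields, deterministically,
\begin{equation*}
\sum_{i=1}^m \langle \hat\ell_i, \*x_i - \*{\hat x}^{(\varepsilon)}\rangle
\le \frac{\Breg_\beta(\*{\hat x}^{(\varepsilon)} \| \*x_1)}{\eta} + \frac{\eta}{\beta}\sum_{i=1}^m\sum_{a=1}^d \*x_i^{2-\beta}(a)\hat\ell_i^2(a).
\end{equation*}

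Second, I would take expectations and replace the comparator $\*{\hat x}^{(\varepsilon)}$ by $\*e_{\mathring a}$. Unbiasedness of $\hat\ell_i$ combined with the tower property gives $\E\langle\hat\ell_i,\*x_i\rangle = \E\langle\ell_i,\*x_i\rangle = \E\ell_i(a_i)$. Decomposing
$\langle\hat\ell_i,\*{\hat x}^{(\varepsilon)}\rangle = (1-\varepsilon)\langle\hat\ell_i,\*{\hat x}\rangle + \tfrac{\varepsilon}{d}\sum_a\hat\ell_i(a)$,
I use the optimality of $\*{\hat x}$ over $\triangle$ (so $\sum_i\langle\hat\ell_i,\*{\hat x}\rangle = \min_a\sum_i\hat\ell_i(a)$), Jensen's inequality applied to the concave function $\min_a$, and the unbiasedness of $\hat\ell_i$ to obtain
$\E\sum_i\langle\hat\ell_i,\*{\hat x}\rangle \le \min_a\sum_i\ell_i(a) = \sum_i\ell_i(\mathring a)$.
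Since $\ell_i \in [0,1]^d$ gives $\sum_a\ell_i(a) \le d$, the bias term $\tfrac{\varepsilon}{d}\sum_i\sum_a\ell_i(a) \le \varepsilon m$, and combined with $(1-\varepsilon)\sum_i\ell_i(\mathring a) \le \sum_i\ell_i(\mathring a)$ this shows $\E\sum_i\langle\hat\ell_i,\*{\hat x}^{(\varepsilon)}\rangle - \sum_i\ell_i(\mathring a) \le \varepsilon m$, which is the source of the additive $\varepsilon m$ penalty.

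Finally, for the quadratic OMD term I condition on $\mathcal{F}_{i-1}$: since $\hat\ell_i(a) = \ell_i(a)1_{a_i=a}/\*x_i(a)$, we have $\E[\hat\ell_i^2(a)\mid\mathcal{F}_{i-1}] = \ell_i^2(a)/\*x_i(a)$, so $\E[\sum_a \*x_i^{2-\beta}(a)\hat\ell_i^2(a)\mid\mathcal{F}_{i-1}] = \sum_a \*x_i^{1-\beta}(a)\ell_i^2(a) \le \sum_a \*x_i^{1-\beta}(a) \le d^\beta$ by H\"older's inequality with $\*x_i\in\triangle$. Summing over $i$ and dividing by $\beta$ gives the $\eta d^\beta m/\beta$ term, finishing the proof. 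The main subtlety is step two: the true optimum $\*e_{\mathring a}$ lies outside the constraint set $\triangle^{(\varepsilon)}$, so rather than applying the mirror descent bound directly to it we must pass through the interior surrogate $\*{\hat x}^{(\varepsilon)}$ and carefully account for both the $(1-\varepsilon)$ shrinkage and the uniform $\tfrac{\varepsilon}{d}\*1_d$ offset; I expect this bookkeeping to be the only non-routine part.
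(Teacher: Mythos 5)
Your proposal is correct and follows essentially the same route as the paper's proof: apply Lemma~\ref{lem:mirror} on $\triangle^{(\varepsilon)}$ with comparator $\*{\hat x}^{(\varepsilon)}$, use unbiasedness together with the optimality of $\*{\hat x}$ for the estimated losses to pass to the $\ell_t(\mathring a)$ term plus an $\varepsilon m$ projection penalty, and bound the variance term by $d^\beta m/\beta$ via the tower property and a power-mean/H\"older argument. The only cosmetic differences are that the paper routes the projection through $\*{\mathring x}^{(\varepsilon)}$ before invoking optimality while you project $\*{\hat x}$ directly, and your equality $\min_a\sum_i\ell_i(a)=\sum_i\ell_i(\mathring a)$ should read $\le$ since $\mathring a$ is an arbitrary arm rather than the true optimum.
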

\begin{proof}
	\begin{align}
	\begin{split}
		\E\sum_{t=1}^T\ell_t(a_t)-\ell_t(\mathring a)
		&=\E\sum_{t=1}^T\ell_t(a_t)-\langle\ell_t,\*{\mathring x}\rangle\\
		&\le\E\sum_{t=1}^T\ell_t(a_t)-\langle\ell_t,\*{\mathring x}^{(\varepsilon)}\rangle+\varepsilon m\\
		&=\E\sum_{t=1}^m\hat\ell_t(a_t)-\langle\hat\ell_t,\*{\mathring x}^{(\varepsilon)}\rangle+\varepsilon m\\
		&\le\E\sum_{t=1}^m\hat\ell_t(a_t)-\langle\hat\ell_t,\*{\hat x}^{(\varepsilon)}\rangle+\varepsilon m\\
		&\le\E\left(\frac{\Breg_\beta(\*{\hat x}^{(\varepsilon)}||\*x_1)}\eta+\frac\eta\beta\sum_{t=1}^T\sum_{a=1}^d\hat\ell_t^2(a)\*x_t^{2-\beta}(a)\right)+\varepsilon m\\
		&\le\frac{\E\Breg_\beta(\*{\hat x}^{(\varepsilon)}||\*x_1)}\eta+\frac{\eta d^\beta m}\beta+\varepsilon m
	\end{split}
	\end{align}
	where the second inequality follows by optimality of $\*{\hat x}$ for the estimated losses $\hat\ell_t$, the third by Lemma~\ref{lem:mirror} constrained to $\triangle^{(\varepsilon)}$, and the fourth similarly to Theorem~\ref{thm:implicit} (note both are also effectively shown in \citet{luo2017tsallis}).
\end{proof}

\begin{Thm}\label{thm:guaranteed}
	In Algorithm~\ref{alg:combined}, let $\OMD_{\eta,\beta}$ be online mirror descent with the regularizer $\psi_\beta+I_{\triangle^{(\varepsilon)}}$ over unbiased ($\gamma=0$) loss estimators, $\Theta_k$ is a subset of $[\underline\beta,\overline\beta]\subset[\frac1{\log d},1]$, and 
	\begin{equation}
	U_t(\*x,\eta,\beta)
	=\frac{\Breg_\beta(\*{\hat x}_t^{(\varepsilon)}||\*x)}\eta+\frac{\eta d^\beta m}\beta
	\end{equation}
	where $\*{\hat x}_t^{(\varepsilon)}=(1-\varepsilon)\*{\hat x}_t+\varepsilon\*1_d/d$.
	Note that $U_t^{(\rho)}(\*x,\eta,\beta)=U_t(\*x,\eta,\beta)+\frac{\rho^2(d^{1-\beta}-1)}{\eta(1-\beta)}$.
	Then under Assumption~\ref{asu:gap} there exists settings of $\underline\eta,\overline\eta,\alpha,\lambda$ s.t. for all $\varepsilon,\rho\in(0,1)$ we have that
	\begin{align}
	\begin{split}
	&\E\frac1T\sum_{t=1}^T\sum_{i=1}^m\ell_{t,i}(a_{t,i})-\ell_{t,i}(\mathring a_t)\\
	&\le\varepsilon m+\frac{8d\sqrt m}\rho\left(1_{k>1}\sqrt{\frac{\log k}T}+\frac{1+\log(T+1)}{16\rho T}\right)\\
	&\quad+\min_{\beta\in[\underline\beta,\overline\beta],\eta>0}\frac{8\left(\frac d\varepsilon\right)^{2-\underline\beta}(1+\log T)}{\eta T}+\frac{h_\beta(\Delta)}\eta+\frac{\eta d^\beta m}\beta+\frac{L_\eta(\overline\beta-\underline\beta)}{2k}+d\min\left\{\frac{\rho^2}{2\eta},\rho\sqrt m\right\}
	\end{split}
	\end{align}
	for $L_\eta=\left(\frac{\log\frac d\varepsilon}\eta+\eta m\log^2d\right)d$ and $h_\beta(\Delta)=(H_\beta+\frac{56}{dm})\iota_\Delta+\frac{d^{1-\beta}-1}{1-\beta}(1-\iota_\Delta)$ for $\iota_\Delta=1_{m\ge\frac{75d}{\varepsilon\Delta^2}\log\frac d{\varepsilon\Delta^2}}$.
\end{Thm}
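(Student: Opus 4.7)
The plan is to chain three ingredients: the single-task regret decomposition of Lemma~\ref{lem:guaranteed}, the meta-learning structural guarantee of Theorem~\ref{thm:structural}, and the entropy-transfer bound of Lemma~\ref{lem:entropy} that converts the estimated task-similarity $\hat V_\beta^2$ into the true-optima entropy $H_\beta$ whenever the gap-dependent condition $m \ge \frac{75d}{\varepsilon\Delta^2}\log\frac{d}{\varepsilon\Delta^2}$ is met. First, I would average Lemma~\ref{lem:guaranteed} over $t=1,\dots,T$ and take expectation to obtain
\begin{equation}
\E\frac1T\sum_{t=1}^T\sum_{i=1}^m\ell_{t,i}(a_{t,i})-\ell_{t,i}(\mathring a_t)
\le \varepsilon m + \E\frac1T\sum_{t=1}^T U_t(\*x_{t,1},\eta_t,\beta_t),
\end{equation}
observing that the summands match the template~\eqref{eq:rub} with $g(\beta)=d^\beta/\beta$, $f(\beta)=0$, $c_\theta$ the projection onto $\triangle^{(\varepsilon)}$, and $\psi_\theta=\psi_\beta$, so Algorithm~\ref{alg:combined} applies with the hyperparameter choices prescribed by Theorem~\ref{thm:structural}.

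Next, I would plug in the problem-specific constants: $D_\beta^2 \le (d^{1-\beta}-1)/(1-\beta)$ (the maximum Tsallis Bregman divergence on $\triangle^{(\varepsilon)}$), $M = O(d\sqrt m)$, $F=0$, $K=1$, and $S = (d/\varepsilon)^{2-\underline\beta}$ (from the spectral norm of $\nabla^2\psi_\beta$ on $\triangle^{(\varepsilon)}$). Theorem~\ref{thm:structural} then produces a bound of the form
\begin{equation}
\varepsilon m + \min_{\beta\in[\underline\beta,\overline\beta],\eta>0}\frac{\E\hat V_\beta^2}\eta + \frac{\eta d^\beta m}{\beta} + (\text{the explicit residual terms in the theorem statement}),
\end{equation}
with the $L_\eta$-Lipschitz constant verified by combining Lemma~\ref{lem:tsallis-lipschitz}'s $d\log(d/\varepsilon)$-Lipschitzness of $\psi_\beta$ on $\triangle^{(\varepsilon)}$ (for the $\hat V_\beta^2/\eta$ term) with direct differentiation of $d^\beta/\beta$ on $[1/\log d,1]$ (bounded by $O(\log^2 d)$, yielding the $\eta m\log^2 d$ contribution when scaled by a further factor of $d$).

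The remaining step is to bound $\E\hat V_\beta^2$ by the claimed $h_\beta(\Delta)$. When $\iota_\Delta=1$, Lemma~\ref{lem:entropy} gives
\begin{equation}
\E\hat V_\beta^2 \le H_\beta + \frac{3d\beta\kappa}{1-\beta}\left(\left(\tfrac d\varepsilon\right)^{1-\beta}-1\right),\qquad \kappa = \exp\!\left(-\tfrac{3\varepsilon\Delta^2 m}{28d}\right).
\end{equation}
Substituting the hypothesis on $m$ yields $\kappa \le (\varepsilon\Delta^2/d)^{225/28}$, and I would show by case analysis on $\beta\in[1/\log d,1]$ (using the L'H\^opital limit $\frac{\beta((d/\varepsilon)^{1-\beta}-1)}{1-\beta}\to\log(d/\varepsilon)$ near $\beta=1$ and the easy polynomial bound near $\beta=1/\log d$) that the residual collapses to at most $56/(dm)$. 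In the complementary regime $\iota_\Delta=0$ I just use the trivial $\E\hat V_\beta^2\le D_\beta^2$ already available, giving $h_\beta(\Delta)=(d^{1-\beta}-1)/(1-\beta)$.

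The main obstacle is the third step: verifying that the error factor $\frac{3d\beta\kappa}{1-\beta}((d/\varepsilon)^{1-\beta}-1)$ compresses into the clean $56/(dm)$ constant uniformly over $\beta\in[1/\log d,1]$. The difficulty is that the prefactor blows up near both endpoints of the $\beta$-interval (reaching $O(d\log(d/\varepsilon))$ near $\beta=1$ and $O(d\cdot d/\varepsilon)$ near $\beta=1/\log d$), so the analysis must leverage the super-polynomial decay of $\kappa$ from the best-arm-identification hypothesis to dominate the worst-case $\beta$. Everything else is essentially bookkeeping: assembling the residual terms from Theorem~\ref{thm:structural} (which control the additive $d\sqrt m/\rho$ and $(d/\varepsilon)^{2-\underline\beta}/(\eta T)$ terms) and cleaning up notation to match the statement.
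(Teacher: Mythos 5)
Your high-level plan is the paper's plan: average Lemma~\ref{lem:guaranteed} over tasks to expose the $U_t$-template with $g(\beta)=d^\beta/\beta$, $f=0$, $D_\beta^2=\tfrac{d^{1-\beta}-1}{1-\beta}$, $M=d\sqrt m$, $S=(d/\varepsilon)^{2-\underline\beta}$, $K=1$; feed this into Theorem~\ref{thm:structural}; then replace $\E\hat V_\beta^2$ by $H_\beta$ plus a small residual via Lemma~\ref{lem:entropy}. The constants and Lipschitz argument you cite match what the paper does.

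Where you deviate is the final residual step, and there is a real gap. You propose to substitute the hypothesis $m\ge\tfrac{75d}{\varepsilon\Delta^2}\log\tfrac d{\varepsilon\Delta^2}$ directly into $\kappa=\exp(-3\varepsilon\Delta^2 m/(28d))$ to obtain the \emph{$m$-free} bound $\kappa\le(\varepsilon\Delta^2/d)^{225/28}$, and then argue by a $\beta$-case analysis that the residual is at most $56/(dm)$. But once you freeze $\kappa$ at its threshold value, the residual you can derive is a constant in $m$, not an $1/m$-decaying quantity, so the bound $56/(dm)$ does not follow from that substitution alone. You would need an extra monotonicity argument (e.g.\ that $m\cdot\kappa(m)$ is decreasing past $m_0=\tfrac{28d}{3\varepsilon\Delta^2}$, so its value at the threshold dominates); without saying this, the step as written fails. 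The paper instead keeps $m$ in the exponent and uses $e^{-x}\le 1/x$: from $\tfrac{3d\kappa\beta}{1-\beta}\bigl((d/\varepsilon)^{1-\beta}-1\bigr)\le\tfrac{3d^2}{\varepsilon}\kappa$ it writes $\kappa=\exp\bigl(4\log\tfrac d{\varepsilon\Delta^2}-\tfrac{3\varepsilon\Delta^2 m}{28d}\bigr)\cdot(\varepsilon\Delta^2/d)^4$ and bounds the exponential by the reciprocal of its (positive) argument, which the hypothesis on $m$ forces to be at least $\tfrac{3\varepsilon\Delta^2 m}{56d}$; the $m$ in the denominator then comes out directly as $56/(dm)$. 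Note also that the paper avoids any case analysis in $\beta$: the one-line uniform inequality $\tfrac{\beta}{1-\beta}\bigl((d/\varepsilon)^{1-\beta}-1\bigr)\le\tfrac d\varepsilon$ for all $\beta\in(0,1]$ (which follows because $u\mapsto\tfrac{x^u-1}{u}$ is increasing) makes the endpoint blowups you are worried about a non-issue; you are anticipating more difficulty in that part than actually exists.
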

\begin{proof}
	By Lemma~\ref{lem:guaranteed} we have
	\begin{equation}
	\E\sum_{t=1}^T\sum_{i=1}^m\ell_{t,i}(a_{t,i})-\ell_{t,i}(\mathring a_t)
	\le\varepsilon mT+\E\sum_{t=1}^T\frac{\Breg_{\beta_t}(\*{\hat x}_t^{(\varepsilon)}||\*x_{t,1})}{\eta_t}+\frac{\eta_td^{\beta_t}m}{\beta_t}
	\end{equation}
	Since we have the same environment-dependent quantities as in Theorem~\ref{thm:implicit}, we can substitute the above bound into Theorem~\ref{thm:structural} and then apply the Lemma~\ref{lem:entropy} bound 
	\begin{align}\label{eq:entropy-bound}
	\begin{split}
		\E\hat V_\beta^2
		\le H_\beta+\frac{3d\kappa\beta}{1-\beta}\left(\left(\frac d\varepsilon\right)^{1-\beta}-1\right)
		&\le H_\beta+\frac{3d^2}\varepsilon\exp\left(-\frac{3\varepsilon\Delta^2m}{28d}\right)\\
		&=H_\beta+ \frac{3\varepsilon\Delta^2}{d^2}\exp\left(4\log\frac d{\varepsilon\Delta^2}-\frac{3\varepsilon\Delta^2m}{28d}\right)\\
		&\le H_\beta+\frac{3\varepsilon\Delta^2/d^2}{\frac{3\varepsilon\Delta^2m}{28d}-4\log\frac d{\varepsilon\Delta^2}}\\
		&\le H_\beta+\frac{56}{dm}
	\end{split}
	\end{align}
	where the last line follows by assuming $m\ge\frac{75d}{\varepsilon\Delta^2}\log\frac d{\varepsilon\Delta^2}$.
    If this condition does not hold, then we apply the default bound of $\E\hat V_\beta^2\le
    =\frac1T\sum_{t=1}^T\psi_\beta(\*{\hat x}_t)-\psi_\beta(\*{\hat{\bar x}})
    \le\frac{d^{1-\beta}-1}{1-\beta}$.
\end{proof}

\begin{Cor}
	Let $\underline\beta=\overline\beta=1$.
	Then for known $\Delta$ and assuming $m\ge\frac{75d}{\Delta^2}\log\frac d{\Delta^2}$ we can ensure expected task-averaged regret at most
	\begin{equation}
		2\sqrt{H_1dm+56}+\frac{75d}{\Delta^2}W\left(\frac m{75}\right)+\tilde\BigO\left(\frac{d^\frac32m^\frac34}{\sqrt T}+\frac{d\Delta^2m^2}T\right)
	\end{equation}
	where $W$ is the Lambert $W$-function, while for unknown $\Delta$  we can ensure expected task-averaged regret at most
	\begin{equation}
	2\sqrt{H_1dm+56}+\frac3\Delta\sqrt[3]{50dm\log d\log\frac{d^2m^2}{150\Delta^6\log d}}+\tilde\BigO\left(\frac{d^\frac32m^\frac34}{\sqrt T}+\frac{d^\frac43m^\frac53}T\right)
	\end{equation}
	so long as $m^2\ge150d\log d$.
\end{Cor}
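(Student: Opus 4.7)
The plan is to specialize Theorem~\ref{thm:guaranteed} to $\underline\beta=\overline\beta=1$, so the grid $\Theta_k$ is a singleton ($k=1$): this annihilates the $L_\eta(\overline\beta-\underline\beta)/(2k)$ term and, via $1_{k>1}=0$, also the multiplicative-weights $\sqrt{\log k/T}$ contribution. Taking the $\beta\to1$ limit gives $d^\beta m/\beta\to dm$ in the main variance term, $(d/\varepsilon)^{2-\underline\beta}=d/\varepsilon$ in the FTL-convergence term, and (for the $\iota_\Delta=0$ branch) $\lim_{\beta\to1}(d^{1-\beta}-1)/(1-\beta)=\log d$. What remains to minimize is
\begin{equation}
\varepsilon m+\tfrac{d\sqrt m\log T}{2\rho^2 T}+\min_{\eta>0}\Bigl[\tfrac{8(d/\varepsilon)\log T}{\eta T}+\tfrac{h_1(\Delta)}{\eta}+\eta dm+d\min\{\tfrac{\rho^2}{2\eta},\rho\sqrt m\}\Bigr].
\end{equation}

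For the known-$\Delta$ case, I pick $\varepsilon$ so the defining constraint of $\iota_\Delta$ holds with equality, i.e. $m=\tfrac{75d}{\varepsilon\Delta^2}\log\tfrac{d}{\varepsilon\Delta^2}$. Substituting $y=d/(\varepsilon\Delta^2)$ reduces this to $m/75=y\log y$, whose Lambert-$W$ solution is $y=(m/75)/W(m/75)$, giving $\varepsilon m=75dW(m/75)/\Delta^2$ exactly as stated. With $\iota_\Delta=1$ we have $h_1(\Delta)=H_1+56/(dm)$, and setting $\eta=\sqrt{h_1(\Delta)/(dm)}$ collapses the bias–variance pair to $2\sqrt{(H_1+56/(dm))dm}=2\sqrt{H_1 dm+56}$. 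For the remainder I take $\rho$ small enough that the $\rho^2/(2\eta)$ branch of the min is active (which is the binding one since $\eta$ can be as small as $\Theta(1/(dm))$ when $H_1$ is tiny) and balance it against $d\sqrt m\log T/(\rho^2 T)$; this yields $\rho^4\sim \eta\sqrt m/T$ and a residual of order $d^{3/2}m^{3/4}/\sqrt T$. The final FTL term $d\log T/(\varepsilon\eta T)$ then evaluates, with the Lambert-$W$ $\varepsilon$ and $\eta\gtrsim 1/(dm)$, to $\tilde\BigO(d\Delta^2 m^2/T)$.

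For the unknown-$\Delta$ case I follow the min-over-$\eta$ trick from the paragraph preceding Corollary~\ref{cor:min}: since the bound is minimized over $\eta>0$, I separately instantiate $\eta=\sqrt{h_1/(dm)}$ on the $\iota_\Delta=1$ branch (yielding $2\sqrt{H_1 dm+56}$) and $\eta=\sqrt{\log d/(dm)}$ on the $\iota_\Delta=0$ branch (yielding $2\sqrt{dm\log d}$). When $\iota_\Delta=0$, the negation of the condition gives $\sqrt{dm\log d}\le \tilde\BigO\bigl(d\sqrt{\log d\log(d/(\varepsilon\Delta^2))}/(\Delta\sqrt\varepsilon)\bigr)$, which replaces the worst-case leading constant by a $1/(\Delta\sqrt\varepsilon)$ fallback. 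Balancing the free parameter via $\varepsilon m \asymp d\sqrt{\log d\log(\cdot)}/(\Delta\sqrt\varepsilon)$ gives $\varepsilon^{3/2}\asymp d\sqrt{\log d\log(\cdot)}/(\Delta m)$, and hence $\varepsilon m$ of cube-root form $(dm\log d\log(\cdot)/\Delta^3)^{1/3}$. Iterating $\log(d/(\varepsilon\Delta^2))$ once (using $W(x)\approx \log x-\log\log x$) produces the inner $\log(d^2m^2/(150\Delta^6\log d))$, and the condition $m^2\ge 150d\log d$ is what ensures the chosen $\varepsilon$ lies in $(0,1]$.

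The main obstacle is the unknown-$\Delta$ bookkeeping: pinning down the exact constants $50$ and $150$ and the precise double-log argument requires substituting the cube-root $\varepsilon$ back into the Lambert-$W$-style bound $75d W(\cdot)/\Delta^2$ and simplifying via one round of $\log\log$ iteration. A secondary subtlety is ensuring the $d^{3/2}m^{3/4}/\sqrt T$ residual (rather than the naive $d\sqrt m/T^{1/3}$ obtained from the $\rho\sqrt m$ branch) emerges—this relies on choosing $\rho$ small enough that the $\rho^2/(2\eta)$ branch of the $\min$ is binding, which in turn uses the lower bound $\eta\ge\sqrt{56}/(dm)$ afforded by the $56/(dm)$ summand in $h_1(\Delta)$.
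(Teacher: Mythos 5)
Your approach tracks the paper's proof closely: both specialize Theorem~\ref{thm:guaranteed} to $k=1$ (which kills the $L_\eta(\overline\beta-\underline\beta)/(2k)$ term and, via $1_{k>1}=0$, the multiplicative-weights term), substitute $\eta=\sqrt{h_1(\Delta)/(dm)}$ and $\rho=(dT\sqrt m)^{-1/4}$, and then pick $\varepsilon$. Your known-$\Delta$ choice $\varepsilon=\frac{75d}{\Delta^2 m}W(m/75)$ obtained by solving $m=\frac{75d}{\varepsilon\Delta^2}\log\frac d{\varepsilon\Delta^2}$ via the substitution $y=d/(\varepsilon\Delta^2)$ is exactly the paper's choice, and your check that $\eta\ge\sqrt{56}/(dm)$ (coming from the $56/(dm)$ summand in $h_1$) prevents the $\rho^2/(2\eta)$ term from blowing up is correct and in fact more explicit than the paper's one-line substitution.

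The one genuine wobble is in the \emph{unknown}-$\Delta$ bookkeeping. The balancing you propose, $\varepsilon m \asymp d\sqrt{\log d\log(\cdot)}/(\Delta\sqrt\varepsilon)$, yields $\varepsilon\asymp\bigl(d^2\log d\log(\cdot)/(\Delta^2 m^2)\bigr)^{1/3}$, which is $\Delta$-dependent and therefore cannot be used when $\Delta$ is unknown — this is the whole point of this branch. The paper instead sets the $\Delta$-free $\varepsilon=(150d\log d/m^2)^{1/3}$; you implicitly acknowledge this choice when you observe that $m^2\ge150d\log d$ is the condition guaranteeing $\varepsilon\le1$, a constraint that only makes sense for the $\Delta$-free form. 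With that $\varepsilon$, one has $\varepsilon m=(150dm\log d)^{1/3}$, and since $\Delta\le1$ this is always dominated by $\tfrac3\Delta\sqrt[3]{50dm\log d\log(\cdots)}$; the stated bound follows by also noting that when $\iota_\Delta=0$ the $h_1=\log d$ branch (giving $2\sqrt{dm\log d}$) can be absorbed using the negation of the $\iota_\Delta$ condition. So your final claim is consistent with the paper, but the intermediate ``balancing'' step as written is not self-consistent and should be replaced by a direct $\Delta$-independent choice of $\varepsilon$ followed by a case split on $\iota_\Delta$.
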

\begin{proof}
	Applying Theorem~\ref{thm:guaranteed} and simplifying yields
	\begin{equation}
	\varepsilon m+\frac{8d\sqrt m(1+\log(T+1))}{16\rho^2T}+\min_{\eta>0}\frac{8d(1+\log T)}{\varepsilon\eta T}+\frac{h_1(\Delta)}\eta+\eta dm+\frac{d\rho^2}{2\eta}
	\end{equation}
	Then substitute $\eta=\sqrt{\frac{h_1(\Delta)}{dm}}$ and set $\rho=\sqrt[4]{\frac1{dT\sqrt m}}$ and $\varepsilon=\frac{75d}{\Delta^2m}W(\frac m{75})$ (for known $\Delta$) or $\varepsilon=\sqrt[3]{\frac{150d\log d}{m^2}}$ (otherwise).
\end{proof}

\begin{Cor}
	Let $\underline\beta=\frac12$ and $\overline\beta=1$.
	Then for known $\Delta$ and assuming $m\ge\frac{75d}{\Delta^2}\log\frac d{\Delta^2}$ we can ensure task-averaged regret at most
	\begin{equation}
		\min_{\beta\in[\frac12,1]}2\sqrt{(H_\beta m+56/d)d^\beta/\beta}+\frac{75d}{\Delta^2}W\left(\frac m{75}\right)+\tilde\BigO\left(\frac{d^\frac43m^\frac23}{\sqrt[3]T}+\frac{d^\frac53m^\frac56}{T^\frac23}+\frac{d\Delta^3m^\frac52}T\right)
	\end{equation}
	using $k=\lceil\sqrt[3]{d^2mT}\rceil$, while for unknown $\Delta$ we can ensure expected task-averaged regret at most 
	\begin{equation}
	\min_{\beta\in[\frac12,1]}2\sqrt{(H_\beta m+56/d)d^\beta/\beta}+\frac3\Delta\sqrt[3]{50d^2m\log\frac{dm^2}{150\Delta^6}}+\tilde\BigO\left(\frac{d^\frac43m^\frac23}{\sqrt[3]T}+\frac{d^\frac53m^\frac56}{T^\frac23}+\frac{d^\frac32m^2}T\right)
	\end{equation}
	so long as $m\ge5d\sqrt6$.
\end{Cor}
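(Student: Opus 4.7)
The plan is to specialize Theorem~\ref{thm:guaranteed} to $[\underline\beta,\overline\beta]=[\tfrac12,1]$, optimize out $\eta$ in closed form, then tune $\varepsilon$, $\rho$, and $k$ to match the stated rates in the two regimes (known and unknown $\Delta$). The proof should essentially parallel the structure of the preceding corollary for $\underline\beta=\overline\beta=1$, but now carrying the $\min_\beta$ all the way through and handling the extra $(d/\varepsilon)^{2-\underline\beta}=(d/\varepsilon)^{3/2}$ and Lipschitz terms that appear when the entropy parameter is actually being tuned.

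First, I would apply Theorem~\ref{thm:guaranteed} and divide by $T$ to obtain, for any $\varepsilon,\rho\in(0,1)$,
\[
\varepsilon m+\tilde\BigO\!\left(\tfrac{d\sqrt m}{\rho}\sqrt{\tfrac{\log k}{T}}+\tfrac{d\sqrt m}{\rho^2 T}\right)
+\min_{\beta\in[1/2,1],\eta>0}\tfrac{8(d/\varepsilon)^{3/2}(1+\log T)}{\eta T}+\tfrac{h_\beta(\Delta)}{\eta}+\tfrac{\eta d^\beta m}{\beta}+\tfrac{L_\eta}{4k}+d\min\!\left\{\tfrac{\rho^2}{2\eta},\rho\sqrt m\right\},
\]
with $L_\eta=(\log(d/\varepsilon)/\eta+\eta m\log^2 d)d$. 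Next, optimize $\eta$: the dominant pair $h_\beta(\Delta)/\eta+\eta d^\beta m/\beta$ is minimized by $\eta=\sqrt{\beta h_\beta(\Delta)/(d^\beta m)}$, giving the leading $2\sqrt{h_\beta(\Delta)d^\beta m/\beta}$. Because the $\min_\beta$ is over all $\beta$, I can upper bound by plugging in any valid $\eta$, and in particular adding a correction $+1/\sqrt[3]{d^2mT}$ to $\eta$ absorbs the $(d/\varepsilon)^{3/2}/(\eta T)$, $\rho^2 d/(2\eta)$, and $L_\eta/k$ terms into $o_T(\cdot)$ fast terms, exactly the route taken in Corollary~\ref{cor:half}.

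Third, I choose $k=\lceil\sqrt[3]{d^2mT}\rceil$ (as advertised) so that the Lipschitz error $L_\eta/(4k)\cdot(\overline\beta-\underline\beta)/2$ becomes $\tilde\BigO(d^{4/3}m^{2/3}/\sqrt[3]T)$-order, matching the first stated remainder. Setting $\rho$ so as to trade off $d\sqrt m/\rho\cdot\sqrt{\log k/T}$ against $d\rho\sqrt m$ (the $\rho\sqrt m$ branch of the $\min$) yields the $d^{5/3}m^{5/6}/T^{2/3}$ term. For the known-$\Delta$ case, set $\varepsilon=\frac{75d}{\Delta^2 m}W(m/75)$ so that the condition $m\ge\frac{75d}{\varepsilon\Delta^2}\log\frac{d}{\varepsilon\Delta^2}$ is satisfied with equality (this is precisely the $W$-function inversion), giving $\iota_\Delta=1$ and hence $h_\beta(\Delta)=H_\beta+56/(dm)$, and the additive $\varepsilon m$ contributes the $(75d/\Delta^2)W(m/75)$ fast term plus a $d\Delta^3 m^{5/2}/T$-order residual when expanded. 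For the unknown-$\Delta$ case, set $\varepsilon=\sqrt[3]{150d\log(dm^2/(150\Delta^6))/m^2}$ (the analogue of the choice in the preceding corollary), which still forces $\iota_\Delta=1$ whenever the stated constraint $m\ge 5d\sqrt 6$ holds, and makes $\varepsilon m=O(\Delta^{-1}\sqrt[3]{d^2m\log(\cdot)})$; the remaining $T$-dependent error becomes $\tilde\BigO(d^{3/2}m^2/T)$ after the $\rho$-optimization.

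The main obstacle is the multi-way balance in the unknown-$\Delta$ case: the constraint $\iota_\Delta=1$ couples $\varepsilon$, $\Delta$, and $m$ through a transcendental $\log\frac{d}{\varepsilon\Delta^2}$, so one must verify that the chosen $\varepsilon=\sqrt[3]{150d\log d/m^2}$-type expression actually satisfies the $W$-function condition under only $m\ge 5d\sqrt 6$, which is where the cube-root of a logarithm and the lower bound on $m$ meet. Beyond that, the calculation is routine bookkeeping of $\eta$-, $\rho$-, $\varepsilon$-, $k$-dependent remainders, each of which has already appeared (in a slightly different form) in Corollary~\ref{cor:half} or in the $\underline\beta=1$ version of the present corollary; collecting them and confirming that no term dominates the advertised $\tilde\BigO(d^{4/3}m^{2/3}/\sqrt[3]T+d^{5/3}m^{5/6}/T^{2/3}+\text{fast})$ bound completes the proof.
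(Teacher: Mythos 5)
Your high-level skeleton is right and matches the paper: invoke Theorem~\ref{thm:guaranteed} with $[\underline\beta,\overline\beta]=[\tfrac12,1]$, divide by $T$, plug in $\eta=\sqrt{\beta\,h_\beta(\Delta)/(d^\beta m)}$, pick $k=\lceil\sqrt[3]{d^2mT}\rceil$ and a $\rho$ that balances the EWOO/MW remainders (the paper uses the explicit $\rho=\sqrt[3]{1/(d\sqrt{mT})}$ rather than leaving it implicit), and choose $\varepsilon$ differently in the known- and unknown-$\Delta$ regimes. The $\eta$-correction you propose (``$+1/\sqrt[3]{d^2mT}$'') is harmless but unnecessary in guaranteed exploration since $h_\beta(\Delta)\ge 56/(dm)>0$ already keeps $\eta$ bounded away from $0$, which is why the paper's proof substitutes $\eta=\sqrt{h_\beta(\Delta)/(d^\beta m/\beta)}$ with no correction.

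The genuine gap is in the unknown-$\Delta$ case. You propose $\varepsilon=\sqrt[3]{150d\log\bigl(dm^2/(150\Delta^6)\bigr)/m^2}$, which \emph{depends on $\Delta$} -- contradicting the premise of that regime, where the algorithm must be run without knowledge of the gap. The paper instead takes $\varepsilon=\sqrt[3]{150d^2/m^2}$, a completely $\Delta$-free setting; the $\Delta^{-1}$ and $\log(dm^2/(150\Delta^6))$ factors then appear in the \emph{bound} through the analysis of the indicator $\iota_\Delta$ and the resulting threshold, not from $\varepsilon$ itself. Your claim that your choice ``still forces $\iota_\Delta=1$ whenever $m\ge 5d\sqrt6$'' is also incorrect: the condition $m\ge\frac{75d}{\varepsilon\Delta^2}\log\frac{d}{\varepsilon\Delta^2}$ always fails for sufficiently small $\Delta$ no matter how large $m$ is, so no choice of $\varepsilon$ can guarantee it unconditionally; the paper's stated bound only degenerates gracefully (via the $\Delta^{-1}$ factor) in that regime. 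You also carry $d$ rather than $d^2$ inside the cube root; that is the wrong analogue: the paper uses $d\log d$ for the $\underline\beta=1$ case but $d^2$ once $\underline\beta<1$, because the $S/\eta T$ term scales as $(d/\varepsilon)^{2-\underline\beta}$ and forces $\varepsilon$ to grow with $d$. To fix the proof, replace your $\varepsilon$ by the paper's $\Delta$-free $\sqrt[3]{150d^2/m^2}$ and re-derive the $\frac{3}{\Delta}\sqrt[3]{\cdot}$ term from the $\iota_\Delta$-thresholding step rather than from $\varepsilon m$.
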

\begin{proof}
	Applying Theorem~\ref{thm:guaranteed} and simplifying yields
	\begin{align}
	\begin{split}
	&\varepsilon m+\frac{8d\sqrt m}\rho\left(\sqrt{\frac{\log k}T}+\frac{1+\log(T+1)}{16\rho T}\right)\\
	&\quad+\min_{\beta\in[\underline\beta,\overline\beta],\eta>0}\frac{8d^\frac32(1+\log T)}{\varepsilon^\frac32\eta T}+\frac{h_\beta(\Delta)}\eta+\frac{\eta d^\beta m}\beta+\frac d{4k}\left(\frac{\log\frac d\varepsilon}\eta+\eta m\log^2d\right)+\frac{d\rho^2}{2\eta}
	\end{split}
	\end{align}
	Then substitute $\eta=\sqrt{\frac{h_\beta(\Delta)}{d^\beta m/\beta}}$ and set $\rho=\sqrt[3]{\frac1{d\sqrt{mT}}}$ and $\varepsilon=\frac{75d}{\Delta^2m}W(\frac m{75})$ (for known $\Delta$) or $\varepsilon=\sqrt[3]{\frac{150d^2}{m^2}}$ (otherwise).
\end{proof}

\begin{Cor}\label{cor:guaranteed}
	Let $\underline\beta=\frac1{\log d}$ and $\overline\beta=1$.
	Then for known $\Delta$ and assuming $m\ge\frac{75d}{\Delta^2}\log\frac d{\Delta^2}$ we can ensure task-averaged regret at most
	\begin{equation}
		\min_{\beta\in(0,1]}2\sqrt{(H_\beta m+56/d)d^\beta/\beta}+\frac{75d}{\Delta^2}W\left(\frac m{75}\right)+\tilde\BigO\left(\frac{d^\frac43m^\frac23}{\sqrt[3]T}+\frac{d^\frac53m^\frac56}{T^\frac23}+\frac{d\Delta^4m^3}T\right)
	\end{equation}
	using $k=\lceil\sqrt[3]{d^2mT}\rceil$, while for unknown $\Delta$ we can ensure expected task-averaged regret at most
	\begin{equation}
	\min_{\beta\in(0,1]}2\sqrt{(H_\beta m+56/d)d^\beta/\beta}+\frac3\Delta\sqrt[3]{50d^2m\log\frac{dm^2}{150\Delta^6}}+\tilde\BigO\left(\frac{d^\frac43m^\frac23}{\sqrt[3]T}+\frac{d^\frac53m^\frac56}{T^\frac23}+\frac{d^\frac53m^\frac73}T\right)
	\end{equation}
	so long as $m\ge5d\sqrt6$.
\end{Cor}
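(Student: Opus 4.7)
\textbf{Proof proposal for Corollary~\ref{cor:guaranteed}.} The plan is to directly invoke Theorem~\ref{thm:guaranteed} with $\underline\beta = 1/\log d$ and $\overline\beta = 1$, mirroring the two preceding corollaries but paying attention to how the wider interval affects the $(d/\varepsilon)^{2-\underline\beta}$ factor. After dividing by $T$ and substituting $\underline\beta=1/\log d$, that factor reduces to $O(d^2/\varepsilon^2)$ (up to constants), so Theorem~\ref{thm:guaranteed} yields the expected task-averaged regret bound
\begin{equation*}
\varepsilon m + \frac{8d\sqrt m}{\rho}\!\left(\sqrt{\frac{\log k}{T}} + \frac{1+\log(T+1)}{16\rho T}\right)+ \min_{\beta\in[\underline\beta,\overline\beta],\eta>0}\frac{8d^2(1+\log T)}{\varepsilon^2 \eta T}+\frac{h_\beta(\Delta)}{\eta}+\frac{\eta d^\beta m}{\beta}+\frac{d}{2k}\!\left(\frac{\log(d/\varepsilon)}{\eta}+\eta m\log^2 d\right)+\frac{d\rho^2}{2\eta},
\end{equation*}
up to absorbed constants, as our starting point.

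Next I would tune the inner minimization by the standard square-root trick, setting $\eta = \sqrt{h_\beta(\Delta)\beta/(d^\beta m)}$, which turns $h_\beta(\Delta)/\eta + \eta d^\beta m/\beta$ into the desired $2\sqrt{h_\beta(\Delta) d^\beta m/\beta}$. Extending the minimum from $\beta\in[1/\log d,1]$ to all of $\beta \in (0,1]$ uses the same observation as Corollary~\ref{cor:logd}: both $H_\beta$ and $d^\beta/\beta$ are non-decreasing on $(0,1/\log d]$, so the minimum over the wider range is attained in the chosen interval. To handle the $\rho$-dependent and $k$-dependent slack, I would set $k = \lceil\sqrt[3]{d^2 m T}\rceil$ (matching the preceding corollary) and $\rho = \sqrt[3]{1/(d\sqrt{mT})}$, which balances the $d\sqrt m/\rho \cdot \sqrt{\log k/T}$ term against the $d\rho^2/\eta$ term and gives the claimed $\tilde O(d^{4/3}m^{2/3}/\sqrt[3]T + d^{5/3}m^{5/6}/T^{2/3})$ contributions.

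The only nontrivial branch is choosing $\varepsilon$, which differs between the two cases. When $\Delta$ is known and $m \ge (75 d/\Delta^2)\log(d/\Delta^2)$, I would pick $\varepsilon = (75 d/(\Delta^2 m)) W(m/75)$, which makes the indicator $\iota_\Delta = 1$ (so $h_\beta(\Delta) = H_\beta + 56/(dm)$) and contributes the explicit $\Delta$-dependent term $(75 d/\Delta^2) W(m/75)$ from $\varepsilon m$; simultaneously $\varepsilon^{-2} = \tilde O(\Delta^4 m^2/d^2)$ makes the $d^2/(\varepsilon^2 \eta T)$ term collapse to $\tilde O(d\Delta^4 m^3/T)$ after substituting $\eta$. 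When $\Delta$ is unknown, $\varepsilon$ must be chosen $\Delta$-independently; I would set $\varepsilon = \sqrt[3]{150 d^2/m^2}$ (so $\varepsilon m$ contributes $\tilde O(\sqrt[3]{d^2 m})$ and, paired with $\iota_\Delta$, produces the $(3/\Delta)\sqrt[3]{50 d^2 m \log(dm^2/(150\Delta^6))}$ term exactly when the condition just fails), while the $d^2/\varepsilon^2$ factor yields $d^{2/3} m^{4/3}$ which, divided by $\eta T$, is the source of the final $d^{5/3} m^{7/3}/T$ term.

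The main obstacle is bookkeeping: verifying that every residual term after these substitutions is absorbed into one of the three advertised $\tilde O$ terms, and checking the hypothesis $m \ge 5d\sqrt 6$ is exactly what guarantees $\varepsilon \le 1$ in the unknown-$\Delta$ case so that the offset is admissible. Everything else is direct algebraic substitution into the bound from Theorem~\ref{thm:guaranteed} following the template of Corollaries~\ref{cor:half} and its known/unknown-$\Delta$ variant; no new analytic ingredient is required.
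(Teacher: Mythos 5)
Your proposal matches the paper's own proof essentially line for line: the same starting bound from Theorem~\ref{thm:guaranteed}, the same $\eta=\sqrt{h_\beta(\Delta)\beta/(d^\beta m)}$, the same $\rho=\sqrt[3]{1/(d\sqrt{mT})}$ and $k=\lceil\sqrt[3]{d^2mT}\rceil$, and the same two choices of $\varepsilon$ for the known- and unknown-$\Delta$ cases, including the observation that $m\ge 5d\sqrt6$ is exactly $\varepsilon\le1$ for $\varepsilon=\sqrt[3]{150d^2/m^2}$. One small slip worth correcting: you justify extending the $\min$ to $\beta\in(0,1]$ by saying $H_\beta$ and $d^\beta/\beta$ are \emph{non-decreasing} on $(0,1/\log d]$; the direction is reversed (they are both non-increasing / decreasing there, as the paper states in Cor.~\ref{cor:logd}), and it is precisely this monotonicity that places the minimizer at $\beta=1/\log d$ and hence inside $[\underline\beta,\overline\beta]$; with your stated direction the minimum would lie at $\beta\to0^+$ and the extension would fail. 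The conclusion you draw is nonetheless the correct one, so this reads as a typo rather than a conceptual gap.
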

\begin{proof}
	Applying Theorem~\ref{thm:guaranteed} and simplifying yields
	\begin{align}
	\begin{split}
	&\varepsilon m+\frac{8d\sqrt m}\rho\left(\sqrt{\frac{\log k}T}+\frac{1+\log(T+1)}{16\rho T}\right)\\
	&\quad+\min_{\beta\in[\underline\beta,\overline\beta],\eta>0}\frac{8d^2(1+\log T)}{\varepsilon^2\eta T}+\frac{h_\beta(\Delta)}\eta+\frac{\eta d^\beta m}\beta+\frac d{2k}\left(\frac{\log\frac d\varepsilon}\eta+\eta m\log^2d\right)+\frac{d\rho^2}{2\eta}
	\end{split}
	\end{align}
	Then substitute $\eta=\sqrt{\frac{h_\beta(\Delta)}{d^\beta m/\beta}}$ and set $\rho=\sqrt[3]{\frac1{d\sqrt{mT}}}$ and $\varepsilon=\frac{75d}{\Delta^2m}W(\frac m{75})$ (for known $\Delta$) or $\varepsilon=\sqrt[3]{\frac{150d^2}{m^2}}$ (otherwise).
\end{proof}

\begin{Cor}\label{cor:guaranteed-min}
	Let $\underline\beta=\frac1{\log d}$ and $\overline\beta=1$.
	Then for unknown $\Delta$ and assuming $m\ge\max\{d^\frac34,56\}$ we can ensure task-averaged regret at most
	\begin{equation}
		\min_{\beta\in(0,1]}\min\left\{\hspace{-1mm}8\sr{dm},2\sr{\hspace{-1mm}\left(\hspace{-1mm}H_\beta m+\frac{56}d\right)\frac{d^\beta}\beta}+\frac{21d^\frac34\sqrt[3]m}\Delta\sr{3\log\frac{dm}{\Delta^2}}\right\}+\tilde\BigO\left(\hspace{-1mm}\frac{d^\frac43m^\frac23}{\sqrt[3]T}+\frac{d^\frac53m^\frac56}{T^\frac23}+\frac{d^2m^\frac73}T\hspace{-1mm}\right)
	\end{equation}
	using $k=\lceil\sqrt[3]{d^2mT}\rceil$.
\end{Cor}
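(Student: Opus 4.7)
The plan is to adapt the proof of Corollary~\ref{cor:guaranteed} to unknown $\Delta$ by carrying out the case analysis sketched between equation~\eqref{eq:conditional} and Corollary~\ref{cor:min} in the main text, with the gap-independent choice $\varepsilon = \Theta(\sqrt d/m^{2/3})$; the assumption $m \ge d^{3/4}$ guarantees $\varepsilon \le 1$.

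First I apply Theorem~\ref{thm:guaranteed} with $k = \lceil \sqrt[3]{d^2 m T}\rceil$ and $\rho = \sqrt[3]{1/(d\sqrt{mT})}$ exactly as in the known-$\Delta$ version. All terms independent of $\beta,\eta$ from the tuning machinery then collapse into the stated $\tilde\BigO(d^{4/3}m^{2/3}/\sqrt[3]T + d^{5/3}m^{5/6}/T^{2/3} + d^2m^{7/3}/T)$ remainder; in particular the $d^2 m^{7/3}/T$ piece comes from the $8(d/\varepsilon)^{2-\underline\beta}(1+\log T)/(\eta T)$ term after substituting $\varepsilon = \sqrt d/m^{2/3}$ and $\underline\beta = 1/\log d$. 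What remains is to bound
\[
  \varepsilon m + \min_{\beta \in [\underline\beta, \overline\beta],\, \eta > 0}\ \frac{h_\beta(\Delta)}\eta + \frac{\eta d^\beta m}\beta.
\]

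The core step is to show that this quantity lies below both branches of the inner $\min$ in the statement for an appropriate (possibly branch-dependent) choice of $\beta$, since Algorithm~\ref{alg:combined} effectively takes the $\min$ over $\beta \in [\underline\beta,\overline\beta] \supset \{1/2\}$. For the $H_\beta$-branch: when $\iota_\Delta = 1$, optimizing $\eta$ with $h_\beta(\Delta) = H_\beta + 56/(dm)$ yields exactly $2\sqrt{(H_\beta m + 56/d)d^\beta/\beta}$, and the non-negative $\Delta$-correction can be added freely. When instead $\iota_\Delta = 0$, the contrapositive of the indicator gives $m \le 75 d\log(d/(\varepsilon\Delta^2))/(\varepsilon\Delta^2)$, so $\sqrt{dm} \le d\sqrt{75\log(d/(\varepsilon\Delta^2))}/(\Delta\sqrt\varepsilon)$; substituting $\varepsilon = \sqrt d/m^{2/3}$ and using $\log(\sqrt d\, m^{2/3}/\Delta^2) \le \log(dm/\Delta^2)$ bounds this by $21 d^{3/4}\sqrt[3]m \sqrt{3\log(dm/\Delta^2)}/\Delta$ (absorbing $4\sqrt{75}$ into $21\sqrt 3$), which exceeds the $\beta=1/2$ minimization value $4\sqrt{dm}$ and so again validates the $H_\beta$-branch. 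For the $8\sqrt{dm}$ branch, I always instantiate $\beta = 1/2$: regardless of $\iota_\Delta$, $h_{1/2}(\Delta) \le 2(\sqrt d - 1) + 56/(dm)$ by maximality of the Tsallis entropy at the uniform distribution, so the optimized inner value is at most $4\sqrt{dm} + o(\sqrt{dm})$, and combining with $\varepsilon m = \sqrt d\, m^{1/3} \le \sqrt{dm}$ leaves constant slack to reach $8\sqrt{dm}$.

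The main obstacle I anticipate is purely constant bookkeeping, namely arranging enough slack to land the specific constants $8$ and $21$ cleanly. The conceptual argument---applying Theorem~\ref{thm:guaranteed}, converting $\sqrt{dm}$ into $\tilde\BigO(d^{3/4}m^{1/3}/\Delta)$ via the contrapositive of $\iota_\Delta = 0$, and using the algorithm's adaptivity in $\beta$ to cover the $H_\beta$-bound (tight when $\iota_\Delta = 1$) and the $\beta = 1/2$ worst-case simultaneously---is fully specified by the text between equation~\eqref{eq:conditional} and Corollary~\ref{cor:min}.
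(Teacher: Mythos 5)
Your proposal is correct and follows essentially the same route as the paper: the paper's proof of Corollary~\ref{cor:guaranteed-min} is the terse instruction to apply Theorem~\ref{thm:guaranteed}, substitute $\eta=\sqrt{h_\beta(\Delta)/(d^\beta m/\beta)}$, and set $\rho=\sqrt[3]{1/(d\sqrt{mT})}$, $\varepsilon=\sqrt d/m^{2/3}$, $k=\lceil\sqrt[3]{d^2mT}\rceil$, which are exactly the parameter settings you use, and your explicit case split on $\iota_\Delta$ (using $\beta=1/2$ in the worst case and the contrapositive of $\iota_\Delta=0$ to get the $d^{3/4}\sqrt[3]m/\Delta$ term) is just the unpacked version of the ``substitute and simplify'' step implicit in the paper and sketched in the prose around equation~\eqref{eq:conditional}. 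The only place to be a bit more careful is that in the $\iota_\Delta=0$ case the $\varepsilon m=\sqrt d\,m^{1/3}$ term must also be absorbed into the $21 d^{3/4}\sqrt[3]m\sqrt{3\log(dm/\Delta^2)}/\Delta$ piece alongside the $4\sqrt{dm}$; this is a constant-bookkeeping matter consistent with the slack the paper itself leaves implicit.
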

\begin{proof}
	Applying Theorem~\ref{thm:guaranteed} and simplifying yields
	\begin{align}
	\begin{split}
	&\varepsilon m+\frac{8d\sqrt m}\rho\left(\sqrt{\frac{\log k}T}+\frac{1+\log(T+1)}{16\rho T}\right)\\
	&\quad+\min_{\beta\in[\underline\beta,\overline\beta],\eta>0}\frac{8d^2(1+\log T)}{\varepsilon^2\eta T}+\frac{h_\beta(\Delta)}\eta+\frac{\eta d^\beta m}\beta+\frac d{2k}\left(\frac{\log\frac d\varepsilon}\eta+\eta m\log^2d\right)+\frac{d\rho^2}{2\eta}
	\end{split}
	\end{align}
	Then substitute $\eta=\sqrt{\frac{h_\beta(\Delta)}{d^\beta m/\beta}}$ and set $\rho=\sqrt[3]{\frac1{d\sqrt{mT}}}$ and 
	$\varepsilon=\frac{\sqrt d}{\sqrt[3]{m^2}}$.
\end{proof}

\section{Robustness to outliers}
\begin{Prp}\label{prp:robust}
	Suppose there exists a constant $p\in[0,1]$ and a subset $S\subset[T]$ of size $s$ such that $\mathring a_t\in S$ for all but $\BigO(T^p)$ MAB tasks $t\in[T]$.
	Then if $\beta\in[\frac1{\log d},\frac12]$ we have $H_\beta=\BigO(s+\frac{d^{1-\beta}}{T^{\beta(1-p)}})$.
\end{Prp}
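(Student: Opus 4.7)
Since each $\*{\mathring x}_t$ is an indicator vector, $\psi_\beta(\*{\mathring x}_t)=0$, so $H_\beta=-\psi_\beta(\*{\mathring{\bar x}})=\frac{\sum_{a=1}^d\*{\mathring{\bar x}}^\beta(a)-1}{1-\beta}$. My plan is to bound the sum $\sum_{a=1}^d\*{\mathring{\bar x}}^\beta(a)$ by splitting the arms into the ``good'' set $S$ and the rest, controlling each piece via concavity of $x\mapsto x^\beta$ on $[0,\infty)$ for $\beta\in(0,1)$.

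The key accounting step is: by hypothesis the fraction of tasks whose optimum lies in $S$ is at least $1-\BigO(T^{p-1})$, so $\sum_{a\in S}\*{\mathring{\bar x}}(a)\le 1$ trivially, while $\sum_{a\notin S}\*{\mathring{\bar x}}(a)=\BigO(T^{p-1})$. Applying Jensen's inequality (equivalently, the power-mean inequality) to each piece gives
\begin{equation}
\sum_{a\in S}\*{\mathring{\bar x}}^\beta(a)\le s^{1-\beta}\Bigl(\sum_{a\in S}\*{\mathring{\bar x}}(a)\Bigr)^\beta\le s^{1-\beta},
\qquad
\sum_{a\notin S}\*{\mathring{\bar x}}^\beta(a)\le (d-s)^{1-\beta}\Bigl(\sum_{a\notin S}\*{\mathring{\bar x}}(a)\Bigr)^\beta=\BigO\!\left(\frac{d^{1-\beta}}{T^{\beta(1-p)}}\right).
\end{equation}
Summing and subtracting $1$ yields $\sum_{a=1}^d\*{\mathring{\bar x}}^\beta(a)-1\le s^{1-\beta}-1+\BigO(d^{1-\beta}/T^{\beta(1-p)})$.

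To conclude, I divide by $1-\beta$ and use the assumption $\beta\in[\tfrac{1}{\log d},\tfrac12]$, which gives $\tfrac{1}{1-\beta}\le 2$, and the elementary bound $s^{1-\beta}-1\le s-1\le s$ (valid since $s\ge1$ and $1-\beta\in(0,1]$). Combining,
\begin{equation}
H_\beta\le\frac{s^{1-\beta}-1}{1-\beta}+\BigO\!\left(\frac{d^{1-\beta}}{T^{\beta(1-p)}}\right)=\BigO\!\left(s+\frac{d^{1-\beta}}{T^{\beta(1-p)}}\right),
\end{equation}
as claimed. There is no real obstacle here; the only subtlety is that the $\frac{1}{1-\beta}$ factor would blow up as $\beta\to 1$, which is exactly why the statement restricts to $\beta\le\tfrac12$, and the lower bound $\beta\ge\tfrac{1}{\log d}$ is needed only to keep $T^{-\beta(1-p)}$ of the right order for the later applications (e.g.\ Corollary~\ref{cor:min}).
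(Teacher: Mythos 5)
Your proof is correct, and it takes a genuinely different (and cleaner) route than the paper. The paper's proof goes through the Bregman-divergence representation: it writes $H_\beta$ as $\min_{\*x\in\triangle}\frac1T\sum_t\Breg_\beta(\*{\mathring x}_t\|\*x)$ via Claim~\ref{clm:bregman} and the mean-as-minimizer property, then plugs in the specific mixture comparator $\frac{1-\delta}{s}\*e_S+\frac{\delta}{d}\*1_d$, expands the Bregman divergence termwise, and finally optimizes over $\delta$, choosing $\delta=1/T^{1-p}$. You instead work directly with $H_\beta=-\psi_\beta(\*{\mathring{\bar x}})$, split $\sum_a\*{\mathring{\bar x}}^\beta(a)$ over $S$ and $S^c$, and apply the power-mean inequality $\sum_{a\in A}y_a^\beta\le|A|^{1-\beta}(\sum_{a\in A}y_a)^\beta$ to each block, using the observation $\sum_{a\notin S}\*{\mathring{\bar x}}(a)=\BigO(T^{p-1})$. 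This skips the Bregman computation and the free parameter $\delta$ entirely; the tradeoff that the paper resolves by tuning $\delta$ you resolve implicitly by the choice of decomposition. Your closing remarks about the role of $\beta\le\frac12$ (to control $\frac1{1-\beta}$) and $\beta\ge\frac1{\log d}$ (not needed here, but used downstream) are both accurate. One small nit: the bound $s^{1-\beta}-1\le s$ implicitly uses $s\ge1$, which is fine since $S$ is nonempty, but you could equally write $\frac{s^{1-\beta}}{1-\beta}\le 2s^{1-\beta}\le 2s$ as the paper effectively does.
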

\begin{proof}
	Define the vector $\*e_S\in[0,1]^d$ s.t. $\*e_{S[a]}=1_{a\in S}$.
	Then by Claim~\ref{clm:bregman} and the mean-as-minimizer property of Bregman divergences we have
	\begin{align}
		\begin{split}
			H_\beta
			&=-\psi_\beta(\*{\mathring {\bar x}})\\
			&=\frac1T\sum_{t=1}^T\psi_\beta(\*{\mathring x}_t)-\psi_\beta(\*{\mathring{\bar x}})\\
			&=\frac1T\sum_{t=1}^T\Breg_\beta(\*{\mathring x}_t||\*{\mathring {\bar x}})\\
			&=\min_{\*x\in\triangle_d}\frac1T\sum_{t=1}^T\Breg_\beta(\*{\mathring x}_t||\*{\mathring {\bar x}})\\
			&\le\min_{\delta\in(0,1)}\frac1T\sum_{t=1}^T\Breg_\beta\left(\*{\mathring x}_t\bigg|\bigg|\frac{1-\delta}s\*e_S+\frac\delta d\*1_d\right)\\
			&=\min_{\delta\in(0,1)}\frac1T\sum_{t=1}^T\frac1{1-\beta}\sum_{a=1}^d\left(\frac{1-\delta}s1_{a\in S}+\frac\delta d\right)^\beta-\*{\mathring x}_{t[a]}^\beta+\frac{\beta(\*{\mathring x}_{t[a]}-\frac{1-\delta}s1_{a\in S}-\frac\delta d)}{(\frac{1-\delta}s1_{s\in S}+\frac\delta d)^\beta}\\
			&=\min_{\delta\in(0,1)}\frac1T\sum_{t=1}^T\sum_{a=1}^d\left(\frac{1-\delta}s1_{a\in S}+\frac\delta d\right)^\beta-\frac{\*{\mathring x}_{t[a]}^\beta}{1-\beta}+\frac{\beta\*{\mathring x}_{t[a]}^\beta}{(1-\beta)(\frac{1-\delta}s1_{a\in S}+\frac\delta d)^{1-\beta}}\\
			&\le\min_{\delta\in(0,1)}s^{1-\beta}+\delta^\beta d^{1-\beta}+\frac\beta{(1-\beta)T}\sum_{t=1}^T\sum_{a=1}^d\frac{1_{a=\mathring a_t}}{(1-\beta)(\frac{1-\delta}s1_{a\in S}+\frac\delta d)^{1-\beta}}\\
			&\le\min_{\delta\in(0,1)}\frac{s^{1-\beta}}{1-\beta}+\delta^\beta d^{1-\beta}+\BigO\left(\frac{\beta(\frac d\delta)^{1-\beta}}{(1-\beta)T^{1-p}}\right)\\
			&=\BigO\left(s+\frac{d^{1-\beta}}{T^{\beta(1-p)}}\right)
		\end{split}
	\end{align}
	where the last line follows by considering $\delta=1/T^{1-p}$.
\end{proof}


\newpage

\section{Online learning with self-concordant barrier regularizers}

\subsection{General results}

\begin{Lem}\label{lem:concordant}
	Let $\K\subset\R^d$ be a convex set and $\psi:\K^\circ\mapsto\R^d$ be a self-concordant barrier.
	Suppose $\ell_1,\dots,\ell_T$ are a sequence of loss functions satisfying $|\langle\ell_t,\*x\rangle|\le1~\forall~\*x\in\K$.
	Then if we run OMD with step-size $\eta>0$ as in \citet[Alg.~1]{abernethy2008competing} on the sequence of estimators $\hat\ell_t$ our estimated regret w.r.t. any $\*x\in\K_\varepsilon$ for $\varepsilon>0$ will satisfy
	\begin{equation}
		\sum_{t=1}^T\langle\hat\ell_t,\*x_t-\*x\rangle\le\frac{\Breg(\*x||\*x_1)}\eta+32d^2\eta T
	\end{equation}
\end{Lem}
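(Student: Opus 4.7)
The plan is to reproduce the standard local-norm analysis of online mirror descent with self-concordant barrier regularizers, following \citet{abernethy2008competing}. First I would apply the textbook mirror descent inequality: for any comparator $\*x\in\K_\varepsilon$, if $\*x_{t+1}$ is the projection onto $\K$ of the intermediate point $\*y_{t+1}$ defined by $\nabla\psi(\*y_{t+1})=\nabla\psi(\*x_t)-\eta\hat\ell_t$, then the three-point identity together with the generalized Pythagorean inequality for Bregman projections gives
\begin{equation}
\sum_{t=1}^T\langle\hat\ell_t,\*x_t-\*x\rangle \;\le\; \frac{\Breg(\*x||\*x_1)}{\eta} \;+\; \frac{1}{\eta}\sum_{t=1}^T \Breg(\*x_t||\*y_{t+1}).
\end{equation}
This isolates the comparator term $\Breg(\*x||\*x_1)/\eta$ and reduces the task to controlling a per-round ``stability'' term $\Breg(\*x_t||\*y_{t+1})$.

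The second step bounds each stability term using the local norm $\|\cdot\|_{\*x_t,*}$ induced by the inverse Hessian $[\nabla^2\psi(\*x_t)]^{-1}$. Self-concordance of $\psi$ implies that, whenever $\eta\|\hat\ell_t\|_{\*x_t,*} \le 1/2$, the Bregman divergence is accurately approximated by the quadratic in this local norm, yielding $\Breg(\*x_t||\*y_{t+1}) \le 2\eta^2\|\hat\ell_t\|_{\*x_t,*}^2$. This is the standard local-norm lemma for barriers, whose proof goes through Taylor expansion of $\psi$ combined with the cubic self-concordance inequality.

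The third step invokes the estimator construction of \citet[Alg.~1]{abernethy2008competing}, in which the played action at round $i$ of task $t$ is sampled symmetrically from the Dikin ellipsoid centered at $\*x_t$ and the estimator is rescaled so that $\E[\hat\ell_t]=\ell_t$ while $\|\hat\ell_t\|_{\*x_t,*}^2 \le 16 d^2\,\langle\ell_t,\*x_t\rangle^2 \le 16 d^2$, using the assumption $|\langle\ell_t,\*x\rangle|\le 1$ on all of $\K$. Plugging this into the local-norm stability estimate gives $\Breg(\*x_t||\*y_{t+1}) \le 32\eta^2 d^2$ per round, and summing over $t$ and dividing by $\eta$ produces the $32 d^2 \eta T$ term claimed in the lemma. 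The small-$\eta$ condition $\eta\|\hat\ell_t\|_{\*x_t,*}\le 1/2$ required by step~two reduces to $\eta d = O(1)$, which holds in the regimes of interest.

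The main obstacle is the second step: the local-norm stability bound rests delicately on the self-concordance inequality and on the fact that $\*y_{t+1}$ remains inside the unit Dikin neighborhood of $\*x_t$, without which the quadratic approximation of $\Breg$ fails. Once this is in place, the remaining work is bookkeeping: tracking the Dikin sampling radius through the estimator's variance bound and summing, with the constant $32$ absorbing the scaling factors from \citet{abernethy2008competing}'s sampling scheme.
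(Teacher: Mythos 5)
Your proposal reconstructs the same analysis the paper delegates to: the paper's own proof is a one-line pointer into \citet{abernethy2008competing} (``stop at the second inequality below Equation~10''), and that derivation is precisely the standard three-step FTRL/OMD-with-self-concordant-barrier bound you reproduce (comparator term from the Bregman telescoping, a per-round stability term controlled in the local norm, and the $\BigO(d^2)$ bound on $\|\hat\ell_t\|_{\*x_t,*}^2$ from the Dikin-ellipsoid sampling). So you are not taking a different route; you are spelling out the citation.

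The one thing worth flagging is the small-$\eta$ caveat you acknowledge in your final paragraph. The local-norm stability inequality $\Breg(\*x_t\|\*y_{t+1})\le 2\eta^2\|\hat\ell_t\|_{\*x_t,*}^2$ genuinely needs $\eta\|\hat\ell_t\|_{\*x_t,*}$ to stay inside the unit Dikin ball, a restriction that does not appear in the statement of the lemma, which asserts the bound for every $\eta>0$. Your ``holds in the regimes of interest'' hand-wave leaves exactly the condition the lemma elides, and since the paper's proof silently inherits the same hypothesis from \citet{abernethy2008competing}, it is worth recording explicitly rather than waving off. Secondarily, the intermediate bound $\|\hat\ell_t\|_{\*x_t,*}^2\le 16d^2\langle\ell_t,\*x_t\rangle^2$ substitutes $\*x_t$ for the actually-played point on the Dikin sphere and the factor $16$ is not derived; the clean bound from the one-point sampling is $\|\hat\ell_t\|_{\*x_t,*}^2\le d^2\langle\ell_t,\*y_t\rangle^2\le d^2$, and whatever slack absorbs the gap to the stated $32d^2\eta T$ should be attributed to \citet{abernethy2008competing}'s constants rather than appear as an ad hoc $16$.
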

\begin{proof}
	The result follows from \citet{abernethy2008competing} by stopping the derivation on the second inequality below Equation~10.
\end{proof}

\begin{Def}\label{def:minkowski}
	For any convex set $\K$ and any point $\*y\in\K$, $\pi_{\*y}(\*x)=\inf\limits_{t\ge0,\*y+\frac{\*x-\*y}t\in\K}t$ is the {\bf Minkowski function with pole $\*y$}.
\end{Def}

\begin{Lem}\label{lem:concordant-lipschitz}
	For any $\*x\in\K\subset\R^d$ and $\psi:\K^\circ\mapsto\R$ a $\nu$-self-concordant regularizer with minimum $\*x_1\in\K^\circ$, the quantity $\psi(\*c_\varepsilon(\*x))$ is $\nu\sqrt 2$-Lipschitz w.r.t. $\varepsilon\in[0,1]$.
\end{Lem}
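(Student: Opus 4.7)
My plan is to bound $|\partial_\varepsilon \psi(\*c_\varepsilon(\*x))|$ uniformly on $[0,1]$ via the defining properties of a $\nu$-self-concordant barrier, and then conclude by the mean-value theorem. First I would compute, using the chain rule and the identity $\partial_\varepsilon \*c_\varepsilon(\*x) = -\tfrac{1}{1+\varepsilon}(\*c_\varepsilon(\*x)-\*x_1)$, that
\begin{equation*}
\partial_\varepsilon \psi(\*c_\varepsilon(\*x)) \;=\; -\frac{1}{1+\varepsilon}\,\langle\nabla\psi(\*c_\varepsilon(\*x)),\,\*c_\varepsilon(\*x)-\*x_1\rangle .
\end{equation*}
Since $\*x_1$ is the minimizer of $\psi$, we have $\nabla\psi(\*x_1)=0$, so convexity of $\psi$ forces the inner product on the right to be non-negative; hence the derivative is non-positive and it suffices to upper-bound the inner product.

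Next I would invoke the characteristic quadratic self-concordance inequality for a $\nu$-barrier, namely $\langle\nabla\psi(\*y),\*h\rangle^2 \le \nu\,\*h^\top\nabla^2\psi(\*y)\*h$, applied at $\*y=\*c_\varepsilon(\*x)$ with direction $\*h=\*c_\varepsilon(\*x)-\*x_1$.  This yields
\begin{equation*}
\langle\nabla\psi(\*c_\varepsilon(\*x)),\,\*c_\varepsilon(\*x)-\*x_1\rangle \;\le\; \sqrt{\nu}\,\bigl\|\*c_\varepsilon(\*x)-\*x_1\bigr\|_{\*c_\varepsilon(\*x)} ,
\end{equation*}
where $\|\cdot\|_{\*y}$ denotes the local Dikin seminorm. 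Combining with the chain-rule formula, the remaining task is to show that $\|\*c_\varepsilon(\*x)-\*x_1\|_{\*c_\varepsilon(\*x)} \le (1+\varepsilon)\sqrt{2\nu}$ uniformly, so that the $\tfrac{1}{1+\varepsilon}$ prefactor cancels and we are left with the promised $\nu\sqrt{2}$.

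The main obstacle, in my view, is establishing that Dikin-norm estimate uniformly for $\*x \in \K$ and $\varepsilon\in[0,1]$.  My plan here is to exploit the definition of $\K_\varepsilon$ via the Minkowski function $\pi_{\*x_1}$ (Def.~\ref{def:minkowski}): by construction $\pi_{\*x_1}(\*c_\varepsilon(\*x)) \le \tfrac{1}{1+\varepsilon}$, so travelling along the ray from $\*c_\varepsilon(\*x)$ through $\*x_1$ and out the opposite side of $\K$ one can certify that $\*x_1$ lies well inside the Minkowski ball of $\*c_\varepsilon(\*x)$.  Translating this Minkowski containment into the Dikin seminorm through the standard Nesterov--Nemirovski estimate relating $\pi$-distance and the local Hessian norm, the factor $\sqrt{2}$ should emerge cleanly from the bound $1+\varepsilon\le 2$ on $[0,1]$.

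Putting the pieces together gives $|\partial_\varepsilon\psi(\*c_\varepsilon(\*x))| \le \nu\sqrt{2}$ throughout $[0,1]$, from which the Lipschitz claim follows by integration.  I expect the delicate point to be verifying the Dikin-norm bound uniformly when $\*x$ approaches $\partial\K$, since then $\psi(\*c_\varepsilon(\*x))$ blows up as $\varepsilon\to 0$; this is where the explicit dependence on $(1+\varepsilon)$ in the chain-rule factor is essential to absorb.
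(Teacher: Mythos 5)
Your approach is genuinely different from the paper's: you try to bound the pointwise derivative $\partial_\varepsilon\psi(\*c_\varepsilon(\*x))$, while the paper bounds the finite difference $\psi(\*c_\varepsilon(\*x))-\psi(\*c_{\varepsilon'}(\*x))$ directly via the Minkowski--function inequality $\psi(\*y)-\psi(\*z)\le\nu\log\frac1{1-\pi_{\*z}(\*y)}$ from \citet[Prop.~2.3.2]{nesterov1994ipm}. Your first three steps are fine: the chain-rule identity, the sign argument from $\nabla\psi(\*x_1)=0$, and the Cauchy--Schwarz-type inequality $\langle\nabla\psi(\*y),\*h\rangle\le\sqrt\nu\|\*h\|_{\*y}$ are all correct. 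The gap is in Step~4: the claim $\|\*c_\varepsilon(\*x)-\*x_1\|_{\*c_\varepsilon(\*x)}\le(1+\varepsilon)\sqrt{2\nu}$ is false, and no Minkowski-function argument can rescue it, because the local Dikin norm of a fixed chord is controlled by the Minkowski distance only from \emph{below} (the unit Dikin ball is contained in $\K$) and can be arbitrarily large from above. Concretely, take $\K=[-1,1]$, $\psi(x)=-\log(1-x^2)$ ($\nu=2$, $\*x_1=0$), and $\*x=1$: then $\*c_\varepsilon(\*x)=\tfrac1{1+\varepsilon}$ and a direct computation gives $\|\*c_\varepsilon(\*x)-\*x_1\|_{\*c_\varepsilon(\*x)}=\Theta(1/\varepsilon)$ as $\varepsilon\to0^+$, which is unbounded; correspondingly $|\partial_\varepsilon\psi(\*c_\varepsilon(\*x))|=\frac{2}{(1+\varepsilon)\varepsilon(2+\varepsilon)}=\Theta(1/\varepsilon)$.

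This example in fact shows that Lemma~\ref{lem:concordant-lipschitz} as stated is false whenever $\*x\in\partial\K$, since $\psi(\*c_\varepsilon(\*x))=-\log\bigl(1-\tfrac1{(1+\varepsilon)^2}\bigr)\to+\infty$ as $\varepsilon\to0^+$, so no finite Lipschitz constant on $[0,1]$ exists. The source of the error is visible in the paper's own proof: the Minkowski-function computation claims $\pi_{\*c_{\varepsilon'}(\*x)}(\*c_\varepsilon(\*x))\le\frac{\varepsilon'-\varepsilon}{1+\varepsilon}$, but working through the algebra, the value of $t$ for which $\*c_{\varepsilon'}(\*x)+\frac{\*c_\varepsilon(\*x)-\*c_{\varepsilon'}(\*x)}t=\*x$ is actually $t=\frac{\varepsilon'-\varepsilon}{\varepsilon'(1+\varepsilon)}$, a factor $\varepsilon'$ larger in the denominator. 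The corrected bound $\pi\le\frac{\varepsilon'-\varepsilon}{\varepsilon'(1+\varepsilon)}$ tends to $1$ as $\varepsilon\to0^+$, so the subsequent logarithm term $\nu\log\frac1{1-\pi}$ is unbounded, consistent with the counterexample. So the honest conclusion is that both your proposed route and the paper's would need an additional restriction (e.g.\ $\varepsilon$ bounded away from $0$, with a Lipschitz constant scaling like $\nu/\underline\varepsilon$, which is in fact how the lemma gets used downstream via $\nu\sqrt2/\eta$ with $\varepsilon\ge\underline\varepsilon=1/m$) rather than the uniform $\nu\sqrt2$ claimed.
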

\begin{proof}
	Consider any $\varepsilon,\varepsilon'\in[0,1]$ s.t. $\varepsilon'-\varepsilon\in(0,\frac12]$
	Note that for $t=\frac{\varepsilon'-\varepsilon}{1+\varepsilon}$ we have
	\begin{equation}
		\*c_{\varepsilon'}(\*x)+\frac{\*c_{\varepsilon'}(\*x)-\*c_\varepsilon(\*x)}t
		=\*x_1+\frac{\*x-\*x_1}{1+\varepsilon'}+\frac{\*x_1+\frac{\*x-\*x_1}{1+\varepsilon}-\*x_1-\frac{\*x-\*x_1}{1+\varepsilon'}}t
		=\*x\in\K
	\end{equation}
	so $\pi_{\*c_{\varepsilon'}(\*x)}(\*c_\varepsilon(\*x))
	\le\frac{\varepsilon'-\varepsilon}{1+\varepsilon}
	\le\varepsilon'-\varepsilon$.
	Therefore by \citet[Prop.~2.3.2]{nesterov1994ipm} we have
	\begin{equation}
		\psi(\*c_\varepsilon(\*x))-\psi(\*c_{\varepsilon'}(\*x))
		\le\nu\log\left(\frac1{1-\pi_{\*c_{\varepsilon'}(\*x)}(\*c_\varepsilon(\*x))}\right)
		\le\nu\log\left(\frac1{1+\varepsilon-\varepsilon'}\right)
		\le\nu(\varepsilon'-\varepsilon)\sqrt 2
	\end{equation}
	where for the last inequality we used $-\log(1-x)\le x\sqrt 2$ for $x\in[0,\frac12]$.
	The case of $\varepsilon'-\varepsilon\in(0,1]$ follows by considering $\varepsilon''=\frac{\varepsilon'+\varepsilon}2$ and applying the above twice.
\end{proof}

\newpage
\begin{Thm}\label{thm:blo}
	In Algorithm~\ref{alg:combined}, let $\OMD_{\eta,\varepsilon}$ be online mirror descent over loss estimators specified in \citet{abernethy2008competing} with a $\nu$-self-concordant barrier regularizer $\psi:\K^\circ\mapsto\R$ that satisfies $\nu\ge1$ and $\|\nabla^2\psi(\*x_1)\|_2=S_1\ge1$.
	Let $\Theta_k$ be a subset of $[\frac1m,1]$ and
	\begin{equation}
		U_t(\*x,\eta,\varepsilon)
		=\frac{\Breg(\*c_\varepsilon(\*{\hat x})||\*x)}\eta+32\eta d^2+\varepsilon m
	\end{equation}
	Note that $U_t^{(\rho)}(\*x,\eta,\varepsilon)=U_t(\*x,\eta,\varepsilon)+\frac{9\nu^\frac32\rho^2Km\sqrt{S_1}}\eta$.
	Then there exists settings of $\underline\eta,\overline\eta,\alpha,\lambda$ s.t. for all $\varepsilon,\rho\in(0,1)$ we have expected task averaged regret at most
	\begin{align}
	\begin{split}
		\E&\min_{\varepsilon\in[\frac1m,1],\eta>0}\frac{512\nu^2K^2S_1m^2(1+\log T)}\eta+\left(\frac{\hat V_\varepsilon^2}\eta+32\eta d^2m+\varepsilon m+\frac{\nu\sqrt 2/\eta+m}k\right)T\\
		&+3\nu^\frac34m\min\left\{\frac{3\rho^2\nu^\frac34 K\sqrt{S_1}}\eta,4d\rho\sqrt{2K\sqrt{S_1}}\right\}T\\
		&+\frac{7dm}\rho\sqrt{2K\sqrt{\nu^3S_1}}\left(7\sqrt{T\log k}+\frac{1+\log(T+1)}\rho\right)
	\end{split}
	\end{align}
\end{Thm}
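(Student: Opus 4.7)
The plan is to specialize Theorem~\ref{thm:structural} to the BLO setting and combine it with the single-task regret bound from Lemma~\ref{lem:concordant}. Matching the template in~\eqref{eq:rub}, I read off $g(\varepsilon)=32d^2$, $f(\varepsilon)=\varepsilon$, $\Theta=[1/m,1]$ so $R_\Theta\le1$, and a common regularizer $\psi_\varepsilon\equiv\psi$ for all $\varepsilon$, with $\*x_1=\argmin_{\*x\in\K}\psi(\*x)$ satisfying the structural theorem's precondition. Algorithm~\ref{alg:combined}'s third hyperparameter thus tunes only the Minkowski offset $\varepsilon$, and all of its dependence in $U_t$ enters through $\*c_\varepsilon$ and the penalty $\varepsilon m$.

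Two self-concordance facts are the main technical ingredients. First is the bound $D_\varepsilon^2=\max_{\*x,\*y\in\K_\varepsilon}\Breg(\*x||\*y)\le 9\nu^{3/2}K\sqrt{S_1}/\varepsilon$, i.e.\ equation~\eqref{eq:Deps}. I would prove this by writing $\Breg$ as $\psi(\*x)-\psi(\*y)-\langle\nabla\psi(\*y),\*x-\*y\rangle$ and bounding each term using two standard inequalities from \citet{nesterov1994ipm}: the logarithmic self-concordance bound $\psi(\*x)-\psi(\*y)\le\nu\log(1/(1-\pi_{\*y}(\*x)))$ combined with $1-\pi_{\*y}(\*x)=\Omega(\varepsilon)$ for $\*x,\*y\in\K_\varepsilon$, and the dual-norm bound $|\langle\nabla\psi(\*y),\*x-\*y\rangle|\le\sqrt\nu\,\|\*x-\*y\|_{\*y}$, where the local norm is controlled through $\|\*x-\*y\|_2\le 2K$ and the Hessian bound below. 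Second is the spectral-norm bound $S=\max_{\*x\in\K_{1/m}}\|\nabla^2\psi(\*x)\|_2\le 64\nu^2m^2S_1$ on the most-shrunken set, which I would derive from a Dikin-ellipsoid estimate with $\varepsilon=1/m$; this is the input Lemma~\ref{lem:ftl} converts into the leading $8SK^2(1+\log T)/\eta=512\nu^2K^2S_1m^2(1+\log T)/\eta$ term in the statement.

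The remaining pieces substitute cleanly into Theorem~\ref{thm:structural}. Using $\varepsilon\ge 1/m$ gives $D\le 3\sqrt{\nu^{3/2}K\sqrt{S_1}m}$ and $M=D\sqrt{g(\varepsilon)m}\le 12dm\sqrt{2\nu^{3/2}K\sqrt{S_1}}$, so the structural term $\min\{\rho^2D^2/\eta,\rho M\}T$ specializes to the second displayed line $3\nu^{3/4}m\min\{3\rho^2\nu^{3/4}K\sqrt{S_1}/\eta,\,4d\rho\sqrt{2K\sqrt{S_1}}\}T$, while $(4M/\rho+Fm)\sqrt{T\log k}+M(1+\log(T+1))/(2\rho^2)$ specializes to the third line up to numerical rounding. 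For $L_\eta$, Lemma~\ref{lem:concordant-lipschitz} yields $\nu\sqrt2$-Lipschitzness of $\psi\circ\*c_\varepsilon$ in $\varepsilon$, which via Claim~\ref{clm:bregman} propagates to a $2\nu\sqrt2$-Lipschitz bound on $\hat V_\varepsilon^2$ and hence $L_\eta\le 2\nu\sqrt2/\eta+m$ on $\hat V_\varepsilon^2/\eta+32\eta d^2m+\varepsilon m$; with $R_\Theta\le 1/2$ this gives exactly the $(\nu\sqrt2/\eta+m)/k$ discretization term inside the minimum. Prepending Lemma~\ref{lem:concordant} to convert the realized bandit regret to an expected average of $U_t(\*x_{t,1},\eta_t,\varepsilon_t)$ and dividing by $T$ then yields the stated bound.

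The main obstacle is the sharp $D_\varepsilon^2$ bound in step one: a naive integration of a Hessian-norm estimate along $[\*y,\*x]$ yields the much weaker $\nu^2S_1/\varepsilon^2$ scaling, which would propagate an extra $\sqrt\nu\sqrt{S_1}/\varepsilon$ factor throughout the final statement. Recovering the $\nu^{3/2}\sqrt{S_1}/\varepsilon$ rate requires combining the logarithmic self-concordance inequality with the dual-norm bound on $\nabla\psi(\*y)$, exploiting the first-order cancellation already built into the Bregman divergence near the boundary. Every other step is either a direct invocation of a result from the excerpt or numerical rounding of constants to match those in the theorem statement.
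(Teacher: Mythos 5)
Your proposal takes essentially the same route as the paper's proof: identify $g(\varepsilon)=32d^2$, $f(\varepsilon)=\varepsilon$, bound $D_\varepsilon^2\le 9\nu^{3/2}K\sqrt{S_1}/\varepsilon$ via the logarithmic self-concordance inequality plus the dual-norm bound on $\nabla\psi$, bound $S\le 64\nu^2m^2S_1$ via the Dikin-ellipsoid/Minkowski estimate at $\varepsilon=1/m$, get $L_\eta$ from Lemma~\ref{lem:concordant-lipschitz} and Claim~\ref{clm:bregman}, chain Abernethy et al.'s per-task bound (Lemma~\ref{lem:concordant}) into an average of functions of the form $U_t$, and substitute all this into Theorem~\ref{thm:structural}. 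The resulting constants match.

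The one place your derivation diverges from the paper's is in step one of the $D_\varepsilon^2$ bound: you apply the logarithmic self-concordance inequality with pole $\*y$ and then invoke ``$1-\pi_{\*y}(\*x)=\Omega(\varepsilon)$ for $\*x,\*y\in\K_\varepsilon$.'' This is true but not immediate --- for a generic convex body and $\*y,\*x$ on opposite sides of $\K_\varepsilon$ you get roughly $1-\pi_{\*y}(\*x)\approx\varepsilon/2$, and proving the general bound requires a small argument. The paper avoids this entirely by using the fixed pole $\*x_1$: since $\*x_1$ minimizes $\psi$ one has $\psi(\*c_\varepsilon(\*x))-\psi(\*c_\varepsilon(\*y))\le\psi(\*c_\varepsilon(\*x))-\psi(\*x_1)\le\nu\log\bigl(1/(1-\pi_{\*x_1}(\*c_\varepsilon(\*x)))\bigr)$, and $\pi_{\*x_1}(\*c_\varepsilon(\*x))\le 1/(1+\varepsilon)$ follows from the definition of $\*c_\varepsilon$ with no further work. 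Modulo that cosmetic repair your argument is correct and complete.
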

\begin{proof}
	Let $\underline\varepsilon=\frac1m$.
	For any $\varepsilon\in[\underline\varepsilon,1]$ and $\*x\in\K$ we have $\pi_{\*x_1}(\*c_\varepsilon(\*x))\le\frac1{1+\varepsilon}$, so by \citet[Prop.~2.3.2]{nesterov1994ipm} we have
	\begin{equation}
		\|\nabla^2\psi(\*c_\varepsilon(\*x))\|_2
		\le\left(\frac{1+3\nu}{1-\pi_{\*x_1}(\*c_\varepsilon(\*x))}\right)^2\|\nabla^2\psi(\*x_1)\|_2
		\le\frac{64\nu^2 S_1}{\varepsilon^2}
	\end{equation}
	Thus $S=\max_{\*x,\*y\in\K,\varepsilon\in[\underline\varepsilon,1]}\|\nabla^2\psi(\*c_\varepsilon(\*x))\|_2=\frac{64\nu^2S_1}{\underline\varepsilon^2}$ and also
	\begin{align}\label{eq:Deps}
	\begin{split}
		D_\varepsilon^2
		&=\max_{\*x,\*y\in\K}\Breg(\*c_\varepsilon(\*x)||\*c_\varepsilon(\*y))\\
		&=\max_{\*x,\*y\in\K}\psi(\*c_\varepsilon(\*x))-\psi(\*c_\varepsilon(\*y))-\langle\nabla\psi(\*c_\varepsilon(\*y)),\*x-\*y\rangle\\
		&\le\max_{\*x,\*y\in\K}\nu\log\left(\frac1{1-\pi_{\*x_1}(\*c_\varepsilon(\*x))}\right)+\sqrt{\nu\|\nabla^2\psi(\*c_\varepsilon(\*y))\|_2}\|\*x-\*y\|_2\\
		&\le\nu\log\frac2\varepsilon+\frac{8\nu^\frac32K\sqrt{S_1}}\varepsilon\\
		&\le\frac{9\nu^\frac32K\sqrt{S_1}}\varepsilon
	\end{split}
	\end{align}
	where the first inequality follows by \citet[Prop.~2.3.2]{nesterov1994ipm} and the definition of a self-concordant barrier~\citep[Def.~5]{abernethy2008competing}.
	In addition, we have $g(\varepsilon)=32d^2$, $f(\varepsilon)=\varepsilon$, $M=12d\sqrt{2Km/\underline\varepsilon}\sqrt[4]{\nu^3S_1}$, and $F=1$.
	We have
	\begin{align}
	\begin{split}
		\E\sum_{t=1}^T\sum_{i=1}^m\langle\ell_{t,i},\*x_{t,i}-\*{\mathring x}_t\rangle
		&\le\E\sum_{t=1}^T\varepsilon_tm+\sum_{i=1}^m\langle\ell_{t,i},\*x_{t,i}-\*c_{\varepsilon_t}(\*{\mathring x}_t)\rangle\\
		&\le\E\sum_{t=1}^T\varepsilon_tm+\sum_{i=1}^m\langle\hat\ell_{t,i},\*x_{t,i}-\*c_{\varepsilon_t}(\*{\mathring x}_t)\rangle\\
		&\le\E\sum_{t=1}^T\varepsilon_tm+\sum_{i=1}^m\langle\hat\ell_{t,i},\*x_{t,i}-\*c_{\varepsilon_t}(\*{\hat x}_t)\rangle\\
		&\le\sum_{t=1}^T\frac{\E\Breg(\*c_{\varepsilon_t}(\*{\hat x}_t||\*x_{t,1})}{\eta_t}+(32\eta_td^2+\varepsilon_t)m
	\end{split}
	\end{align}
	where the first inequality follows by \citet[Lem.~8]{abernethy2008competing}, the second by \citet[Lem.~3]{abernethy2008competing}, the third by optimality of $\*{\hat x}_t$,
	and the fourth by Lemma~\ref{lem:concordant}.
	Substituting into Theorem~\ref{thm:structural} and simplifying yields the result.
\end{proof}

\subsection{Specialization to the unit sphere}

\begin{Cor}\label{cor:sphere}
	Let $\K$ be the unit sphere with the self-concordant barrier $\psi(\*x)=-\log(1-\|\*x\|_2^2)$.
	Then Algorithm~\ref{alg:combined} attains expected task-averaged regret bounded by
	\begin{equation}
		\tilde\BigO\left(\frac{dm^\frac32}{T^\frac34}+\frac{dm}{\sqrt[4]T}\right)+
		\min_{\varepsilon\in[\frac1m, 1]}4d\sqrt{2m\log\left(1+\frac{1-\E\|\*{\hat{\bar x}}\|_2^2}{2\varepsilon+\varepsilon^2}\right)}+\varepsilon m
	\end{equation}
	using $k=\left\lceil\sqrt T\right\rceil$.
\end{Cor}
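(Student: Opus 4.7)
The plan is to specialize Theorem~\ref{thm:blo} to the unit ball with the log-barrier $\psi(\*x) = -\log(1 - \|\*x\|_2^2)$ by (i) reading off the barrier constants, (ii) obtaining a closed form for $\hat V_\varepsilon^2$, (iii) passing to expectation via Jensen's inequality, and (iv) choosing $\rho$ and $k$ to collapse the residual terms. First, I would verify the required constants: $\psi$ is a 1-self-concordant barrier for the unit ball, so $\nu = 1$; its unique minimizer on $\K$ is $\*x_1 = \*0$; a direct computation of $\nabla^2\psi(\*0) = 2I_d$ gives $S_1 = 2$; and $K = \max_{\*x \in \K}\|\*x\|_2 = 1$. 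With these plugged into the error bound of Theorem~\ref{thm:blo}, every $\nu$, $K$, and $S_1$ factor becomes an absolute constant.

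Next I would compute $\hat V_\varepsilon^2$ in closed form. Since $\*x_1 = \*0$, $\*c_\varepsilon(\*x) = \*x/(1+\varepsilon)$, and because $\psi$ depends only on $\|\cdot\|_2$, we have $\psi(\*c_\varepsilon(\*x)) = -\log\!\left(1 - \|\*x\|_2^2/(1+\varepsilon)^2\right)$. Each $\*{\hat x}_t$ is the minimizer of a linear function over the closed unit ball and thus lies on the boundary, so $\|\*{\hat x}_t\|_2 = 1$ (the measure-zero event of a zero estimator can be broken arbitrarily). Telescoping
\begin{equation*}
\hat V_\varepsilon^2 = \frac{1}{T}\sum_{t=1}^T \psi(\*c_\varepsilon(\*{\hat x}_t)) - \psi(\*c_\varepsilon(\*{\hat{\bar x}}))
\end{equation*}
then reduces by direct algebra to $\hat V_\varepsilon^2 = \log\!\left(1 + (1 - \|\*{\hat{\bar x}}\|_2^2)/(2\varepsilon + \varepsilon^2)\right)$. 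The map $y \mapsto \log(1 + (1-y)/(2\varepsilon+\varepsilon^2))$ is concave in $y \in [0,1]$ (affine-decreasing argument composed with $\log$), so Jensen's inequality gives $\E\hat V_\varepsilon^2 \le \log\!\left(1 + (1 - \E\|\*{\hat{\bar x}}\|_2^2)/(2\varepsilon + \varepsilon^2)\right)$, which is precisely the $\hat V^2$ appearing inside the square root of the claim.

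Finally I would substitute this bound into Theorem~\ref{thm:blo} and, for each fixed $\varepsilon$, minimize $\hat V_\varepsilon^2/\eta + 32\eta d^2 m$ over $\eta > 0$, yielding the leading $d\sqrt{m\,\hat V_\varepsilon^2}$-type term plus $\varepsilon m$. The main obstacle is the bookkeeping needed to make every residual in Theorem~\ref{thm:blo}---namely $(\nu\sqrt 2/\eta + m)/k$, $\min\{\rho^2\nu^{3/4}K\sqrt{S_1}/\eta,\ d\rho\sqrt{K\sqrt{S_1}}\}$, and $(dm/\rho)\sqrt{\log k/T}$---collapse uniformly over $\varepsilon \in [1/m, 1]$ to the stated $\tilde\BigO(dm^{3/2}/T^{3/4} + dm/\sqrt[4]{T})$ without inflating the leading term. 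Choosing $k = \lceil\sqrt T\rceil$ handles $1/k$ and the $\sqrt{\log k}$ factor, and $\rho \asymp T^{-1/4}$ balances the $d\rho\sqrt m$ branch of the $\min$ against $dm/(\rho\sqrt T)$; the $\rho^2/\eta$ branch, once the optimal $\eta \asymp \sqrt{\hat V_\varepsilon^2/(d^2 m)}$ is used, contributes the $dm^{3/2}/T^{3/4}$ rate. A short case split confirms that the worst $\varepsilon \in [1/m, 1]$ does not destroy these bounds, and minimizing the residuals of the leading term over $\varepsilon$ gives the claimed $\min_\varepsilon 4d\sqrt{2m\log(\cdots)} + \varepsilon m$ (with the numerical constant inherited from the $32 d^2$ of Lemma~\ref{lem:concordant}).
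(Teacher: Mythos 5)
Your proposal is correct and follows essentially the paper's route: specialize the barrier constants ($\nu=1$, $S_1 = \|\nabla^2\psi(\*0)\|_2 = 2$), apply Theorem~\ref{thm:blo}, set $\rho \asymp T^{-1/4}$ and $k=\lceil\sqrt T\rceil$, and use Jensen's inequality on the concave dependence of the divergence term on $\|\*{\hat{\bar x}}\|_2^2$. The only cosmetic difference is that you apply Jensen to $\log$ to bound $\E\hat V_\varepsilon^2$ and would then invoke $\E\hat V_\varepsilon\le\sqrt{\E\hat V_\varepsilon^2}$, whereas the paper applies Jensen directly to the concave map $y\mapsto\sqrt{\log(1+(1-y)/(2\varepsilon+\varepsilon^2))}$ to bound $\E\hat V_\varepsilon$ in one step — the two are equivalent in effect (and your $K=\max_{\*x\in\K}\|\*x\|_2=1$ is the value consistent with the paper's definition).
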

\begin{proof}
	Using the fact the $\nu=1$ and $K=S_1=2$, we apply Theorem~\ref{thm:blo} and simplify to obtain
	\begin{equation}
	\E\min_{\varepsilon\in[\frac1m,1],\eta>0}\frac{\hat V_\varepsilon^2}\eta+32\eta d^2m+\varepsilon m+\tilde\BigO\left(\frac{m^2}{\eta T}+\frac1{\eta k}+\frac mk+m\min\left\{\frac{\rho^2}\eta,d\rho\right\}+\frac{dm}{\rho\sqrt T}+\frac{dm}{\rho^2T}\right)
	\end{equation}
	Then substitute $\eta=\frac{\hat V_\varepsilon}{4\sqrt{2dm}}+\frac{\sqrt m}{d\sqrt[4]T}$, set $\rho=\frac1{\sqrt[4]T}$, and note that
	\begin{align}
	\begin{split}
		\E\hat V_\varepsilon
		=\E\sqrt{\log\left(\frac{1-\|\*c_\varepsilon(\*{\hat{\bar x}})\|_2^2}{\sqrt[T]{\prod_{t=1}^T1-\|\*c_\varepsilon(\*{\hat x}_t)\|_2^2}}\right)}
		&=\E\sqrt{\log\left(\frac{1-(1+\varepsilon)^{-2}\|\*{\hat{\bar x}}\|_2^2}{1-(1+\varepsilon)^{-2}}\right)}\\
		&\le\sqrt{\log\left(1+\frac{1-\E\|\*{\hat{\bar x}}\|_2^2}{2\varepsilon+\varepsilon^2}\right)}
	\end{split}
	\end{align}
	where we use the fact that $\|\*{\hat x}_t\|_2=1$ and the inequality is Jensen's.
\end{proof}

\subsection{Specialization to polytopes, specifically the bandit online shortest-path problem}\label{sec:shortest-path}

\begin{Cor}\label{cor:path}
	Let $\K=\{\*x\in[0,1]^{|E|}:\langle\*a,\*x\rangle\le b~\forall~(\*a,b)\in\C\}$ be the set of flows from $u$ to $v$ on a graph $G(V,E)$, where $\C\subset\R^{|E|}\times\R$ is a set of $\BigO(|E|)$ linear constraints.
	Suppose we see $T$ instances of the bandit online shortest path problem with $m$ timesteps each.
	Then sampling from probability distributions over paths from $u$ to $v$ returned by running Algorithm~\ref{alg:combined} with regularizer $\psi(\*x)=-\sum_{\*a,b\in\C}\log(b-\langle\*a,\*x\rangle)$ attains the following expected average regret across instances
	\begin{equation}
		\tilde\BigO\left(\frac{|E|^4m^\frac32}{T^\frac34}+\frac{|E|^\frac52m^\frac56}{\sqrt[4]T}\right)+\min_{\varepsilon\in[\frac1m,1]}4|E|\E\sqrt{2m\sum_{\*a,b\in\C}\log\left(\frac{\frac1T\sum_{t=1}^Tb-\langle\*a,\*c_\varepsilon(\*{\hat x}_t)\rangle}{\sqrt[T]{\prod_{t=1}^Tb-\langle\*a,\*c_\varepsilon(\*{\hat x}_t)\rangle}}\right)}+\varepsilon m
	\end{equation}
	using $k=\left\lceil\sqrt T\right\rceil$.
\end{Cor}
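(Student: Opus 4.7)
The plan is to instantiate Theorem~\ref{thm:blo} with the log-barrier of the flow polytope, determine the three abstract constants $\nu,K,S_1$ it requires, and then unwrap $\hat V_\varepsilon^2$ into the explicit arithmetic-to-geometric-mean ratio that appears in the statement.

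First I would verify the regularity ingredients. The polytope $\K$ is defined by $|\C|=\BigO(|E|)$ linear inequalities, so by additivity of self-concordance $\psi$ is $\nu$-self-concordant with $\nu=\BigO(|E|)$. Since $\K\subset[0,1]^{|E|}$ the radius is $K=\max_{\*x\in\K}\|\*x\|_2\le\sqrt{|E|}$. Finally, $\nabla^2\psi(\*x)=\sum_{(\*a,b)\in\C}(b-\langle\*a,\*x\rangle)^{-2}\*a\*a^\top$, so $S_1=\|\nabla^2\psi(\*x_1)\|_2\le\sum_{(\*a,b)\in\C}\|\*a\|_2^2/(b-\langle\*a,\*x_1\rangle)^2$; for the flow polytope the constraint vectors satisfy $\|\*a\|_2^2=\BigO(|E|)$ and the slacks at the analytic center $\*x_1$ are $\Omega(1)$, giving $S_1=\BigO(|E|^2)$. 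Plugging these $\poly(|E|)$ constants into Theorem~\ref{thm:blo} with $d=|E|$ yields an expected task-averaged bound of the form $\E\min_{\varepsilon\in[1/m,1],\eta>0}\hat V_\varepsilon^2/\eta+32\eta|E|^2 m+\varepsilon m$ plus lower-order residuals.

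Next I would turn $\hat V_\varepsilon^2$ into the claimed form. Because $\*c_\varepsilon$ is affine and $\langle\*a,\cdot\rangle$ is linear, $b-\langle\*a,\*c_\varepsilon(\*{\hat{\bar x}})\rangle=\tfrac1T\sum_{t=1}^T(b-\langle\*a,\*c_\varepsilon(\*{\hat x}_t)\rangle)$, and expanding the log-barrier termwise with $\tfrac1T\sum_t\log x_t=\log\sqrt[T]{\prod_t x_t}$ gives
\[
\hat V_\varepsilon^2=\tfrac1T\sum_{t=1}^T\psi(\*c_\varepsilon(\*{\hat x}_t))-\psi(\*c_\varepsilon(\*{\hat{\bar x}}))=\sum_{(\*a,b)\in\C}\log\!\left(\frac{\tfrac1T\sum_{t=1}^T b-\langle\*a,\*c_\varepsilon(\*{\hat x}_t)\rangle}{\sqrt[T]{\prod_{t=1}^T b-\langle\*a,\*c_\varepsilon(\*{\hat x}_t)\rangle}}\right),
\]
which is the AM/GM ratio in the statement and is nonnegative by AM--GM. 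Optimizing over $\eta$ at $\eta\propto\hat V_\varepsilon/(|E|\sqrt m)$ converts the asymptotic-in-$T$ part into $4|E|\sqrt{2m\,\hat V_\varepsilon^2}+\varepsilon m$, and Jensen's inequality (pulling the expectation inside $\sqrt{\cdot}$) yields the factor $\E\sqrt{\cdots}$ in the stated bound. Setting $\rho=T^{-1/4}$ and $k=\lceil\sqrt T\rceil$ as in Corollary~\ref{cor:sphere} and propagating the $\poly(|E|)$ constants through every occurrence of $\nu$, $K$, $S_1$, $M$, $D$ in Theorem~\ref{thm:blo}'s residuals produces the finite-$T$ rate $\tilde\BigO(|E|^4 m^{3/2}/T^{3/4}+|E|^{5/2}m^{5/6}/T^{1/4})$.

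The main obstacle is the clean bookkeeping of the polytope-specific constants: in particular, ensuring that $\*x_1=\argmin_{\*x\in\K}\psi(\*x)$ (the analytic center) has all slacks $b-\langle\*a,\*x_1\rangle=\Omega(1)$ so that $S_1$ stays $\poly(|E|)$, which is where all the $|E|$ powers in the finite-$T$ residuals come from. One can either argue this directly from the symmetry of the flow polytope at a uniform convex combination of $u$-$v$ paths, or mildly rescale $\K$ to guarantee a nontrivial interior; either way the rest of the proof is a mechanical substitution into Theorem~\ref{thm:blo} plus the affine expansion of the log-barrier above.
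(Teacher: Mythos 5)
Your overall plan matches the paper's: compute $\nu$, $K$, $S_1$ for the log-barrier of the flow polytope, feed them into Theorem~\ref{thm:blo} with $d=|E|$, and rewrite $\hat V_\varepsilon^2$ termwise as a sum of logs of arithmetic-to-geometric-mean ratios. The $\hat V_\varepsilon^2$ unwrapping is correct: because $\*c_\varepsilon$ is affine and each $\langle\*a,\cdot\rangle$ is linear, the Claim~\ref{clm:bregman} identity collapses to exactly the stated AM/GM form, and the identifications $\nu=\BigO(|E|)$ and $K\le\sqrt{|E|}$ are right.

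The parameter bookkeeping, however, does not close. You estimate $S_1=\BigO(|E|^2)$ by asserting that all slacks at the center are $\Omega(1)$; this is not argued and is not what the paper uses---with $\BigO(|E|)$ box constraints and the paper's representative center $\*1_{|E|}/|E|$, slacks of order $1/|E|$ arise, and the paper works with $S_1=\BigO(|E|^3)$, which is exactly where the $|E|^4$ in the first residual of the statement comes from. More importantly, you set $\rho=T^{-1/4}$, copied from the sphere case; the paper instead uses $\rho=\sqrt[4]{|E|/T}\,\sqrt[6]{m}$ here, and that extra $m^{1/6}$ factor is precisely what turns the last residual of Theorem~\ref{thm:blo} into $|E|^{5/2}m^{5/6}/T^{1/4}$. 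With $\rho=T^{-1/4}$ (and either your $S_1$ or the paper's) that term scales like $|E|^{5/2}m/T^{1/4}$, which is not the stated rate. Finally, $\eta\propto\hat V_\varepsilon/(|E|\sqrt m)$ alone leaves the term $\nu^2K^2S_1m^2/(\eta T)$ unbounded as $\hat V_\varepsilon\to0$; the paper adds the regularizing piece $|E|^2\sqrt m/\sqrt[4]T$ to $\eta$ precisely to force that residual to $|E|^4m^{3/2}/T^{3/4}$. So your closing sentence---that propagating the constants ``produces'' the stated finite-$T$ rate---is an assertion rather than a derivation, and with the parameters you chose the arithmetic does not reproduce it.
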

\begin{proof}
	Using the fact that $d=|E|$, $\nu=\BigO(|E|)$, $K=\sqrt{|E|}$, and $S_1\le\sum_{\*a,b\in\C}\frac{\|\*a\*a^T\|_2}{(\langle\*a,\*1_{|E|}/|E|\rangle-\*b)^2}=\BigO(|E|^3)$, we apply Theorem~\ref{thm:blo} and simplify to obtain
	\begin{align}
	\begin{split}
	\E&\min_{\varepsilon\in[\frac1m,1],\eta>0}\frac{\hat V_\varepsilon^2}\eta+32\eta|E|^2m+\varepsilon m\\
	&+\tilde\BigO\left(\frac{|E|^6m^2}{\eta T}+\frac{|E|}{\eta k}+\frac mk+m\min\left\{\frac{\rho^2|E|^\frac72}\eta,\rho|E|^\frac{11}4\right\}+\frac{|E|^\frac{11}4m}\rho\left(\frac1{\sqrt T}+\frac1{\rho T}\right)\right)
	\end{split}
	\end{align}
	Then substitute $\eta=\frac{\hat V_\varepsilon}{4\sqrt{2dm}}+\frac{|E|^2\sqrt m}{\sqrt[4]T}$, set  $\rho=\sqrt[4]{\frac{|E|}{T}}\sqrt[6]m$, and note that
	\begin{equation}
	\hat V_\varepsilon^2
	=\sum_{\*a,b\in\C}\log\left(\frac{b-\langle\*a,\*c_\varepsilon(\*{\hat {\bar x}})\rangle}{\sqrt[T]{\prod_{t=1}^Tb-\langle\*a,\*c_\varepsilon(\*{\hat x}_t)\rangle}}\right)
	=\sum_{\*a,b\in\C}\log\left(\frac{\frac1T\sum_{t=1}^Tb-\langle\*a,\*c_\varepsilon(\*{\hat x}_t)\rangle}{\sqrt[T]{\prod_{t=1}^Tb-\langle\*a,\*c_\varepsilon(\*{\hat x}_t)\rangle}}\right)
	\end{equation}
\end{proof}

\end{document}